\documentclass[a4paper]{article} 
\usepackage[margin=1in]{geometry}

\usepackage{microtype}
\usepackage{graphicx}
\usepackage{subcaption}
\usepackage{booktabs} 
\usepackage{here}
\usepackage{authblk}

\usepackage{amsmath,amsthm,amsfonts,amssymb}
\usepackage{mathtools}
\usepackage{xcolor}
\usepackage{natbib}
\usepackage{algorithm} 
\usepackage{algpseudocode} 
\usepackage{url}
\usepackage{multirow}
\usepackage{hyperref} 

\usepackage[T1]{fontenc}
\usepackage[utf8]{inputenc}
\usepackage{bbm}

\newcommand{\indep}{\mathop{\perp\!\!\!\!\perp}}

\def\*#1{\boldsymbol{#1}}

\theoremstyle{plain}

\newtheorem*{rep@theorem}{\rep@title}
\newcommand{\newreptheorem}[2]{%
\newenvironment{rep#1}[1]{%
 \def\rep@title{#2 \ref{##1}}%
 \begin{rep@theorem}}%
 {\end{rep@theorem}}}

\newtheorem*{rep@lemma}{\rep@title}
\newcommand{\newreplemma}[2]{%
\newenvironment{rep#1}[1]{%
 \def\rep@title{#2 \ref{##1}}%
 \begin{rep@lemma}}%
 {\end{rep@lemma}}}

\newtheorem{theorem}{Theorem}[section]

\newtheorem{lemma}[theorem]{Lemma}

\theoremstyle{definition}
\newtheorem{definition}[theorem]{Definition}
\newtheorem{assumption}[theorem]{Assumption}
\theoremstyle{remark}

\newreptheorem{theorem}{Theorem}
\newreptheorem{lemma}{Lemma}
\newreptheorem{corollary}{Corollary}
\newreptheorem{assumption}{Assumption}
\newreptheorem{definition}{Definition}
\newreptheorem{proposition}{Proposition}

\DeclareMathOperator*{\argmin}{arg\,min}
\DeclareMathOperator*{\argmax}{arg\,max}

\newcommand{\lw}[1]{\smash{\lower2.ex\hbox{#1}}}

\newcommand{\sbr}[1]{\left[#1\right]}

\newcommand{\RR}{\mathbb{R}}
\newcommand{\NN}{\mathbb{N}}

\newcommand{\EE}{\mathbb{E}}

\newcommand{\cA}{{\cal A}}
\newcommand{\cB}{{\cal B}}

\newcommand{\cD}{{\cal D}}

\newcommand{\cG}{{\cal G}}

\newcommand{\cI}{{\cal I}}

\newcommand{\cN}{{\cal N}}

\newcommand{\cP}{{\cal P}}

\newcommand{\cS}{{\cal S}}
\newcommand{\cT}{{\cal T}}

\newcommand{\cX}{{\cal X}}

\title{On Regret Bounds of Thompson Sampling\\ for Bayesian Optimization}

\date{}

\author[1]{Shion Takeno}
\author[2]{Shogo Iwazaki}

\affil[1]{Nagoya University}
\affil[2]{MI-6 Ltd.}

\affil[ ]{\texttt{{takeno.s.mllab.nit@gmail.com}}}
\affil[ ]{\texttt{{shogo.iwazaki@gmail.com}}}

\begin{document}
\maketitle

\begin{abstract}
     We study a widely used Bayesian optimization method, Gaussian process Thompson sampling (GP-TS), under the assumption that the objective function is a sample path from a GP.
     Compared with the GP upper confidence bound (GP-UCB) with established high-probability and expected regret bounds, most analyses of GP-TS have been limited to expected regret.
     Moreover, whether the recent analyses of GP-UCB for the lenient regret and the improved cumulative regret upper bound can be applied to GP-TS remains unclear.
     To fill these gaps, this paper shows several regret bounds: 
     (i) a regret lower bound for GP-TS, which implies that GP-TS suffers from a polynomial dependence on $1/\delta$ with probability $\delta$, 
     (ii) an upper bound of the second moment of cumulative regret, which directly suggests an improved regret upper bound on $\delta$, 
     (iii) expected lenient regret upper bounds, and 
     (iv) an improved cumulative regret upper bound on the time horizon $T$.
     Along the way, we provide several useful lemmas, including a relaxation of the necessary condition from recent analysis to obtain improved regret upper bounds on $T$.
\end{abstract}


\section{Introduction}
\label{sec:intro}

Bayesian optimization (BO) \citep{Kushner1964-new,Mockus1978-Application} is a powerful framework for black-box optimization problems.
BO aims to optimize an expensive-to-evaluate black-box function using a small number of input-output pairs by adaptively querying input points based on a Bayesian model, typically a Gaussian process (GP) model.
BO has been applied to a wide range of applications, such as materials informatics~\citep{ueno2016combo}, AutoML~\citep{Snoek2012-Practical}, and drug discovery~\citep{korovina2020-chemBO}.
Alongside these applications and algorithmic developments, theoretical properties of BO have also been studied.
This paper focuses on regret analysis under the assumption that the objective function is a sample path from a GP.

Gaussian process upper confidence bound (GP-UCB) \citep{Kushner1964-new,Srinivas2010-Gaussian} is a BO algorithm with well-established theoretical guarantees.
The theoretical performance of BO methods is often measured by cumulative regret \citep{Srinivas2010-Gaussian}, for which both high-probability and expected upper bounds of GP-UCB have been derived \citep{Srinivas2010-Gaussian,Takeno2023-randomized}.
High-probability upper bound of an alternative criterion called {\it lenient regret} \citep{merlis2021-lenient}, which counts a regret exceeding a given tolerance, has also been obtained by \citet{cai2021-lenient,iwazaki2025improved}.
Furthermore, \citet{iwazaki2025improved} has shown a tighter high-probability upper bound for the cumulative regret.
These results position GP-UCB as the most extensively analyzed algorithm in the BO literature.

GP Thompson sampling (GP-TS) \citep{Russo2014-learning} is another well-known BO algorithm with regret guarantees.
GP-TS achieves a similar expected cumulative regret upper bound as GP-UCB \citep{Kandasamy2018-Parallelised,Russo2014-learning,takeno2024-posterior}.
However, its high-probability regret upper bound is limited to the direct consequence of the expected regret bound, leading to worse dependence on probability $\delta$ than GP-UCB.
In addition, neither high-probability lenient regret bounds nor tighter high-probability cumulative regret bounds have been established for GP-TS, which has been explicitly described as an open problem in \citep{iwazaki2025improved}.
Despite these limitations, GP-TS remains a promising alternative among BO methods with regret guarantees, since GP-UCB often relies on a carefully tuned confidence width parameter that lacks such guarantees in practice.
Therefore, more refined regret analyses of GP-TS are needed to support its empirical effectiveness.

This paper aims to bridge the gap between the regret analyses of GP-UCB and GP-TS discussed above.
First, we investigate the dependence on probability $\delta$ in high-probability regret bounds.
While GP-UCB enjoys logarithmic dependence on $\delta$, existing analyses of GP-TS require polynomial dependence of $1/\delta$, raising the question of whether logarithmic dependence is achievable by GP-TS.
For this question, we construct a problem instance in which GP-TS incurs $\Omega(1/\delta^{c})$ regret, which implies a purely theoretical limitation that GP-TS cannot obtain $O(\log (1 / \delta))$ regret upper bounds in general.
Second, we derive an upper bound on the second moment of the cumulative regret, which directly leads to an improved cumulative regret upper bound on $\delta$.
Third, we show that GP-TS achieves polylogarithmic expected upper bounds on lenient regrets, the first such bounds for any algorithm.
Our proof for the lenient regret is different from \citep{cai2021-lenient,iwazaki2025improved} and readily suggests polylogarithmic expected lenient regret upper bounds for GP-UCB by combining the known proof techniques \citep{Russo2014-learning,Takeno2023-randomized}.
Finally, by adapting the recent analysis of GP-UCB by \citet{iwazaki2025improved} and our lenient regret upper bounds, we obtain an improved cumulative regret bound on the time horizon $T$ for GP-TS.
Our refined analysis, applicable to both GP-UCB and GP-TS, relaxes the conditions on Mat\'ern kernels.

Our contributions are summarized as follows:
\begin{enumerate}
    \item 
        Theorem~\ref{theo:TS_HP_LB} provides the problem instance in which GP-TS incurs $\Omega(1/\delta^{c})$ regret with probability $\delta$.
        This result implies a purely theoretical limitation of GP-TS that the regret upper bounds of GP-TS cannot be bounded from above by $O\bigl(\log(1/\delta)\bigr)$.
    \item 
        In Theorem~\ref{theo:CumulativeTS_informal}, we derive the upper bound on the second moment of the cumulative regret and the high-probability cumulative regret bound that improves dependence on probability $\delta$ by a factor of $1/\sqrt{\delta}$.
    \item 
        In Theorem~\ref{theo:TS_LR_UB_specific}, we obtain the expected lenient regret upper bound for GP-TS, which is polylogarithmic on the time horizon $T$ as with the high-probability bound of GP-UCB.
        %
    \item 
        Theorem~\ref{theo:improved_UB_TS} presents the $\tilde{O}(\sqrt{T})$ high-probability cumulative regret upper bound, where $\tilde{O}$ hides polylogarithmic factors on $T$, improved with resect to $T$ for squared exponential and Mat\'ern kernels under a looser condition on Mat\'ern kernels by the refined analysis shown in Lemma~\ref{lem:general_improved_bound}.
        %
\end{enumerate}
Along the way, we obtain useful lemmas, which are discussed alongside the most relevant theorems.

\subsection{Related Work}
\label{sec:related}

Common theoretical assumptions for BO are categorized into the frequentist setting \citep{Chowdhury2017-on,iwazaki2025-improvedGPbandit,iwazaki2025gaussian,janz2020-bandit,Srinivas2010-Gaussian} and the Bayesian setting \citep{iwazaki2025improved,scarlett2018-tight,Srinivas2010-Gaussian,Takeno2023-randomized,takeno2024-posterior}.
In the frequentist setting, the objective function is assumed to be an element of a reproducing kernel Hilbert space endowed by a predefined kernel function.
In the Bayesian setting, the objective function is assumed to be a sample path from a GP with a predefined kernel function.
Note that neither setting encompasses the other as discussed in Section~4 of \citep{Srinivas2010-Gaussian}.
As already discussed, this paper focuses on the Bayesian setting.
In addition, we focus on the noisy setting, where observations are contaminated by noise, since a noise-free setting has been considered separately using different proof techniques \citep{freitas2012exponential,iwazaki2025gaussian}.

GP-TS has also been considered in the frequentist setting \citep{Chowdhury2017-on,vakili2021-scalable}.
Although we will not consider the frequentist setting, the proof technique in \citep{Chowdhury2017-on} could potentially be leveraged in the Bayesian setting, as discussed in Section~\ref{sec:conclusion}.
Furthermore, GP-TS has also been extended to various problem settings, such as parallel BO~\citep{hernandez-lobato2017-parallel,Kandasamy2018-Parallelised,nava2022diversified}, constrained BO \citep{Eriksson2020-Scalable}, multi-objective BO \citep{bradford2018-efficient,paria2020-flexible,renganathan2025-qpots}, preferential BO \citep{sui2017-multi}, and reinforcement learning \citep{bayrooti2025reinforcement,bayrooti2025noregret}.
Thus, fundamental refinement of analysis for GP-TS is needed.
Note that, for \citep{bayrooti2025reinforcement,bayrooti2025noregret}, we will discuss their result that appears to contradict our Theorem~\ref{theo:TS_HP_LB} in Section~\ref{sec:LowerBound}.

Since the posterior sampling of GPs cannot be conducted analytically, we require an approximation, for example, proposed by \citet{Wilson2020-efficiently,wilson2021-pathwise} based on random Fourier feature \citep{Rahimi2008-Random}.
In our analysis, we ignore this approximation error, though it may be addressed by high-accuracy approximations as with \citep{Mutny2018-efficient}.

In the BO literature, many other algorithms have been studied.
For example, probability of improvement \citep{Kushner1964-new}, expected improvement \citep{Mockus1978-Application}, knowledge gradient \citep{Frazier2009-knowledge}, entropy search \citep{Henning2012-Entropy,Villemonteix2009-aninformational}, predictive entropy search \citep{Hernandez2014-Predictive} have been extensively studied.
However, regret guarantees for these algorithms remain an open problem.

Except for GP-UCB and GP-TS, several BO algorithms with regret guarantees have been proposed recently \citep{Takeno2023-randomized,takeno2024-posterior,takeno2025-regret,takeno2025-regretEI}.
These existing methods are randomized algorithms, as with GP-TS, and their regret analysis follows a similar approach to GP-TS.
Therefore, our proof technique may be useful for analyzing those algorithms.

\citet{scarlett2018-tight} has shown the algorithm-independent regret lower bound in the Bayesian setting.
Specifically, \citet{scarlett2018-tight} has shown the $\Omega(\sqrt{T})$ regret lower bound for conditional expected regret with one-dimensional objective functions.
Therefore, rigorous high-probability regret lower bounds have not been established.
Thus, as discussed in Section~\ref{sec:conclusion}, proving the optimality of BO algorithms rigorously remains an open problem.

TS \citep{Thompson1933-likelihood} has been extensively studied in the bandit literature, and many regret bounds of TS have been provided, as reviewed in \citep{russo2018-tutorial}.
In particular, \citet{Agrawal2017-near} have shown the expected regret lower bound for (independent) Gaussian multi-armed bandit.
However, to our knowledge, regret lower bounds focusing on probability, particularly applicable to GP-TS, have not been established.

We analyze lenient regret, recently introduced by \citet{merlis2021-lenient} in the bandit literature.
\citet{cai2021-lenient} have derived the high-probability lenient regret upper bound of GP-UCB in the frequentist setting.
\citet{iwazaki2025improved} has extended this analysis to the Bayesian setting.
We show the Bayesian expected lenient regret upper bound of GP-TS, in which we employ a different proof from \citep{cai2021-lenient,iwazaki2025improved}.

\section{Preliminaries}
\label{sec:preliminary}

This section provides problem setup, regularity assumptions, and background knowledge for GP-TS.

\subsection{Problem Setup}

We consider a sequential decision-making problem to optimize an unknown black-box function $f$:
\begin{align*}
    \*x^* = \argmax_{\*x \in \cX} f(\*x),
\end{align*}
where $\cX \subset \RR^d$ is an input domain and $d$ is an input dimension.
Here, ``black-box'' means that we can obtain only (contaminated) function values, and other information, such as derivatives, is unknown.
Furthermore, we generally assume that $f$ is expensive to evaluate.
Therefore, our goal is to optimize $f$ with fewer function evaluations.

In this problem, BO sequentially evaluates the maximizer of an acquisition function (AF), which is designed so that beneficial inputs are chosen based on a Bayesian model.
That is, for all iterations $t \geq 1$, BO algorithms sequentially evaluate $f(\*x_t)$, where $\*x_t = \argmax_{\*x \in \cX} \alpha_t(\*x)$ with some AF $\alpha_t: \cX \rightarrow \RR$ based on the Bayesian model.
By this procedure, BO aims to achieve a sample-efficient optimization.

In the BO literature, the performance of algorithms is often measured by cumulative regret \citep{Srinivas2010-Gaussian}:
\begin{align*}
    R_T = \sum_{t = 1}^T f(\*x^*) - f(\*x_t).
\end{align*}
In particular, if the cumulative regret is sublinear $R_T = o(T)$, then we can see that the solution of the BO algorithm converges to the optimal solution.
This is because we can choose the solution $\hat{\*x}_t$ so that $f(\*x^*) - f(\hat{\*x}_t) \leq \frac{1}{T} R_T = o(1)$.
Hence, we focus on sublinear cumulative regret bounds.

In addition, we consider the performance measure called lenient regret \citep{cai2021-lenient,merlis2021-lenient}:
\begin{align*}
    LR_T = \sum_{t \in \cT} f(\*x^*) - f(\*x_t),
\end{align*}
where $\cT = \{ t \in [T] \mid f(\*x^*) - f(\*x_t) \geq \Delta \}$ and $\Delta > 0$ is a predefined constant.
The lenient regret measures regret under a given tolerance $\Delta$, where the input $\*x$ that satisfies $f(\*x^*) - f(\*x) \leq \Delta$ is regarded as a sufficiently good action.
We further analyze upper bounds of $|\cT|$ as with \citep{cai2021-lenient,merlis2021-lenient}.

\subsection{Gaussian Process Regression}

In this paper, we assume the following regularity condition on the objective function $f$:
\begin{assumption}
    The objective function $f:\cX \rightarrow \RR$ follows $\cG \cP(0, k)$, where $\cX \subset \RR^d$, $k: \cX \times \cX \rightarrow \RR$ is a kernel function, and $\cG \cP(0, k)$ denotes a zero-mean GP with a covariance function $k$.
    Furthermore, the kernel function satisfies $k(\*x, \*x^\prime) \leq 1$ for all $\*x, \*x^\prime \in \cX$.
    Observation noise $\{\epsilon_i\}_{i \in \NN}$ are mutually independent and $\epsilon_i \sim \cN(0, \sigma^2)$ with $\sigma^2 > 0$ for all $i \in \NN$.
    The noise contaminates observations as $y_i = f(\*x_i) + \epsilon_i$.
    \label{assump:GP}
\end{assumption}
Let the training dataset $\cD_t = \{ (\*x_i, y_i) \}_{i=1}^t$.
Under Assumption~\ref{assump:GP}, the posterior distribution of $f$ given $\cD_t$ is also a GP, whose mean and variance are derived as follows:
\begin{equation*}
    \begin{split}
        \mu_{t}(\*x) &= \*k_{t}(\*x)^\top \bigl(\*K_t + \sigma^2 \*I_{t} \bigr)^{-1} \*y_{t}, \\
        \sigma_{t}^2 (\*x) &= k(\*x, \*x) - \*k_{t}(\*x) ^\top \bigl(\*K_t + \sigma^2 \*I_{t} \bigr)^{-1} \*k_{t}(\*x),
    \end{split}
    \label{eq:GP}
\end{equation*}
where $\*k_{t}(\*x) \coloneqq \bigl( k(\*x, \*x_1), \dots, k(\*x, \*x_{t}) \bigr)^\top \in \RR^{t}$, $\*K_t \in \RR^{t \times t}$ is the kernel matrix whose $(i, j)$-element is $k(\*x_i, \*x_j)$, $\*I_{t} \in \RR^{t \times t}$ is the identity matrix, and $\*y_{t} \coloneqq (y_1, \dots, y_t)^\top \in \RR^{t}$~\citep{Rasmussen2005-Gaussian}.
%

For continuous input domains, we consider the following additional assumption \citep{Kandasamy2018-Parallelised,Srinivas2010-Gaussian,takeno2024-posterior,takeno2025-distributionally}:
\begin{assumption}
    Let $\cX \subset [0, r]^d$ be a compact set with some $r > 0$.
    There exist constants $a, b > 0$ such that the kernel $k$ satisfies the following condition on the derivatives of a sample path $f$:
    \begin{align*}
        \Pr \left( \sup_{\*x \in \cX} \left| \frac{\partial f}{\partial \*x_j} \right| > L \right) \leq a \exp \left( - \left(\frac{L}{b}\right)^2 \right),\text{ for } j \in [d],
    \end{align*}
    where $[d] = \{1, \dots, d\}$.
    \label{assump:continuous_X}
\end{assumption}
This condition is satisfied by the common kernel functions \citep{Srinivas2010-Gaussian}, such as 
linear kernels $k_{\rm Lin} (\*x, \*x^\prime) = \*x^\top \*x^\prime$, 
squared exponential (SE) kernels $k_{\rm SE} (\*x, \*x^\prime) = \exp\left( - \| \*x - \*x^\prime \|_2^2 / (2 \ell^2) \right)$, and 
Mat\'ern kernels $k_{\rm Mat}(\*x, \*x') = \frac{2^{1 - \nu}}{\Gamma(\nu)} \left( \frac{\sqrt{2\nu} \| \*x - \*x^\prime \|_2 }{\ell} \right)^{\nu} J_{\nu} \left( \frac{\sqrt{2\nu} \| \*x - \*x^\prime \|_2 }{\ell} \right) $ with $\nu > 2$, where $\ell, \nu > 0$ are the lengthscale and smoothness parameter, respectively, and $\Gamma(\cdot)$ and $J_{\nu}$ are Gamma and modified Bessel functions.

In addition, we leverage the following result \citep[Theorem~E.4 of][]{Kusakawa2022-bayesian} for continuous input domains:
\begin{lemma}[Lipschitz constants for posterior standard deviation]
    Let $k: \RR^d \times \RR^d \to \RR$ be linear, SE, or Mat\'{e}rn kernel with $\nu > 1$ and $k(\*x, \*x) \leq 1$ for all $\*x \in \RR^d$. 
    Moreover, assume that a noise variance $\sigma^2$ is positive.
    Then, for any $t \geq 1$ and $\cD_{t-1}$, the posterior standard deviation $\sigma_{t-1} (\*x )$ satisfies that 
    \begin{align*}
    \forall \*x,\*x^\prime \in \RR^d, \ | \sigma_{t-1} (\*x ) - \sigma_{t-1} (\*x^\prime ) | \leq L_{\sigma} \| \*x - \*x^\prime \| _1,
    \end{align*}
    where $L_{\sigma}$ is a positive constant given by 
    \begin{align*}
        L_{\sigma} = \left\{
        \begin{array}{ll}
            1 & \text{if $k = k_{\rm Lin}$}, \\
            \frac{\sqrt{2}}{\ell} & \text{if $k = k_{\rm SE}$}, \\
            \frac{\sqrt{2}}{\ell}
            \sqrt{\frac{\nu}{\nu-1} } & \text{if $k = k_{\rm Mat}$ with $\nu > 1$}.
        \end{array}\right.
    \end{align*}
    \label{lem:Lipschitz_std}
\end{lemma}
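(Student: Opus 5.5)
We would prove the Lipschitz property of $\sigma_{t-1}$ through the RKHS (feature-space) representation of the posterior variance. Write $\phi(\*x) = k(\cdot, \*x)$ for the canonical feature map into the RKHS $\cal H$ of $k$. Then $\sigma_{t-1}^2(\*x) = \langle \phi(\*x), \*A \phi(\*x) \rangle_{\cal H}$, where
\begin{align*}
\*A = I - \Phi^\top (\*K_{t-1} + \sigma^2 \*I_{t-1})^{-1} \Phi,
\end{align*}
with $\Phi$ the operator mapping $h \in {\cal H}$ to $(h(\*x_1), \dots, h(\*x_{t-1}))^\top$. Equivalently, $\*A = \sigma^2 (\Phi^\top \Phi + \sigma^2 I)^{-1}$. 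This operator is self-adjoint, positive semidefinite, and satisfies $\*A \preceq I$. Hence $\sigma_{t-1}(\*x) = \| \*A^{1/2} \phi(\*x) \|_{\cal H}$. By the reverse triangle inequality,
\begin{align*}
|\sigma_{t-1}(\*x) - \sigma_{t-1}(\*x^\prime)| \leq \| \*A^{1/2}(\phi(\*x) - \phi(\*x^\prime)) \|_{\cal H} \leq \| \phi(\*x) - \phi(\*x^\prime) \|_{\cal H}.
\end{align*}
The right-hand side equals $\sqrt{k(\*x,\*x) + k(\*x^\prime,\*x^\prime) - 2k(\*x,\*x^\prime)}$, which is independent of the data and of $\sigma^2$. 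This is the main reduction: it suffices to bound the canonical kernel metric $d_k(\*x, \*x^\prime)$ by $L_\sigma \| \*x - \*x^\prime \|_1$.

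Next, we check the three kernels. For the linear kernel, $d_k(\*x,\*x^\prime) = \|\*x - \*x^\prime\|_2 \leq \|\*x - \*x^\prime\|_1$, which gives $L_\sigma = 1$. For a stationary kernel $k(\*x,\*x^\prime) = \kappa(\*x - \*x^\prime)$ with $\kappa(\*0) = 1$, we have $d_k^2 = 2(1 - \kappa(\*h))$ with $\*h = \*x - \*x^\prime$. For the SE kernel we use $1 - e^{-u} \leq u$, which gives $d_k^2 \leq \|\*h\|_2^2 / \ell^2$. Hence $d_k \leq \|\*h\|_2 / \ell \leq \sqrt{2}\|\*h\|_1 / \ell$, consistent with the stated constant and even with slack.

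For Matérn kernels we use the spectral (Bochner) representation $1 - \kappa(\*h) = \int (1 - \cos(\omega^\top \*h))\, p(\omega)\, d\omega$, where $p$ is the Student-$t$-type spectral density. Applying $1 - \cos u \leq u^2/2$ gives $1 - \kappa(\*h) \leq \tfrac12 \*h^\top \Sigma \*h$, where $\Sigma = \int \omega \omega^\top p(\omega)\, d\omega$. This second moment is finite exactly when $\nu > 1$, and it equals $\frac{\nu}{(\nu - 1)\ell^2} \*I$; equivalently, $-\nabla^2 \kappa(\*0) = \frac{\nu}{(\nu-1)\ell^2}\*I$. Therefore $d_k^2 \leq \frac{\nu}{(\nu-1)\ell^2}\|\*h\|_2^2$, and with $\|\cdot\|_2 \leq \|\cdot\|_1$ this yields a constant no larger than the stated $\frac{\sqrt{2}}{\ell}\sqrt{\frac{\nu}{\nu-1}}$.

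We expect the main obstacle to be this Matérn step: computing the spectral second moment (equivalently, the curvature of $\kappa$ at the origin) correctly and confirming that it is finite precisely for $\nu > 1$. We would first verify the curvature value from the known expansion of the Matérn kernel near $0$. The other steps are routine operator inequalities. As a fallback, if the RKHS argument needs care, the same bound follows from the finite-dimensional identity $\sigma_{t-1}^2(\*x) = \mathrm{Var}(f(\*x) \mid \cD_{t-1})$. Conditional variance is a squared seminorm on the Gaussian space, so
\begin{align*}
|\sigma_{t-1}(\*x) - \sigma_{t-1}(\*x^\prime)| \leq \sqrt{\mathrm{Var}(f(\*x) - f(\*x^\prime) \mid \cD_{t-1})} \leq \sqrt{\mathrm{Var}(f(\*x) - f(\*x^\prime))} = d_k(\*x, \*x^\prime).
\end{align*}
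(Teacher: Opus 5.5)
Your proof is correct. It necessarily takes a different route from the paper, because the paper gives no argument for this lemma and simply imports it as Theorem~E.4 of \citet{Kusakawa2022-bayesian}.

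Your reduction $|\sigma_{t-1}(\*x)-\sigma_{t-1}(\*x')| \le \|\phi(\*x)-\phi(\*x')\|_{\cal H} = d_k(\*x,\*x')$ follows from $0 \preceq \*A \preceq I$, or equivalently from the fact that the conditional standard deviation is a seminorm dominated by the prior one. It is purely algebraic in the design points. So it holds for every $\cD_{t-1}$, no matter how GP-TS selected the inputs. It also uses neither $k(\*x,\*x)\le 1$ nor anything about $\sigma^2$ beyond invertibility of $\*K_{t-1}+\sigma^2\*I_{t-1}$.

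The Mat\'ern value you flagged as the main risk is right. Write $\kappa(\*h)=\int e^{i\omega^\top\*h}p(\omega)\,{\rm d}\omega$. Then $p$ is a multivariate Student-$t$ density with $2\nu$ degrees of freedom and scale $\ell^{-1}$. Its covariance is $\frac{2\nu}{2\nu-2}\ell^{-2}\*I=\frac{\nu}{(\nu-1)\ell^2}\*I$, which is finite iff $\nu>1$. This matches $-\nabla^2\kappa(\*0)$ read off from the small-argument expansion $\kappa(\*h)=1-\frac{\nu\|\*h\|_2^2}{2(\nu-1)\ell^2}+o(\|\*h\|_2^2)$.

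Compared with the citation, your route buys three things:
\begin{itemize}
\item It proves a stronger statement: Lipschitz continuity with respect to $\|\cdot\|_2$, with constants $1$, $1/\ell$, and $\ell^{-1}\sqrt{\nu/(\nu-1)}$. These improve the stated SE and Mat\'ern constants by a factor $\sqrt{2}$.
\item It makes transparent where $\nu>1$ enters: finiteness of the spectral second moment, i.e., twice differentiability of $\kappa$ at the origin.
\item It extends verbatim to any stationary kernel twice differentiable at $\*0$, with constant $\lambda_{\max}\bigl(-\nabla^2\kappa(\*0)\bigr)^{1/2}$.
\end{itemize}
The citation buys the paper only brevity. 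Since the paper uses $L_\sigma$ only as an upper bound when discretizing $\sigma_{t-1}$, your smaller constants could be substituted without affecting anything downstream.
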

We use this lemma to handle discretization error, as in the existing analysis of GP-TS \citep{takeno2024-posterior}.

Finally, to obtain a tighter regret upper bound on $T$, we further leverage the following lemma \citep{freitas2012exponential,iwazaki2025improved,scarlett2018-tight}:
\begin{lemma}[Conditions on the global maximizer of sample path]
    Let $\cX = [0, r]^d$.
    Suppose $k = k_{\rm SE}$ or $k = k_{\rm Mat}$ with $\nu > 2$ and Assumption~\ref{assump:GP} holds.
    Then, for any $\delta_{\rm GP} \in (0, 1)$, there exist strictly positive constants $c_{\rm gap}, c_{\rm sup}, c_{\rm quad}, \rho_{\rm quad} > 0$ such that the following statements hold simultaneously with probability at least $1 - \delta_{\rm GP}$:
    \begin{enumerate}
        \item The function $f$ has a unique maximizer $\*x^* \in \cX$ such that $f(\*x^*) > f(\tilde{\*x}^*) + c_{\rm gap}$ holds for any local maximizer $\tilde{\*x}^* \in \cX$ of $f$. 
        \item The sup-norm of the sample path is bounded as $\| f \|_{\infty} \leq c_{\rm sup}$.
        \item The function $f$ satisfies $\forall \*x \in \cB_2(\rho_{\rm quad}; \*x^*), f(\*x^*) - c_{\rm quad} \| \*x^* - \*x\|_2^2 \geq f(\*x)$, where $\cB_2(\rho; \*x^*) = \{ \*x \in \cX \mid \| \*x^* - \*x\|_2 \leq \rho \}$ is the L2-ball on $\cX$, whose radius and center are $\rho$ and $\*x^*$, respectively.
    \end{enumerate}
    \label{lem:maximizer_f}
\end{lemma}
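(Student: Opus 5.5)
Lemma~\ref{lem:maximizer_f} concerns only the prior sample path $f \sim \cG\cP(0,k)$ on the compact box $\cX = [0,r]^d$. So the plan is to prove each of the three statements separately, each with probability at least $1 - \delta_{\rm GP}/3$, and finish with a union bound. The constants may depend on $\delta_{\rm GP}$, $k$, $d$, and $r$, but not on $T$.

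\textbf{Regularity and the sup-norm (statement 2).} For SE kernels, and for Mat\'ern kernels with $\nu > 2$, the sample paths are almost surely twice continuously differentiable. The reason is that the spectral density has finite fourth moment, so the second-order mixed derivatives of the covariance exist. It follows that $f$, $\nabla f$, and $\nabla^2 f$ are jointly Gaussian processes with continuous paths on a compact set. Each of these is then almost surely bounded, so their suprema are finite random variables. Hence for any $\delta' > 0$ we can choose constants $c_{\rm sup}$ and $c_{\rm H}$ such that
\[
\|f\|_\infty \le c_{\rm sup}, \qquad \sup_{\*x}\|\nabla^2 f(\*x)\|_2 \le c_{\rm H}
\]
both hold with probability at least $1-\delta'$. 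One could quantify this with Borell--TIS, but only finiteness is needed.

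\textbf{Uniqueness and the gap (statement 1).} A stationary GP with a nondegenerate spectral measure satisfies the Gaussian-vector nondegeneracy conditions used in Bulinskaya/Kac--Rice arguments. These give that, almost surely, no two distinct local maxima share the same value. Boundary maximizers are handled by applying the same argument on each face of the box. Since the paths are $C^2$ and nondegenerate, the set of local maximizers is almost surely finite. Therefore the gap
\[
G := f(\*x^*) - \max_{\tilde{\*x}^* \neq \*x^*} f(\tilde{\*x}^*)
\]
is an almost surely strictly positive random variable. Choosing $c_{\rm gap}$ as a sufficiently small quantile of $G$ gives $\Pr(G > c_{\rm gap}) \ge 1-\delta'$.

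\textbf{Quadratic decay near the maximizer (statement 3).} I expect this to be the main obstacle, because the maximizer may lie on the boundary of $\cX$, where the gradient need not vanish. My first step would be to show that, almost surely, $\*x^*$ is nondegenerate in the following sense. At an interior maximizer, the Hessian is negative definite. At a maximizer in the relative interior of a face, the restricted Hessian is negative definite, and the outward normal derivatives in the active coordinates are strictly positive. Both facts follow from a Kac--Rice argument showing that degenerate critical points, where the determinant vanishes or a normal derivative equals zero, occur with probability zero. This again uses the nondegeneracy of the joint law of $(f, \nabla f, \nabla^2 f)$.

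Given nondegeneracy, take a second-order Taylor expansion around $\*x^*$. For any feasible direction, the linear term is either zero (tangential directions) or negative with size at least $\lambda\|\*h\|$ (normal directions). The quadratic term is at most $-\lambda\|\*h\|^2$ in the tangential block. The remainder is controlled by the uniform Hessian continuity bound $c_{\rm H}$. Together these give $f(\*x^*) - f(\*x) \ge c_{\rm quad}\|\*x - \*x^*\|_2^2$ on a ball of random radius $\rho > 0$. Since the linear term dominates for small $\|\*h\|$, it can be absorbed into the quadratic bound.

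The random quantities $\lambda$ and $\rho$ are strictly positive almost surely. So, as in the previous steps, we fix deterministic constants $c_{\rm quad}$ and $\rho_{\rm quad}$ equal to small quantiles of them, giving probability at least $1-\delta'$. Finally, setting $\delta' = \delta_{\rm GP}/3$ and taking a union bound over the three statements completes the proof.
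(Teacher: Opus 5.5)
Your proposal is correct and takes essentially the route behind the results the paper cites in place of a proof (\citet{scarlett2018-tight} in one dimension, \citet{iwazaki2025improved} in general). That route establishes almost-sure $C^2$ regularity and the stratified Morse/Kac--Rice nondegeneracy of the sample path on the box. From these, the gap, the boundary slope or curvature at $\*x^*$, and the radius are almost surely positive random quantities, and they are converted into deterministic constants via quantiles and a union bound, as you do.

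Two small repairs are needed. First, sample-path (not just mean-square) $C^2$ regularity needs slightly more than a finite fourth spectral moment, for instance a finite $(4+\epsilon)$-th moment; Mat\'ern with $\nu>2$ provides this. Second, the tangential Taylor remainder must be controlled by the modulus of continuity of $\nabla^2 f$ at $\*x^*$, not by the sup-bound $c_{\rm H}$ alone; this is what actually yields your random radius.
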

We use this lemma to derive a similar proof as that of \citep{iwazaki2025improved}.
For more details, see \citep{freitas2012exponential,iwazaki2025improved,scarlett2018-tight}.

\subsection{Maximum Information Gain (MIG)}

The complexity of BO problems is commonly quantified by the MIG~\citep{Srinivas2010-Gaussian} defined below:
\begin{definition}[Maximum information gain]
    Let $f \sim \cG \cP (0, k)$ over $\cX \subset [0, r]^d$.
    Let $A = \{ \*a_i \}_{i=1}^T$, where $\*a_i \in \cX$ for all $i \in [T]$.
    Let $\*f_A = \bigl(f(\*a_i) \bigr)_{i=1}^T$, $\*\epsilon_A = \bigl(\epsilon_i \bigr)_{i=1}^T$, where $\forall i, \epsilon_i \sim \cN(0, \sigma^2)$, and $\*y_A = \*f_A + \*\epsilon_A \in \RR^T$.
    Then, MIG $\gamma_T$ is defined as follows:
    \begin{align*}
        \gamma_T \coloneqq \max_{A} I(\*y_A ; \*f_A),
    \end{align*}
    where $I$ is the Shannon mutual information.
    \label{def:MIG}
\end{definition}
The MIGs of several commonly used kernel functions are known to be sublinear in terms of $T$.
For example, 
$\gamma_T = O\bigl( d \log T \bigr)$ for linear kernels, 
$\gamma_T = O\bigl( (\log T)^{d+1} \bigr)$ for SE kernels, 
and $\gamma_T = O\bigl( T^{\frac{d}{2\nu + d}} (\log T)^{\frac{4\nu + d}{2\nu + d}} \bigr)$ for Mat\'{e}rn-$\nu$ kernels, respectively \citep{iwazaki2025improved,Srinivas2010-Gaussian,vakili2021-information,iwazaki2026tighter}\footnote{Although \citet{vakili2021-information} claimed the tighter MIG upper bound for Mat\'ern kernels, we use the bound from \citet{iwazaki2025improved} due to the issue pointed out by \citet{iwazaki2025improved,janz_2021}.}.
For convenience, we define the above known upper bound of MIG as $\tilde{\gamma}_t: \RR_{\geq 0} \rightarrow \RR_{\geq 0}$, which is a concave function with respect to $t \geq 1$ \citep{iwazaki2025improved,Srinivas2010-Gaussian,vakili2021-information,iwazaki2026tighter}.
We will use this concavity particularly to show Lemma~\ref{lem:ExpectedEllipticalPotentialLemma}.


\subsection{Gaussian Process Thompson Sampling}

GP-TS~\citep{Kandasamy2018-Parallelised,Russo2014-learning,takeno2024-posterior} is the BO method that sequentially evaluates the optimal point of a posterior sample path.
Therefore, the AF of GP-TS is formalized as
\begin{align*}
    \*x_t = \argmax_{\*x \in \cX} g_t(\*x),
\end{align*}
where $g_t \sim p(f \mid \cD_{t-1})$.
Algorithm~\ref{alg:TS} provides a pseudocode of GP-TS.

\begin{algorithm}[!t]
    \caption{Thompson Sampling}\label{alg:TS}
    \begin{algorithmic}[1]
        \Require Domain $\cX$, kernel function $k$
        \State $\cD_{0} \gets \emptyset$
        \For{$t = 1, \dots, T$}
            \State Update GP posterior $p(f \mid \cD_{t-1})$
            \State Generate sample path $g_t \sim p(f \mid \cD_{t-1})$
            \State $\*x_t \gets \argmax_{\*x \in \cX} g_t(\*x)$
            \State Observe $y_t$ and $\cD_t \gets \cD_{t-1} \cup (\*x_t, y_t)$
        \EndFor
        \State \Return Some recommended input $\hat{\*x}_T$
    \end{algorithmic}
\end{algorithm}

The expected regret upper bounds under Assumption~\ref{assump:GP} have been shown as~\citep{Kandasamy2018-Parallelised,Russo2014-learning,takeno2024-posterior}:
\begin{align*}
    \EE\left[ R_T \right]
    &= \tilde{O}(\sqrt{T \gamma_T}),
\end{align*}
where $\tilde{O}$ hides polylogarithmic factors in terms of $T$.
Therefore, for any $\delta \in (0, 1)$, the following inequality holds with probability at least $1 - \delta$:
\begin{align*}
    R_T = \tilde{O}\left( \frac{\sqrt{T \gamma_T}}{\delta} \right),
\end{align*}
as a direct consequence of Markov's inequality~\citep{Russo2014-learning}.

\section{Regret Analysis}

This section describes our regret analyses.

\subsection{Regret Lower Bound}
\label{sec:LowerBound}

First, we show the simple two-armed problem instance where GP-TS must incur $(1 / \delta)^c$ cumulative regret, where $c \in (0, 1)$, with probability at least $\delta \in (0, 1)$.
\begin{theorem}
    Assume that $\cX = \{ \*x^{(1)}, \*x^{(2)} \}$, $\epsilon_t \sim \cN(0, 1)$ for all $t \in [T]$, and 
    \begin{align*}
        \left(
            \begin{array}{c}
                 f(\*x^{(1)}) \\
                 f(\*x^{(2)})
            \end{array}
        \right) \sim \cN \left(
            \left(
            \begin{array}{c}
                 0 \\
                 0
            \end{array}
        \right),
        \left(
            \begin{array}{cc}
                 1 & 1 / 2 \\
                 1 / 2 & 1
            \end{array}
        \right)
        \right).
    \end{align*}
    Then, if Algorithm~\ref{alg:TS} runs and $T > e$, the following holds:
    \begin{align*}
        \Pr\left( R_T \geq  T / 2 \right) \geq \frac{c_1}{T^{c_2}},
    \end{align*}
    where $c_1$ and $c_2$ are strictly positive constants.
    \label{theo:TS_HP_LB}
\end{theorem}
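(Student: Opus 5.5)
We construct an explicit event $E$ of probability at least $c_1 T^{-c_2}$ on which GP-TS plays the suboptimal arm at every round, with a gap of at least $1/2$ each time. The structure is: an unlucky prior draw produces a large gap, an unlucky first round produces a posterior that strongly favours the wrong arm, and TS then keeps pulling that arm, whose samples are consistent with its (wrong) belief, so the belief never corrects in $T$ rounds.

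\emph{Step 1 (prior event).} Let $A$ be the event $f(\*x^{(1)}) \in [1,2]$ and $f(\*x^{(2)}) \in [-2,-1]$. Since the covariance matrix is nondegenerate, $\Pr(A) = p_A > 0$ is a constant. On $A$, arm $1$ is optimal and every pull of arm $2$ costs at least $2 \geq 1/2$. It therefore suffices to show that, conditionally on $A$, $\Pr(\*x_t = \*x^{(2)} \ \forall t \in [T] \mid A) \geq c T^{-c_2}$.

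\emph{Step 2 (posterior after pulling only arm $2$).} Suppose arm $2$ has been pulled $n$ times with empirical mean $\bar y_n$. The posterior is Gaussian, and by the Schur-complement formula its parameters are
\begin{align*}
    \mu(\*x^{(2)}) &= \frac{n \bar y_n}{n+1}, &
    \mu(\*x^{(1)}) &= \frac{1}{2}\mu(\*x^{(2)}),
\end{align*}
with $\mathrm{Var}(g(\*x^{(2)})) = 1/(n+1)$ and $\mathrm{Var}(g(\*x^{(1)})) \to 3/4$. The difference $D = g(\*x^{(2)}) - g(\*x^{(1)})$ is therefore Gaussian with mean $\frac{1}{2}\mu(\*x^{(2)})$ and variance bounded above and below by absolute constants.

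The prior is symmetric, so the first pull is arm $2$ with probability $1/2$. Let $B$ be the event that the first noise satisfies $\epsilon_1 \geq M + 2$, which has probability $\geq e^{-(M+2)^2}/C$. On $A \cap B$ we have $y_1 \geq M$.

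\emph{Step 3 (persistence).} Consider the noise event $G$ on which the partial sums satisfy $\sum_{i=2}^{s}\epsilon_i \geq -\sqrt{3 s\log T}$ for all $s \leq T$. By a maximal inequality, $\Pr(G) \geq 1/2$. On $A\cap B\cap G$, with $f(\*x^{(2)}) \ge -2$, we get
\[
    \bar y_n \gtrsim \frac{M - 2n - \sqrt{3n\log T}}{n}.
\]
This does not stay positive: the true mean of arm $2$ is negative, so the running mean drifts to $f(\*x^{(2)}) \leq -1$ and the wrong belief eventually dies. This is the main obstacle. A constant $M$ only buys $O(M)$ rounds.

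The fix is to let $M$ grow with $T$: take $M = \kappa T$. But then $\Pr(B) \approx e^{-\kappa^2T^2}$, which is exponentially small, not polynomial. A better route is to make the TS samples themselves, rather than the data, carry the error. Write $g_t(\*x^{(2)}) - g_t(\*x^{(1)}) = m_t + s_t Z_t$ with $Z_t \sim \cN(0,1)$ independent across $t$, where $s_t \ge s_{\min}>0$ and $|m_t| \le C'$ on the event that arm $2$ is pulled repeatedly. The probability of selecting arm $2$ is then at least $\Pr(Z_t > C'/s_{\min})$, a constant, and over $T$ rounds this again decays exponentially.

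So a plan that forces arm $2$ at every round fails, and I would revise the target: show instead that arm $2$ is played in at least $T/2$ rounds, say. The exponent must come from a $\log T$ scale. I would take the prior event with $f(\*x^{(1)}) - f(\*x^{(2)})$ of order $1$ while $f(\*x^{(1)})$ and $f(\*x^{(2)})$ are both about $-\sqrt{c\log T}$, and $\epsilon_1$ about $+\sqrt{c\log T}$. These Gaussian tails cost $T^{-c_2}$, which is the polynomial price in the statement. The first pull of arm $2$ then returns a large value.

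The core step, and the one I am least sure of, is a drift argument on $\bar y_n - \frac{1}{2}\cdot$(implied arm-$1$ mean), using the $1/2$ correlation. The attempted claim is that with probability bounded below, $\mu_t(\*x^{(2)})$ stays above $\mu_t(\*x^{(1)})$ by an amount of order $\sqrt{\log T}$ for $T$ rounds. Under that claim, each TS step picks arm $2$ except with probability $T^{-2}$, and a union bound over $T$ rounds finishes the argument. Verifying this drift claim, that the correlated posterior update keeps arm $1$'s mean proportionally depressed, is the main obstacle.
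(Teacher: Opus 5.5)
\textbf{There is a genuine gap.} The drift claim you flag as the main obstacle is false for the configuration you chose, and the cause is a sign error.

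You already have the key fact in Step 2. When only $\*x^{(2)}$ has been pulled, the posterior mean of the never-observed arm is half that of the observed arm. So $D = g(\*x^{(2)}) - g(\*x^{(1)})$ has mean $\tfrac12\mu(\*x^{(2)}) = \tfrac{n}{2(n+1)}\bar y_n$, with variance $\tfrac{3n/4+1}{n+1} \le 1$. Hence TS keeps favouring the pulled arm exactly when its empirical mean is large and \emph{positive}.

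Your revised event takes $f(\*x^{(2)}) \approx -\sqrt{c\log T}$ and a single large noise $\epsilon_1 = \Theta(\sqrt{\log T})$. Then $n\bar y_n \approx n f(\*x^{(2)}) + \epsilon_1 + O(\sqrt n)$ turns negative after $O(1)$ pulls. From then on the mean of $D$ is about $\tfrac12 f(\*x^{(2)}) = -\Theta(\sqrt{\log T})$, and TS switches to $\*x^{(1)}$ except with probability $T^{-\Omega(1)}$. So $\mu_t(\*x^{(2)})$ does not stay above $\mu_t(\*x^{(1)})$; the inequality reverses almost immediately.

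The same sign error already broke Step 1, where $f(\*x^{(2)})\in[-2,-1]$. It also led you to the incorrect conclusion that forcing the suboptimal arm at every round cannot work. Both obstacles you list disappear once the sign is right: the drift toward the true mean, and the constant per-round failure probability compounding to exponential decay.

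\textbf{The fix, which is the paper's argument.} Put the pulled suboptimal arm high and the optimal arm even higher. In your labelling, take $f(\*x^{(2)}) \ge 4\sqrt{\log T}$ and $f(\*x^{(1)}) \ge f(\*x^{(2)}) + 1$.
\begin{itemize}
\item \emph{Prior cost.} Write $f(\*x^{(i)}) = Z_0 + Z_i$ with $Z_0, Z_1, Z_2$ i.i.d.\ $\cN(0,1/2)$. This event then has probability $\gtrsim T^{-17}$, which is where the polynomial price comes from.
\item \emph{No noise tilt needed.} You do not need any event on $\epsilon_1$. The data from the suboptimal arm are now consistent with the wrong belief: $\bar y_n \approx f(\*x^{(2)})$ keeps the mean of $D$ at roughly $2\sqrt{\log T}$. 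The optimal arm, never observed, is only ever estimated as about half of the suboptimal one.
\item \emph{Noise event.} Condition on the constant-probability event that all running noise averages stay $\ge -1$; your event $G$ would also do, with a slightly larger constant in front of $\sqrt{\log T}$.
\item \emph{Per-round probabilities.} On this event, each round with $t \ge 10$ picks the suboptimal arm with probability at least $\Phi(\sqrt{2\log(T/2)}) \ge 1 - 1/T$. Each of the first few rounds does so with probability at least $1/2$, since $\bar y_t \ge 0$.
\item \emph{Conclusion.} The product over $\lceil T/2\rceil$ rounds is bounded below by a constant. Since the gap is at least $1$, this gives $R_T \ge T/2$ with probability $\gtrsim T^{-17}$.
\end{itemize}
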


This theorem implies a purely theoretical limitation that we cannot obtain $O(\log(1 / \delta))$ cumulative regret upper bounds for GP-TS in general.
This is because we can show a contradiction from Theorem~\ref{theo:TS_HP_LB} if we set $\delta = c_1 / T^{c_2}$.
On the other hand, in practice, GP-TS often shows superior performance.
Indeed, since the constant $c_2 = 17$ in Theorem~\ref{theo:TS_HP_LB}, shown in Appendix~\ref {sec:proof_LB}, is highly large, Theorem~\ref{theo:TS_HP_LB} does not necessarily suggest the practical instability of GP-TS.
This result motivates us to improve the exponent of $1 / \delta$ in the regret upper bound.
See the following proof sketch and Appendix~\ref{sec:proof_LB} for the proof.

\paragraph{Proof sketch.}
We derive a lower bound on probability
\begin{align*}
    \Pr(E_f \land E_{\epsilon} \land E_{\rm TS}) 
    \geq c_1 / T^{c_2},
\end{align*}
where the events are defined as follows:
\begin{align*}
    E_f &= \left\{ f(\*x^{(1)}) \geq 4 \sqrt{ \log T} \text{ and } f(\*x^{(2)}) \geq f(\*x^{(1)}) + 1 \right\},\\
    E_{\epsilon} &= \left\{ \forall t \in [\lceil T/2 \rceil], \frac{1}{t} \sum_{i=1}^t \epsilon_i \geq - 1 \right\}, \\
    E_{\rm TS} &= \left\{ \forall t \in [\lceil T/2 \rceil], \*x_t = \*x^{(1)} \right\},
\end{align*}
for which we leverage Lemma~5.1 of \citep{takeno2025-regret} shown in Lemma~\ref{lem:sumGauss_lower}.
Obviously, under these events, GP-TS suffers from $T / 2$ regret by querying the suboptimal arm $\*x^{(1)}$ for $T / 2$ times.

\paragraph{Extendability of Theorem~\ref{theo:TS_HP_LB}.}
Extending Theorem~\ref{theo:TS_HP_LB} to arbitrary algorithms is difficult.
On the other hand, we expect similar results to hold for algorithms without tuning parameters that promote exploration, such as the confidence width parameter in GP-UCB.
Indeed, the proof of Theorem~\ref{theo:TS_HP_LB} is inspired by Theorem 4.6 in \citep{takeno2025-regret}, which analyzes the regret lower bound of GP-UCB with a fixed (non-increasing) confidence width parameter.
Therefore, we conjecture that, without some tuning parameter or modification of the algorithm to promote exploration, the BO algorithms cannot achieve $O(\log (1 / \delta))$ cumulative regret upper bound in general, though solving it rigorously still remains open.

\paragraph{Contradictions from the existing study.}
\citet{bayrooti2025reinforcement,bayrooti2025noregret} (for example, Remark 4 of \citep{bayrooti2025noregret}) have reported the $O\bigl(\sqrt{T \gamma_T} \log(T / \delta)\bigr)$ regret upper bound that appears to contradict Theorem~\ref{theo:TS_HP_LB}.
Their analysis in the Bayesian setting closely follows the proof technique of \citep{Chowdhury2017-on}, originally developed for the frequentist setting.
We conjecture that this transfer may not be fully justified, as the randomness of $f$ could affect the validity of the argument.
In particular, Eq.~(11) in \citep{bayrooti2025reinforcement} and Eq.~(18) in \citep{bayrooti2025noregret} may fail to hold in general.
%
This is because, following the definitions in \citep{bayrooti2025noregret}, the realization $\xi_{k-1}(\*s_{h, k}, \*b_{h, k})$ is not necessarily smaller than the conditional expectation $\EE\sbr{\xi_{k-1}(\*s_{h, k}, \*a_{h, k}) \mid \*a_{h, k} \notin \cS_{h, k}}$ since the saturated set $\cS_{h, k}$ and $\*b_{h, k} = \argmin_{\*a \in \cA \backslash \cS_{h, k}} \xi_{k-1}(\*s_{h, k}, \*a)$ are random.
%
%
On the other hand, we believe this issue can be addressed through suitable modifications to the analysis and the algorithm, such as variance inflation, as in \citep{Chowdhury2017-on}, at least for our problem setup, though it remains necessary to verify whether the proof technique of \citep{Chowdhury2017-on} extends consistently for the problem setup in \citep{bayrooti2025noregret,bayrooti2025reinforcement}.

\subsection{Improved Regret Upper Bounds Regarding $\delta$}

We confirmed that GP-TS cannot achieve $O(\log(1 / \delta))$ upper bounds with respect to $\delta$ in general by Theorem~\ref{theo:TS_HP_LB}.
The best known dependence on $\delta$ in the sublinear cumulative regret upper bound for GP-TS is $O(\sqrt{T \gamma_T \log T} / \delta)$ \citep{Russo2014-learning,takeno2024-posterior}.
The following theorem tightens the dependence on $\delta$ by showing the upper bound of the second moment of cumulative regret:
\begin{theorem}[Informal]
    Suppose that Assumption~\ref{assump:GP} and $|\cX| < \infty$ hold or Assumptions~\ref{assump:GP} and \ref{assump:continuous_X} hold.
    Then, if Algorithm~\ref{alg:TS} runs, the following holds:
    \begin{align*}
        \EE\left[ R_T^2 \right] = O(T \gamma_T \log T).
    \end{align*}
    Thus, from Markov's inequality, we have
    \begin{align*}
        \Pr \left( R_T = O\left( \sqrt{\frac{T \gamma_T \log T}{\delta}} \right) \right) \geq 1 - \delta,
    \end{align*}
    for any $\delta \in (0, 1)$.
    \label{theo:CumulativeTS_informal}
\end{theorem}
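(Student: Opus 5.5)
We bound $\EE[R_T^2]$ by splitting the instantaneous regret in the usual TS fashion and applying Cauchy--Schwarz to the cross terms. Fix a confidence width $\beta_t = 2\log(|\cX| t^2/\sqrt{2\pi})$ in the finite case; in the continuous case, use a discretization $\cX_t$ of size $\tau_t^d$ with $\tau_t \propto t^2$. Write $U_t(\*x) = \mu_{t-1}(\*x) + \beta_t^{1/2}\sigma_{t-1}(\*x)$ and the instantaneous regret $r_t = f(\*x^*) - f(\*x_t)$. Decompose
\begin{align*}
    r_t = \underbrace{f(\*x^*) - U_t(\*x^*)}_{A_t} + \underbrace{U_t(\*x^*) - U_t(\*x_t)}_{B_t} + \underbrace{U_t(\*x_t) - f(\*x_t)}_{C_t}.
\end{align*}
Then $R_T^2 \le 3\bigl[(\sum_t A_t)^2 + (\sum_t B_t)^2 + (\sum_t C_t)^2\bigr]$.

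\textbf{Term $C_t$.} We have $C_t \le 2\beta_t^{1/2}\sigma_{t-1}(\*x_t) + \bigl(\mu_{t-1}(\*x_t) - f(\*x_t) - \beta_t^{1/2}\sigma_{t-1}(\*x_t)\bigr)$. The deterministic part is controlled almost surely by Cauchy--Schwarz and the standard elliptical potential bound,
\begin{align*}
    \Bigl(\sum_t \beta_t^{1/2}\sigma_{t-1}(\*x_t)\Bigr)^2 \le \beta_T T \sum_t \sigma_{t-1}^2(\*x_t) = O(T\beta_T\gamma_T).
\end{align*}
This gives the target rate $O(T\gamma_T\log T)$ directly, without taking expectations. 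The residual part is a sum of positive parts $(Z\sigma - \beta^{1/2}\sigma)_+$ with $Z$ conditionally standard normal given $\cD_{t-1}$. Bounding the square of the sum by $T\sum_t(\cdot)^2$, a Gaussian tail computation gives conditional second moments of order $e^{-\beta_t/2}$, which are $O(1/t^2)$ and hence summable.

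\textbf{Term $A_t$.} $A_t^+$ is handled in exactly the same way, using a union bound over $\cX$ (or $\cX_t$) inside the squared positive part, i.e., $\EE[\max_{\*x}((f-U_t)(\*x))_+^2]$. In the continuous case we add the discretization error. Lemma~\ref{lem:Lipschitz_std} controls the $\sigma_{t-1}$ part of $U_t$, and the Lipschitz constant of $f$ under Assumption~\ref{assump:continuous_X} controls the $f$ part. For the latter, the tail bound $a e^{-L^2/b^2}$ gives $\EE[L^2] < \infty$, so the discretization contributes $O(\sum_t 1/t^2)$ inside the squared sum.

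\textbf{Term $B_t$.} This term is the crux, since it is where the TS property $\*x^* \mid \cD_{t-1} \overset{d}{=} \*x_t \mid \cD_{t-1}$ enters. In the first-moment analysis $\EE[B_t] = 0$, but a square requires more. I would expand
\begin{align*}
    \Bigl(\sum_t B_t\Bigr)^2 = \sum_t B_t^2 + 2\sum_{s<t} B_s B_t.
\end{align*}
The diagonal terms satisfy $|B_t| \le |\mu_{t-1}(\*x^*) - \mu_{t-1}(\*x_t)| + \beta_t^{1/2}(\sigma_{t-1}(\*x^*) + \sigma_{t-1}(\*x_t))$. Their conditional second moments are $O(\beta_t\,\EE[\sigma_{t-1}^2(\*x_t) \mid \cD_{t-1}])$ plus a posterior-variance bound for the mean difference. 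The mean difference is controlled by $\EE[(f(\*x^*) - \mu_{t-1}(\*x^*))^2]$-type quantities bounded via maxima of Gaussians. For the cross terms, the plan is to condition on $\cD_{t-1}$, so that $\EE[B_t \mid \cD_{t-1}] = 0$ by the TS property. The difficulty is that $B_s$ for $s<t$ is not $\cD_{t-1}$-measurable, because it involves $\*x^*$, which is $f$-dependent. To handle this, I would rather avoid the cross terms altogether. The cleanest route is to rewrite $\sum_t B_t = \sum_t \bigl(U_t(\*x^*) - \EE[U_t(\*x_t) \mid \cD_{t-1}]\bigr) + (\text{martingale difference sum})$.

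\textbf{Main obstacle.} The martingale part has second moment equal to the sum of conditional variances, which is $O(\sum_t \beta_t \EE\sigma_{t-1}^2) = O(T\beta_T\gamma_T)$ after Cauchy--Schwarz. The first part is the main obstacle: $U_t(\*x^*)$ versus its posterior expectation, squared and summed. I would bound it through $U_t(\*x^*) \le \max_{\*x} U_t(\*x)$ and $\ge \min_{\*x}$. I would then use that $\max_{\*x} f - \EE[\max_{\*x} f \mid \cD_{t-1}]$ has conditional variance at most $\max_{\*x}\sigma_{t-1}^2(\*x)$ by Gaussian concentration (Borell--TIS), yielding again $O(T\gamma_T\log T)$ after summing.

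Combining the three terms gives $\EE[R_T^2] = O(T\gamma_T\log T)$, and Markov's inequality applied to $R_T^2$ then gives the high-probability statement.
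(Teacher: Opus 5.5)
\textbf{The gap is in your handling of $\sum_t B_t$.} This is exactly where the proof has to do its work.

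Your ``first part'' $V_t = U_t(\*x^*) - \EE[U_t(\*x^*) \mid \cD_{t-1}]$ is not a martingale difference sequence. Every $V_s$ depends on $\*x^*$, so $\EE[V_s V_t] \neq 0$ in general for $s<t$, and the rewriting only relocates the cross terms you wanted to avoid.

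The tools you then propose do not control $V_t$:
\begin{itemize}
\item Sandwiching $U_t(\*x^*)$ between $\min_{\*x} U_t(\*x)$ and $\max_{\*x} U_t(\*x)$ bounds $|V_t|$ by the range of $U_t$ over $\cX$. That range is of constant order and does not shrink with $t$.
\item Borell--TIS is a statement about $\max_{\*x} f$, not about $U_t(\*x^*)$, and in any case it only yields the domain-wide quantity $\max_{\*x \in \cX}\sigma_{t-1}^2(\*x)$. The information-gain bound controls $\sum_t \sigma_{t-1}^2(\*x_t)$ at queried points, and in expectation $\sum_t \sigma_{t-1}^2(\*x^*)$ via the TS property. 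It does not control the maximum over the whole domain. Inputs that GP-TS rarely or never samples (e.g., many weakly correlated arms) keep posterior variance of constant order. Then $\sum_t \max_{\*x}\sigma_{t-1}^2(\*x)$ can be of order $T$, and you end with an $O(T^2)$ bound rather than $O(T\gamma_T\log T)$.
\end{itemize}

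Your martingale part has a related hole. You assert $\mathrm{Var}(U_t(\*x_t)\mid\cD_{t-1}) = O(\beta_t\EE[\sigma_{t-1}^2(\*x_t)\mid\cD_{t-1}])$, but the spread of $\mu_{t-1}(\*x_t)$ over the random maximizer is not obviously small. Proving it needs the idea below.

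\textbf{The fix is that no cross terms are needed.} The target is the square of the first-moment rate, so the crude bound $\bigl(\sum_{t=1}^T a_t\bigr)^2 \le T\sum_{t=1}^T a_t^2$ you already use for $A_t$ and $C_t$ suffices throughout. It only requires each $a_t$ to have second moment $O(\beta_t\EE[\sigma_{t-1}^2(\*x_t)])$ plus a summable residual. In fact $\EE[B_t^2\mid\cD_{t-1}] = 2\,\mathrm{Var}(U_t(\*x_t)\mid\cD_{t-1})$, so your two parts have identical conditional second moments and the split buys nothing.

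For $B_t$, the needed bound comes from the TS sample path. Since $g_t(\*x_t)\ge g_t(\*x^*)$,
\begin{align*}
    B_t \le \beta_t^{1/2}\sigma_{t-1}(\*x^*) + \bigl(\mu_{t-1}(\*x^*) - g_t(\*x^*)\bigr) + \bigl(g_t(\*x_t) - U_t(\*x_t)\bigr)_+ .
\end{align*}
Here $\EE[(\mu_{t-1}(\*x^*) - g_t(\*x^*))^2] = \EE[\sigma_{t-1}^2(\*x^*)] = \EE[\sigma_{t-1}^2(\*x_t)]$, using $\*x^*\indep g_t\mid\cD_{t-1}$ and the identical conditional laws of $\*x^*$ and $\*x_t$. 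The positive part is handled by the same union-bound Gaussian tail you use for $A_t$, applied to $g_t$.

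Because $R_T\ge 0$, only upper bounds are needed. Bound $R_T$ above by the per-round sum of the three terms in the display plus $(f(\*x^*) - U_t(\*x^*))_+ + \beta_t^{1/2}\sigma_{t-1}(\*x_t) + (\mu_{t-1}(\*x_t) - f(\*x_t))$. Then square this nonnegative upper bound and apply Cauchy--Schwarz with factor $6T$.

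Doing these replacements before squaring also fixes a sign issue in your first display. $A_t$ is typically negative (about $-\beta_t^{1/2}\sigma_{t-1}(\*x^*)$), so $(\sum_t A_t)^2$ is not bounded by $(\sum_t A_t^+)^2$ as your treatment of $A_t$ implicitly assumes.

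This is precisely the paper's proof. The continuous case adds the squared discretization errors and the Lipschitz bound on $\sigma_{t-1}$, essentially as you describe.
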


Theorem~\ref{theo:CumulativeTS_informal} tightens the term $1 / \sqrt{\delta}$ from the existing result $\tilde{O}(\sqrt{T \gamma_T} / \delta)$ \citep{Russo2014-learning,takeno2024-posterior} though the additional $\log T$ factor is required in the case of $|\cX| < \infty$ compared with \citep{takeno2024-posterior}.
Furthermore, the upper bound of the second moment itself implies the concentration property of the cumulative regret incurred by GP-TS.
On the other hand, we have not revealed the tightness for $\delta$.
Showing the optimal bound for $\delta$, which maintains the sublinearity in terms of $T$, is an interesting future work.
See the following proof sketch and Appendix~\ref{sec:proof_CR_UB_delta} for the formal statement of Theorem \ref{theo:CumulativeTS_informal} and the proof.

\paragraph{Proof sketch.}
The high-probability result follows from the upper bound on the second moment and Markov's inequality.
The proof for the upper bound of the second moment mostly follows the existing analysis \citep{Russo2014-learning,takeno2024-posterior}.
For simplicity, we first focus on the case of $|\cX| < \infty$.
From Cauchy--Schwarz inequality and the properties of GP-TS that $(\*x^* \mid \cD_{t-1})$ and $(\*x_t \mid \cD_{t-1})$ are identically distributed and $\*x^*  \indep \*x_t \mid \cD_{t-1}$, we can decompose the regret similarly to \citep{Russo2014-learning}:
\begin{align*}
    \EE[R_T^2] \lesssim 
    T \sum_{t = 1}^T \EE \Biggl[ \bigl( f(\*x^*) - U_t(\*x^*) \bigr)_+^2
            + \beta_t \sigma_{t-1}^2(\*x_t)
            \Biggr],
\end{align*}
where 
$\lesssim$ hides $O(1)$ factors, 
$(c)_+ = \max\{0, c \}$, and
$U_t(\*x) = \mu_{t-1}(\*x) + \beta_t^{1/2} \sigma_{t-1}(\*x)$ with $\beta_t = O(\log t)$.
The sum of posterior variances can be bounded from above by $\gamma_T$ \citep{Srinivas2010-Gaussian}.
On the other hand, although the upper bound $\sum_{t = 1}^T \EE \bigl[ \bigl( f(\*x^*) - U_t(\*x^*) \bigr)_+ \bigr] = O(1)$ is known \citep{Russo2014-learning}, we need an upper bound of $\EE \bigl[\bigl( f(\*x^*) - U_t(\*x^*) \bigr)_+^2 \bigr]$ to obtain the upper bound of $R_T^2$.
Thus, we provide its upper bound in Lemma~\ref{lem:expected_resisual}.
For continuous input domains, for a similar reason, we provide the upper bound of squared discretization error in Lemma~\ref{lem:squared_discretized_error}.


\subsection{Lenient Regret Upper Bounds}

The following theorem provides the expected lenient regret upper bound for GP-TS:
\begin{theorem}
    Fix  $\Delta > 0$ and $\delta \in (0, 1)$ and let $\cT = \{ t \mid f(\*x^*) - f(\*x_t) \geq \Delta\}$.
    Suppose that Assumption~\ref{assump:GP} and $|\cX| < \infty$ hold or Assumptions~\ref{assump:GP} and \ref{assump:continuous_X} hold.
    Then, if Algorithm~\ref{alg:TS} runs, the following inequalities hold:
    \begin{align*}
        \EE[|\cT|] &\leq T_{\rm max}, \\
        \EE\left[LR_T \right] &= O\left( \sqrt{\beta_T T_{\rm max} \tilde{\gamma}_{T_{\rm max}}} \right).
    \end{align*}
    Here, $T_{\rm max}$ satisfies:
    \begin{align*}
        T_{\rm max} =
        \begin{cases}
            O \left( \frac{\beta_T d \log T}{\Delta^2} \right) & \text{if $k = k_{\rm Lin}$}, \\
            O \left( \frac{\beta_T \log^{d + 1} T}{\Delta^2} \right) & \text{if $k = k_{\rm SE}$}, \\
            O \left( \left( \frac{\beta_T \log^{\frac{4\nu + d}{2\nu + d}} T}{\Delta^2} \right)^{1 + \frac{d}{2\nu}} \right) & \text{if $k = k_{\rm Mat}$},
        \end{cases}
    \end{align*}
    where $\beta_T = O\bigl(\log(|\cX| T)\bigr)$ and $\nu > 1 / 2$ for the case of $|\cX| < \infty$
    and $\beta_T = O\bigl( d\log(d T) \bigr)$ and $\nu > 1$  for the case of continuous $\cX$.
    \label{theo:TS_LR_UB_specific}
\end{theorem}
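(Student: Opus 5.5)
The plan is to reduce both bounds to an expected count of "bad" rounds, and to control that count through the elliptical potential argument together with the concavity of $\tilde{\gamma}_t$.

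\textbf{Regret decomposition.} Let $U_t(\*x) = \mu_{t-1}(\*x) + \beta_t^{1/2}\sigma_{t-1}(\*x)$ with $\beta_t = O(\log(|\cX| t))$. Following \citet{Russo2014-learning}, two facts drive the argument. First, $\*x_t$ and $\*x^*$ are identically distributed given $\cD_{t-1}$. Second, $U_t$ is $\cD_{t-1}$-measurable. Together these give
\begin{align*}
    \EE\bigl[f(\*x^*) - f(\*x_t) \mid \cD_{t-1}\bigr]
    = \EE\bigl[f(\*x^*) - U_t(\*x^*) \mid \cD_{t-1}\bigr] + \EE\bigl[U_t(\*x_t) - f(\*x_t) \mid \cD_{t-1}\bigr].
\end{align*}
On the event $t \in \cT$ the gap is at least $\Delta$, so $\mathbbm{1}[t \in \cT] \leq \bigl(f(\*x^*) - f(\*x_t)\bigr)/\Delta$.

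This alone is not enough, because summing it reproduces $R_T/\Delta$. I will instead use a pointwise argument. On the high-probability event $\{\forall t,\ |f(\*x_t) - \mu_{t-1}(\*x_t)| \leq \beta_t^{1/2}\sigma_{t-1}(\*x_t)\}$, together with a similar event for $g_t$ at $\*x_t$ and $\*x^*$, we have
\begin{align*}
    f(\*x^*) - f(\*x_t) \leq \bigl(f(\*x^*) - g_t(\*x^*)\bigr) + \bigl(g_t(\*x_t) - f(\*x_t)\bigr).
\end{align*}
The second term is at most $2\beta_t^{1/2}\sigma_{t-1}(\*x_t)$. The first term is only controlled in expectation, since $g_t(\*x^*)$ and $f(\*x^*)$ are not comparable pointwise. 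So I plan to work directly with $\EE|\cT|$ as follows. Write
\begin{align*}
    \mathbbm{1}[t \in \cT] \leq \mathbbm{1}\bigl[2\beta_t^{1/2}\sigma_{t-1}(\*x_t) \geq \Delta/2\bigr] + \mathbbm{1}\bigl[f(\*x^*) - U_t(\*x^*) \geq \Delta/2\bigr] + (\text{failure of confidence at } \*x_t).
\end{align*}
Here I use $g_t(\*x_t) \geq g_t(\*x^*)$ and the fact that $g_t(\*x^*)$ given $\cD_{t-1}$ is distributed like $f(\*x^*)$. Alternatively, compare $U_t(\*x_t) \geq U_t(\*x^*)$ after exchanging $\*x_t$ with $\*x^*$ in expectation. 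The last two indicators have expected sums $O(1)$ by the Gaussian tail with $\beta_t$ chosen so that the tail is at most $1/t^2$. For continuous $\cX$, the discretization of Lemma~\ref{lem:Lipschitz_std} and Assumption~\ref{assump:continuous_X} adds an $O(1)$ term, exactly as in \citep{takeno2024-posterior}.

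\textbf{Counting bad rounds (main step).} Let $N$ be the number of rounds with $\sigma_{t-1}(\*x_t) \geq \Delta/(4\beta_T^{1/2})$. By the standard elliptical potential argument, $\sum_{t \in S}\sigma_{t-1}^2(\*x_t) \leq C\gamma_{|S|}$ for any such set $S$, where the posterior variance may be taken relative only to the data in $S$, since more data only lowers the variance. Hence
\begin{align*}
    N\Delta^2/(16\beta_T) \leq C\tilde{\gamma}_N.
\end{align*}
Solving this fixed-point inequality with the explicit forms of $\tilde{\gamma}$ gives $N \leq T_{\max}$ for each kernel. For Mat\'ern, $N^{1 - d/(2\nu+d)} \lesssim \beta_T \log^{\cdot} T/\Delta^2$, which yields the exponent $(2\nu+d)/(2\nu) = 1 + d/(2\nu)$. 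Here "$\lesssim$" means "up to constant factors", and the $\log^{\cdot} T$ factor is the polylogarithmic factor from $\tilde{\gamma}_N$ (exponent $\frac{4\nu+d}{2\nu+d}$). Since this bound is deterministic given the sequence, $\EE|\cT| \leq T_{\max}$ after absorbing the $O(1)$ terms.

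\textbf{Lenient regret.} I decompose $LR_T = \sum_{t \in \cT}(f(\*x^*) - f(\*x_t))$ as above and apply Cauchy--Schwarz over $t \in \cT$:
\begin{align*}
    \sum_{t \in \cT}\beta_t^{1/2}\sigma_{t-1}(\*x_t) \leq \sqrt{\beta_T|\cT|\sum_{t \in \cT}\sigma_{t-1}^2(\*x_t)} \leq \sqrt{C\beta_T|\cT|\tilde{\gamma}_{|\cT|}}.
\end{align*}
The residual terms contribute $O(1)$. Taking expectations, $x \mapsto \sqrt{x\tilde{\gamma}_x}$ is concave (an expected-elliptical-potential lemma using concavity of $\tilde{\gamma}$), so Jensen gives $O\bigl(\sqrt{\beta_T T_{\max}\tilde{\gamma}_{T_{\max}}}\bigr)$.

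\textbf{Main obstacle.} The delicate point is justifying the indicator split for $\mathbbm{1}[t \in \cT]$, because $\*x_t$ and $\*x^*$ are only equal in distribution, not pathwise. The probability-matching swap works for expectations of functions of $\*x_t$ given $\cD_{t-1}$, but not for indicators involving $f(\*x^*) - f(\*x_t)$ jointly. My first attempt is to bound instead $\EE\sum_t \mathbbm{1}\bigl[U_t(\*x_t) - f(\*x_t) \geq \Delta/2\bigr]$ together with $\EE\sum_t \mathbbm{1}\bigl[f(\*x^*) - U_t(\*x^*) \geq \Delta/2\bigr]$. To do this I will use the event inclusion
\begin{align*}
    \{f(\*x^*) - f(\*x_t) \geq \Delta\} \subseteq \{f(\*x^*) - g_t(\*x^*) \geq \Delta/2\} \cup \{g_t(\*x_t) - f(\*x_t) \geq \Delta/2\},
\end{align*}
which follows from $g_t(\*x_t) \geq g_t(\*x^*)$. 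The first event has probability bounded via the conditional Gaussian law of $g_t - f$ at the random point $\*x^*$, handled by a union bound over $\cX$ with the $\beta_t$ tail. The second event reduces to the variance-count argument above.
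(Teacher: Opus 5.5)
There is a genuine gap on the $\*x^*$ side, and it affects both the count $\EE|\cT|$ and the lenient-regret bound.

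\textbf{The split is false for GP-TS.} Your inequality $\mathbbm{1}[t\in\cT]\leq\mathbbm{1}[2\beta_t^{1/2}\sigma_{t-1}(\*x_t)\geq\Delta/2]+\mathbbm{1}[f(\*x^*)-U_t(\*x^*)\geq\Delta/2]+(\text{failure})$ follows from $U_t(\*x_t)\geq U_t(\*x^*)$. That is the GP-UCB selection rule, not TS. Suppose $\*x^*$ is poorly explored, $f(\*x^*)\leq U_t(\*x^*)$, and $g_t(\*x^*)$ happens to be drawn low. Then TS can select a well-explored $\*x_t$ whose gap is as large as roughly $2\beta_t^{1/2}\sigma_{t-1}(\*x^*)$, possibly exceeding $\Delta$, while none of your indicators fire.

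\textbf{The fallback does not repair it.} The event $\{f(\*x^*)-g_t(\*x^*)\geq\Delta/2\}$ is not a tail event. Given $\cD_{t-1}$, $f$ and $g_t$ are independent draws from the same posterior, so the union bound over $\cX$ with $\beta_t$ only gives $f(\*x^*)-g_t(\*x^*)\leq 2\beta_t^{1/2}\sigma_{t-1}(\*x^*)$. The event therefore reduces to $\{\sigma_{t-1}(\*x^*)\geq\Delta/(4\beta_t^{1/2})\}$. Its probability is not small while $\*x^*$ remains poorly explored, and your count $N$ never sees it because $N$ involves only queried points. With that union bound you do get the pathwise inequality $f(\*x^*)-f(\*x_t)\leq 2\beta_t^{1/2}(\sigma_{t-1}(\*x^*)+\sigma_{t-1}(\*x_t))$. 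So pointwise comparability is not the obstacle; the uncontrolled width at $\*x^*$ is.

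\textbf{Same hole in the lenient-regret step.} Once the sum is restricted to $t\in\cT$, the Russo--Van Roy cancellation $\EE[U_t(\*x^*)\mid\cD_{t-1}]=\EE[U_t(\*x_t)\mid\cD_{t-1}]$ is unavailable, because $\mathbbm{1}[t\in\cT]$ depends jointly on $f$, $\*x^*$, and $\*x_t$. A term $2\beta_T^{1/2}\EE[\sum_{t\in\cT}\sigma_{t-1}(\*x^*)]$ survives, and it is not $O(1)$.

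\textbf{The missing idea is an exchangeability argument for the widths at $\*x^*$.} The paper uses that, given $\cD_{t-1}$, the pairs $(f,\*x^*)$ and $(g_t,\*x_t)$ are i.i.d. Hence $\sigma_{t-1}^2(\*x^*)\mathbbm{1}[f(\*x^*)-f(\*x_t)\geq\Delta]$ has the same conditional law as $\sigma_{t-1}^2(\*x_t)\mathbbm{1}[g_t(\*x_t)-g_t(\*x^*)\geq\Delta]$. The latter is an elliptical-potential sum over queried points on an index set $\cT_g$ with $\EE|\cT_g|=\EE|\cT|$. Jensen and the concavity of $\tilde{\gamma}$ then give $\EE[\sum_{t\in\cT}\sigma_{t-1}^2(\*x^*)]\leq C_1\tilde{\gamma}_{\EE|\cT|}$, and the analogue for $\sigma_{t-1}$. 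The paper combines this with $|\cT|\leq\sum_{t\in\cT}(f(\*x^*)-f(\*x_t))^2/\Delta^2$ on the two-sided confidence event for $f$ and $g_t$. This yields the self-referential bound $\EE|\cT|\lesssim\beta_T\tilde{\gamma}_{\EE|\cT|}/\Delta^2$, i.e., $\EE|\cT|\leq T_{\rm max}$.

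\textbf{A cheaper repair of your threshold framework.} The indicator $\mathbbm{1}[\sigma_{t-1}(\*x^*)\geq c]$ depends only on $(\*x^*,\cD_{t-1})$. Probability matching therefore gives $\EE[\#\{t:\sigma_{t-1}(\*x^*)\geq c\}]=\EE[N]$, hence $\EE|\cT|\leq 2N_{\rm max}+O(1)$, where $N_{\rm max}$ is your deterministic bound on $N$. For the lenient regret, split $\sigma_{t-1}(\*x^*)$ at the same threshold. Swap the part $\sigma_{t-1}(\*x^*)\mathbbm{1}[\sigma_{t-1}(\*x^*)\geq c]$ the same way, and bound the rest by $cN$. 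Some such swap is indispensable; as written, your proposal asserts this contribution is $O(1)$, which is false.
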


Theorem~\ref{theo:TS_LR_UB_specific} shows that GP-TS attains a polylogarithmic upper bound on the expected lenient regret, matching the order of the existing high-probability upper bounds for GP-UCB \citep{cai2021-lenient,iwazaki2025improved} in terms of $T$.
In the next section, we leverage Theorem~\ref{theo:TS_LR_UB_specific} to derive an improved upper bound on the cumulative regret with respect to $T$.
We employ a different proof technique than that of \citep{cai2021-lenient,iwazaki2025improved} to obtain a bound on the expectation, as an analysis of expected lenient regret has not previously appeared in the BO literature.
We expect that our proof technique can be extended to obtain the Bayesian expected lenient regret upper bound for GP-UCB.
See the following proof sketch and Appendix~\ref{sec:proof_LR_UB} for the proof.

\paragraph{Proof sketch.}
For simplicity, we here denote the case of $|\cX| < \infty$.
We employ a proof inspired by that of the elliptical potential count lemma \citep{flynn2025-tighter,iwazaki2025-improvedGPbandit}.
That is, we leverage the inequality
\begin{align*}
    |\cT| 
    &= \sum_{t \in \cT} \min \left\{1, \frac{f(\*x^*) - f(\*x_t)}{\Delta} \right\} \\
    &\leq \sum_{t \in \cT} \frac{\bigl(f(\*x^*) - f(\*x_t)\bigr)^2}{\Delta^2},
\end{align*}
which holds because $\frac{f(\*x^*) - f(\*x_t)}{\Delta} \geq 1$ for all $t \in \cT$.
Then, by applying the properties of the expectation, GP-TS, and GPs, we can obtain
\begin{align*}
    \EE[|\cT|] \lesssim \frac{\beta_T}{\Delta^2} \left( \EE\left[ \sum_{t \in \cT} \sigma_{t-1}^2(\*x^*) \right] + \EE\left[ \sum_{t \in \cT} \sigma_{t-1}^2(\*x_t) \right] \right),
\end{align*}
where $\lesssim$ hides $O(1)$ factors.
Thus, we need to obtain an upper bound of $\EE\left[ \sum_{t \in \cT} \sigma_{t-1}^2(\*x^*) \right] + \EE\left[ \sum_{t \in \cT} \sigma_{t-1}^2(\*x_t) \right]$, which is nontrivial, especially because $\*x_t$ and $\*x^*$ are not necessarily identically distributed given the condition $\cD_{t-1}$ and $t \in \cT$.
Hence, we show Lemma~\ref{lem:ExpectedEllipticalPotentialLemma}, which states
\begin{align*}
    \EE\left[ \sum_{t \in \cT} \sigma_{t-1}^2(\*x^*) \right] \lesssim \tilde{\gamma}_{\EE[|\cT|]},\\
    \EE\left[ \sum_{t \in \cT} \sigma_{t-1}^2(\*x_t) \right] \lesssim \tilde{\gamma}_{\EE[|\cT|]},
\end{align*}
from which we obtain the upper bound of $\EE[|\cT|]$.
Regarding the upper bound of $\EE\left[LR_T \right]$, we can obtain $\EE\left[LR_T \right] \lesssim \beta_T^{1/2} \bigl( \EE\left[ \sum_{t \in \cT} \sigma_{t-1}(\*x^*) \right] + \EE\left[ \sum_{t \in \cT} \sigma_{t-1}(\*x_t) \right] \bigr) $ by the similar proof as that of \citep{Russo2014-learning,takeno2024-posterior}.
Then, by adopting the following inequalities shown in Lemma~\ref{lem:ExpectedEllipticalPotentialLemma},
\begin{align*}
    \EE\left[ \sum_{t \in \cT} \sigma_{t-1}(\*x^*) \right] \lesssim \sqrt{\EE[|\cT| \tilde{\gamma}_{\EE[|\cT|]}},\\
    \EE\left[ \sum_{t \in \cT} \sigma_{t-1}(\*x_t) \right] \lesssim \sqrt{\EE[|\cT| \tilde{\gamma}_{\EE[|\cT|]}},
\end{align*}
we see that $\EE\left[LR_T \right] \lesssim \sqrt{\beta_T \EE[|\cT| \tilde{\gamma}_{\EE[|\cT|]}}$.
Therefore, combining the upper bound of $\EE[|\cT|]$ concludes the proof.
See Appendix~\ref{sec:lemmas} for the details of Lemma~\ref{lem:ExpectedEllipticalPotentialLemma}.


\subsection{Improved Regret Upper Bound Regarding $T$}

We show the following generalized result before obtaining the regret upper bound specific to GP-TS:
\begin{lemma}
    Let $\cX = [0, r]^d$.
    Suppose Assumptions \ref{assump:GP} and \ref{assump:continuous_X} hold.
    In addition, assume the three conditions in Lemma~\ref{lem:maximizer_f} and the following event $E$ are true:
    \begin{align*}
        E = \left\{\forall t \in \NN, f(\*x^*) - f(\*x_t) \leq 2 \beta_t^{1/2} \sigma_{t-1}(\*x_t) + \frac{1}{t^2} \right\},
    \end{align*}
    where $\{ \beta_t \}_{t \in \NN}$ is some monotonically increasing sequence with $\beta_t = O(\log t)$.
    Then, the following holds:
    \begin{align*}
        R_T = 
        \begin{cases}
            O(\sqrt{T} \log T) & \text{if $k = k_{\rm SE}$}, \\
            \tilde{O}(\sqrt{T}) & \text{if $k = k_{\rm Mat}$ with $\nu > 2$},
        \end{cases}
    \end{align*}
    where the hidden constants may depend on $d, \nu, \ell, r, \sigma^2$, and the constants $c_{\rm sup}, c_{\rm gap}, \rho_{\rm quad}, c_{\rm quad}$ in Lemma~\ref{lem:maximizer_f}.
    \label{lem:general_improved_bound}
\end{lemma}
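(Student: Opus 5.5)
We follow the strategy of \citet{iwazaki2025improved} for GP-UCB, split into a transient phase and a localized phase. The only algorithm-specific input is the event $E$; everything else is deterministic given $f$ and the three conditions of Lemma~\ref{lem:maximizer_f}. Fix a threshold $\Delta_0 := \min\{c_{\rm gap}, c_{\rm quad}\rho_{\rm quad}^2\}/2$. Let $\cT_0 = \{t \le T \mid f(\*x^*) - f(\*x_t) \ge \Delta_0\}$ and split
\begin{align*}
R_T = \sum_{t \in \cT_0} \bigl(f(\*x^*) - f(\*x_t)\bigr) + \sum_{t \notin \cT_0} \bigl(f(\*x^*) - f(\*x_t)\bigr).
\end{align*}

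\textbf{Step 1: the first sum is polylogarithmic.} For $t \in \cT_0$ with $t$ large enough that $1/t^2 \le \Delta_0/2$, event $E$ gives $\Delta_0/2 \le 2\beta_t^{1/2}\sigma_{t-1}(\*x_t)$. Hence $\sigma_{t-1}^2(\*x_t) \ge \Delta_0^2/(16\beta_T)$ for such $t$. By the standard elliptical-potential count argument, $\sum_t \min\{1,\sigma^{-2}\sigma_{t-1}^2(\*x_t)\} \lesssim \gamma_T$ summed only over $t \in \cT_0$, which gives $|\cT_0| \lesssim \beta_T \gamma_{|\cT_0|}/\Delta_0^2$. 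For SE kernels this yields $|\cT_0| = \mathrm{polylog}(T)$. For Mat\'ern kernels it yields $|\cT_0| = \tilde O(1)$ after solving $n \lesssim \log T \cdot n^{d/(2\nu+d)}$. Each term is at most $2c_{\rm sup}$ by condition 2, so the first sum is polylogarithmic.

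\textbf{Step 2: localization.} For $t \notin \cT_0$ we have $f(\*x_t) > f(\*x^*) - c_{\rm gap}/2$. The gap condition (condition 1) combined with continuity implies that the superlevel set $\{\*x \mid f(\*x) \ge f(\*x^*) - \Delta_0\}$ lies in $\cB_2(\rho_{\rm quad};\*x^*)$. This is the step I would check most carefully: a point outside the ball with value that high would force another local maximum with value above $f(\*x^*) - c_{\rm gap}$, contradicting condition 1, provided $\Delta_0$ is also chosen below the minimum of $f(\*x^*) - f$ on the ball's boundary region. Condition 3 then gives $\|\*x_t - \*x^*\|_2^2 \le (f(\*x^*) - f(\*x_t))/c_{\rm quad}$. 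Therefore all localized queries live in a ball around $\*x^*$ of radius determined by their own regret.

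\textbf{Step 3: the localized sum, which is the main obstacle.} We need $\sum_{t \notin \cT_0} r_t = \tilde O(\sqrt T)$ with $r_t = f(\*x^*) - f(\*x_t)$. Using $E$, $\sum_{t\notin\cT_0} r_t \le 2\beta_T^{1/2}\sqrt{T \sum_{t \notin \cT_0}\sigma_{t-1}^2(\*x_t)} + O(1)$. It therefore suffices to show that the posterior-variance sum restricted to queries in the shrinking balls is polylogarithmic, that is, an information gain of points in a small ball. Following \citet{iwazaki2025improved}, I would peel the regret levels dyadically: $\cT_j = \{t \mid r_t \in [2^{-j-1}\Delta_0, 2^{-j}\Delta_0)\}$, with queries in $\cT_j$ lying in a ball of radius $\rho_j \asymp 2^{-j/2}$. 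Within each level, Step 1's counting argument with threshold $2^{-j-1}\Delta_0$ bounds $|\cT_j| \lesssim \beta_T 4^{j}\gamma^{(\rho_j)}_{|\cT_j|}$, where $\gamma^{(\rho)}$ is the MIG over a ball of radius $\rho$. For SE and Mat\'ern kernels, rescaling the ball to unit size shrinks the lengthscale's effective complexity: the eigenvalue decay of the kernel restricted to a ball of radius $\rho$ is faster. This gives $\gamma^{(\rho)}_n \lesssim (\rho^2 n)^{d/(2\nu+d)}\cdot\mathrm{polylog}$ for Mat\'ern with $\nu>2$, and polylog for SE. Then $\sum_j 2^{-j}\Delta_0 |\cT_j|$ is summed up to the level $j^*$ with $2^{-j^*} \approx T^{-1/2}$. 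Levels below $j^*$ contribute at most $T \cdot T^{-1/2}$, and levels above contribute $\tilde O(\sqrt T)$ provided the ball shrinkage compensates the $4^{j}$ factor. Verifying this balance, in particular the exponent arithmetic for Mat\'ern so that only $\nu>2$ (rather than a stronger condition) is needed, is where I expect the real work. Summing Steps 1–3 gives the claimed rates.
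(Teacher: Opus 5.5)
\textbf{The gap: the Mat\'ern balance in Step~3.} This balance is the heart of the lemma. You defer exactly the computation that has to be checked, and the one ingredient you do state there is wrong. Up to logarithmic factors, the MIG over a ball of radius $\rho$ scales as $\rho^{2\nu d/(2\nu+d)} n^{d/(2\nu+d)}$ (Corollary~8 of \citet{iwazaki2025improved}, as used in the paper), not as $(\rho^2 n)^{d/(2\nu+d)}$.

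With your version and $\rho_j^2 \asymp 2^{-j}$, the best level-$j$ count you can derive is $|\cT_j| \lesssim 2^{j(4\nu+d)/(2\nu)}$. Level $j$ then costs about $2^{-j}|\cT_j| \approx 2^{j(2\nu+d)/(2\nu)}$. At $2^j \approx \sqrt{T}$ this is $T^{(2\nu+d)/(4\nu)} \gg \sqrt{T}$ for every $\nu$, so the argument cannot close.

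With the correct exponent it does close. You get $|\cT_j| = \tilde O\bigl(2^{j(4\nu+2d-\nu d)/(2\nu)}\bigr) + \tilde O(4^j)$, where the second term comes from the $1/t^2$ slack in $E$ and the $\log^2 n$ term of the ball MIG. Level $j$ therefore costs $\tilde O\bigl(2^{j(1 - d(\nu-2)/(2\nu))}\bigr) + \tilde O(2^j)$. This is $\tilde O(\sqrt{T})$ for all $2^j \le \sqrt{T}$ precisely when $\nu \ge 2$. There are $O(\log T)$ such levels, and the rounds with regret below $T^{-1/2}$ contribute at most $T \cdot T^{-1/2}$, so the total is $\tilde O(\sqrt{T})$.

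Repaired this way, your dyadic peeling is a genuinely different and somewhat simpler route than the paper's. The paper uses $O(1)$ polynomially spaced thresholds: $\Delta_1 = T^{-\nu/(2(\nu+d))}$ and $\Delta_{i+1} = \tilde\Theta\bigl(T^{-\nu/(2(\nu+d))}\Delta_i^{\nu d/(2(\nu+d))}\bigr)$. It stops at $\Delta_{\bar i} = \tilde O(T^{-1/\nu})$, where Cauchy--Schwarz with $|\cT_{\bar i}| \le T$ already gives $\tilde O(\sqrt{T})$. It then needs a geometric-series argument to show $\bar i = O(1)$ if and only if $\nu > 2$. Dyadic levels replace that argument at the cost of a logarithmic factor. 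As written, however, your proposal contains neither the correct scaling nor the computation.

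\textbf{The SE rate.} Your constant threshold $\Delta_0$ does not give the stated $O(\sqrt{T}\log T)$. All localized rounds then lie in a ball of constant radius, for which the available MIG bound is of order $\log^{d+1} T$. Your global Cauchy--Schwarz in Step~3 therefore gives only $O(\sqrt{T}\log^{(d+2)/2} T)$, and per-level dyadic bounds also overshoot by a logarithmic factor.

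The paper's threshold shrinks with $T$, namely $\Delta = \beta_T\gamma_T/\sqrt{T}$. Rounds with regret at least $\Delta$ number $O\bigl(T/(\beta_T\gamma_T)\bigr)$ and cost $O(\sqrt{T})$ by the lenient-count argument. All other rounds lie in $\cB\bigl(\sqrt{c_{\rm quad}^{-1}\Delta}; \*x^*\bigr)$, whose radius is polynomially small in $T$. Then $\log(1/\eta^2) \approx \tfrac12 \log T$ cancels the $\log^d T$ in the SE ball-MIG bound, leaving $O(\log T)$, and Cauchy--Schwarz over these rounds gives $O(\sqrt{T}\log T)$.

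Your Steps~1 and~2 are fine. Step~2 is exactly Lemma~\ref{lem:existence_near_optimal}: condition~3 handles the boundary of $\cB(\rho_{\rm quad}; \*x^*)$, which is why the threshold must be below $\min\{c_{\rm gap}, c_{\rm quad}\rho_{\rm quad}^2\}$.
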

Since the assumption in Lemma~\ref{lem:general_improved_bound} holds for GP-UCB, Lemma~\ref{lem:general_improved_bound} can be applied to the analysis for GP-UCB.
Notably, our result relaxes the condition on Mat\'ern kernels from $2\nu + d \leq \nu^2$ \citep{iwazaki2025improved} to $\nu > 2$ by the refined analysis, which is discussed as an open question in \citep{iwazaki2025improved}.
Fortunately, although \citet{iwazaki2025improved} conjecture that this relaxation of the condition requires stronger regularity conditions than those in Lemma~\ref{lem:maximizer_f}, we did not require such conditions.
See Appendix~\ref{sec:proof_ImprovedUB_TS} for the detailed proof, though we provide a short description about the difference from the proof of \citep{iwazaki2025improved} and the proof sketch hereafter.

\paragraph{Difference from the proof of \citep{iwazaki2025improved}.}
\citet{iwazaki2025improved} showed the proof that (i) derive $\tilde{O}(1)$ regret upper bound over the input sets distant from $\*x^*$, that is, $\cX \backslash \cB(\rho_{\rm quad}; \*x^*)$, by the lenient regret upper bound with fixed $\Delta$ and (ii) obtain the upper bound of regret incurred in $\cB(\rho_{\rm quad}; \*x^*)$ by utilizing the fact that the inputs $\{\*x_t\}_{t \in \NN}$ concentrate around $\*x^*$ and the MIG over $\cB(\rho; \*x^*)$ becomes small in proportion to $\rho$.
The first difference is that we control $\Delta$ based on $T$ so that the resulting sum of regret is $\tilde{O}(\sqrt{T})$.
The second difference, particularly for the case of Mat\'ern kernels, is that we repeatedly apply the arguments like the lenient regret upper bound and the property that the MIG over $\cB(\rho; \*x^*)$ becomes small in proportion to $\rho$.
From these differences, we can tighten the regret upper bounds in the derivation, thereby simplifying the analysis for SE kernels and relaxing the required condition for Mat\'ern kernels.

\paragraph{Proof sketch.}
We here concentrate on the case of Mat\'ern kernels, since SE kernels can be handled by a simpler analysis.
We divide the index set $[T]$ as
\begin{align*}
    \cT_0 &= \{t \in [T] \mid f(\*x^*) - f(\*x_t) \geq \Delta_1 \}, \\
    \cT_i &= \{t \in [T] \mid \Delta_i \geq f(\*x^*) - f(\*x_t) \geq \Delta_{i+1} \}, \forall i \in [\bar{i} - 1], \\
    \cT_{\bar{i}} &= \{t \in [T] \mid \Delta_{\bar{i}} \geq f(\*x^*) - f(\*x_t) \}, 
\end{align*}
where we discuss the condition of $\bar{i} \in \NN$ and monotonically decreasing $\Delta_2, \dots, \Delta_{\bar{i}} > 0$ later.
Roughly speaking, we tune $\Delta_i$ so that the cumulative regret over the index set $\cT_i$ can be bounded from above by $\tilde{O}(\sqrt{T})$ and confirm that $\bar{i}$ can be set to a finite integer even for such $\Delta_i$.
If $\sum_{t \in \cT_i} f(\*x^*) - f(\*x_t) = \tilde{O}(\sqrt{T})$ for all $i = 0, \dots, \bar{i}$ with $\tilde{i} = O(1)$, then $R_T = \tilde{O}(\bar{i} \sqrt{T}) = \tilde{O}(\sqrt{T})$.
First, we obtain the regret bounds for $\cT_0$ as $\tilde{O}(\sqrt{T})$ with $\Delta_1 = T^{- \tfrac{\nu}{2(\nu + d)}}$ by the similar argument as the lenient regret bound.
Next, for the sufficiently small $\Delta_1$, we can derive $\*x_t \in \cB\left(\sqrt{c_{\rm quad}^{-1} \Delta_i }; \*x^*\right)$ for all $t \in \cT_i$ and $i \in [\bar{i}]$ by the condition 3 of Lemma~\ref{lem:maximizer_f} and Lemmas 4 and 20 of \citep{iwazaki2025improved}.
This fact suggests that the MIG for $\cT_i$ for all $i \in [\bar{i}]$ can be bounded from above in proportion to $\sqrt{c_{\rm quad}^{-1} \Delta_i }$ based on Corollary 8 of \citep{iwazaki2025improved}.
Therefore, we can tighten the upper bound of $|\cT_i|$ for all $i \in [\bar{i} - 1]$.
In addition, we can tighten the upper bound of $\sum_{t \in \cT_i} f(\*x^*) - f(\*x_t)$ for all $i \in [\bar{i}]$.
From these tightened upper bounds, we can see that $\sum_{t \in \cT_i} f(\*x^*) - f(\*x_t) = \tilde{O}(\sqrt{T})$ for all $i \in [\bar{i}]$ if 
\begin{align*}
    \Delta_{\bar{i}} &= O\bigl(T^{- \tfrac{1}{\nu}}\bigr), \\
    \Delta_{i + 1} &= \tilde{O}\left( T^{- \tfrac{\nu}{2 (\nu + d)}} \Delta_i^{\tfrac{\nu d}{2(\nu + d)}} \right) && \text{for all $i \in [\bar{i} - 1]$}.
\end{align*}
Finally, we show that the above condition can be satisfied with $\tilde{i} = O(1)$ if $\nu > 2$ by the arguments based on the formulas for the sum of a geometric series.

From Lemma~\ref{lem:general_improved_bound}, we obtain the following improved cumulative regret upper bound of GP-TS:
\begin{theorem}
    Fix $\delta \in (0, 1)$ and $\delta_{\rm GP} \in (0, 1)$.
    Assume the same premise as in Lemma~\ref{lem:maximizer_f}.
    Suppose Assumption \ref{assump:continuous_X} holds.
    Then, if Algorithm~\ref{alg:TS} runs, the following holds with probability at least $1 - \delta - \delta_{\rm GP}$:
    \begin{align*}
        R_T = 
        \begin{cases}
            O(\sqrt{T} \log T) & \text{if $k = k_{\rm SE}$}, \\
            \tilde{O}(\sqrt{T}) & \text{if $k = k_{\rm Mat}$ with $\nu > 2$},
        \end{cases}
    \end{align*}
    where the hidden constants may depend on $1 / \delta, d, \nu, \ell, r, \sigma^2$, and the constants $c_{\rm sup}, c_{\rm gap}, \rho_{\rm quad}, c_{\rm quad}$ corresponding with $\delta_{\rm GP}$.
    \label{theo:improved_UB_TS}
\end{theorem}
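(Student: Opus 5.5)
Theorem~\ref{theo:improved_UB_TS} should follow from Lemma~\ref{lem:general_improved_bound} once we show that, for GP-TS, an event playing the role of $E$ holds with probability at least $1-\delta$. Lemma~\ref{lem:maximizer_f} already holds with probability at least $1-\delta_{\rm GP}$, so a union bound then gives the claimed $1-\delta-\delta_{\rm GP}$. The difficulty is that GP-TS, unlike GP-UCB, does not satisfy $f(\*x^*) - f(\*x_t) \leq 2\beta_t^{1/2}\sigma_{t-1}(\*x_t) + 1/t^2$ deterministically on a confidence event. The sample $g_t$ can deviate from $\mu_{t-1}$ by more than $\beta_t^{1/2}\sigma_{t-1}$, and Theorem~\ref{theo:TS_HP_LB} shows such deviations cannot be ruled out with only logarithmic dependence on $\delta$. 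I would therefore aim for a weakened event
\begin{align*}
E' = \left\{\forall t \in [T],\ f(\*x^*) - f(\*x_t) \leq 2 \tilde{\beta}_t^{1/2} \sigma_{t-1}(\*x_t) + \sigma_{t-1}(\*x^*)\,\tilde\beta_t^{1/2} + \frac{1}{t^2} \right\}
\end{align*}
or, failing that, rerun the proof of Lemma~\ref{lem:general_improved_bound} with GP-TS-specific ingredients. The latter is what I will actually plan for, since the hidden constants are allowed to depend on $1/\delta$.

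Concretely, I would mirror the proof sketch of Lemma~\ref{lem:general_improved_bound}. Partition $[T]$ into $\cT_0,\dots,\cT_{\bar i}$ with the same thresholds $\Delta_1 = T^{-\nu/(2(\nu+d))}$ and $\Delta_{i+1}=\tilde O(T^{-\nu/(2(\nu+d))}\Delta_i^{\nu d/(2(\nu+d))})$. Then bound each $\sum_{t\in\cT_i} f(\*x^*)-f(\*x_t)$ \emph{in expectation on the good event} of Lemma~\ref{lem:maximizer_f}, and finish with Markov's inequality. The dependence on $\delta$ enters only through a $1/\delta$ factor, which the theorem absorbs into the constants.

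For each $\cT_i$ I would use the GP-TS regret decomposition from the proof of Theorem~\ref{theo:TS_LR_UB_specific}. For $\cT_0$, the argument behind Theorem~\ref{theo:TS_LR_UB_specific} with $\Delta=\Delta_1$, applied to the Mat\'ern $\tilde\gamma$, gives $\EE[|\cT_0|]\lesssim T_{\max}(\Delta_1)$ and $\EE[\sum_{t\in\cT_0}\cdot]=\tilde O(\sqrt{T})$. This is exactly how $\Delta_1$ was tuned. For $i\ge1$, I would exploit the fact that on Lemma~\ref{lem:maximizer_f}'s conditions every $t\in\cT_i$ has $\*x_t\in\cB_2(\sqrt{\Delta_i/c_{\rm quad}};\*x^*)$, since $\Delta_i\le\Delta_1<c_{\rm gap}$ for large $T$. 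The posterior-variance sums over these rounds are then governed by the MIG restricted to a ball of radius $\propto\sqrt{\Delta_i}$ (Corollary 8 of \citep{iwazaki2025improved}). I would redo Lemma~\ref{lem:ExpectedEllipticalPotentialLemma} with this localized $\tilde\gamma$ to get $\EE[|\cT_i|]$ and $\EE[\sum_{t\in\cT_i}\cdot]=\tilde O(\sqrt T)$. Summing over $\bar i=O(1)$ classes, which is the geometric-series computation valid for $\nu>2$ and already done for the lemma, gives $\EE[R_T\mathbbm{1}\{\text{Lemma~\ref{lem:maximizer_f} holds}\}]=\tilde O(\sqrt T)$, and Markov then yields the statement. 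For SE kernels a single split at $\Delta_1\approx T^{-1/2}$ up to logs suffices: the terms below $\Delta_1$ contribute at most $T\Delta_1$, and those above are controlled by Theorem~\ref{theo:TS_LR_UB_specific} with $\tilde\gamma=O(\log^{d+1})$.

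The main obstacle is the localization step for the $\sigma_{t-1}(\*x^*)$ terms. The TS decomposition produces both $\sigma_{t-1}(\*x_t)$ and $\sigma_{t-1}(\*x^*)$. The index sets $\cT_i$ depend on $f$ and are conditioned on the event of Lemma~\ref{lem:maximizer_f}, so $\*x_t$ and $\*x^*$ are no longer identically distributed given $\cD_{t-1}$. I would handle this as Lemma~\ref{lem:ExpectedEllipticalPotentialLemma} does: $\*x^*$ itself lies in the ball, so its variance terms are bounded by the same localized information-gain argument. Conditioning on the Lemma~\ref{lem:maximizer_f} event is costed by dividing by $1-\delta_{\rm GP}$, and the residual terms $\EE[(f(\*x^*)-U_t(\*x^*))_+]$ are controlled via Lemma~\ref{lem:expected_resisual}, giving an $O(1)$ total.
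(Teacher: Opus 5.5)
\noindent Your Mat\'ern plan follows the paper's proof in outline: the same thresholds $\Delta_i$, expected per-class bounds from the lenient-regret machinery with a localized information gain, then Markov plus a union bound with the event of Lemma~\ref{lem:maximizer_f}. Your SE shortcut, however, cannot give the stated rate. You split once at $\Delta_1$ and bound the rounds below it crudely by $T\Delta_1$. Theorem~\ref{theo:TS_LR_UB_specific} then charges the rounds above it about $\min\{\beta_T\tilde{\gamma}_T/\Delta_1,\sqrt{T\beta_T\tilde{\gamma}_T}\}$. So for every choice of $\Delta_1$ the total is of order at least $\sqrt{T\beta_T\gamma_T}=\sqrt{T}\log^{(d+2)/2}T$. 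That is just the classical bound, and it exceeds $O(\sqrt{T}\log T)$ for all $d\ge 1$. The gain has to come from localizing the low-regret rounds as well, exactly as you do for Mat\'ern. The paper takes $\Delta=\beta_T\gamma_T/\sqrt{T}$. The rounds with gap at least $\Delta$ then have $\EE[|\cT|]=O(T/(\beta_T\gamma_T))$ and cost $O(\sqrt{T})$. On the good event, every other round has $\*x_t\in\cB(\sqrt{c_{\rm quad}^{-1}\Delta};\*x^*)$. This ball's SE information gain is $O(\log T)$ by Corollary~8 of \citep{iwazaki2025improved}, because $\log(1/\Delta)=\frac{1}{2}\log T-O(\log\log T)$. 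This yields $O(\sqrt{T\beta_T\log T})=O(\sqrt{T}\log T)$.

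Second, your resolution of the $\sigma_{t-1}(\*x^*)$ terms is not a valid argument as stated. That $\*x^*$ lies in the ball is not what controls them. $\*x^*$ is never queried, so its posterior variances are bounded only through exchanging $(f,\*x^*,\*x_t)$ with $(g_t,\*x_t,\*x^*)$, which are identically distributed given $\cD_{t-1}$. Genuinely conditioning on the event of Lemma~\ref{lem:maximizer_f} destroys this exchangeability, since $g_t$ is drawn from the unconditioned posterior. Simply dividing by $1-\delta_{\rm GP}$ to return to unconditional expectations loses the localization of $\*x_t$, which held only on that event. The missing device is the paper's enlarged index set $\widetilde{\cT}_i=\{t\in[T]: f(\*x^*)-f(\*x_t)\ge\Delta_{i+1},\ \|\*x_t-\*x^*\|_2\le\sqrt{c_{\rm quad}^{-1}\Delta_i}\}$. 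It has three useful properties:
\begin{enumerate}
\item On the good event it contains $\cT_i$ by Lemma~\ref{lem:existence_near_optimal}, so there $R_T\le\sum_{i}\sum_{t\in\widetilde{\cT}_i}\bigl(f(\*x^*)-f(\*x_t)\bigr)$.
\item It is defined without reference to that event, so its expectations are unconditional.
\item Its ball constraint is symmetric in $\*x_t$ and $\*x^*$, so after the exchange the swapped rounds still query points in $\cB(\rho;\*x^*)$.
\end{enumerate}
Lemma~\ref{lem:ExpectedEllipticalPotentialLemma_TS} then bounds these unconditional expectations. It uses shift invariance of the kernel to replace the random ball $\cB(\rho;\*x^*)$ by $\cB(\rho;0)$ before applying Jensen's inequality.
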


Theorem~\ref{theo:improved_UB_TS} shows that GP-TS achieves $\tilde{O}(\sqrt{T})$ cumulative regret upper bound as with GP-UCB, which is discussed as an open question in \citep{iwazaki2025improved}.
Thus, compared with the known-best high probability cumulative regret upper bound for GP-TS $O(\sqrt{\gamma_T T \log T} / \delta)$ \citep{Russo2014-learning,takeno2024-posterior} (or $O(\sqrt{\gamma_T T \log T / \delta})$ in Theorem~\ref{theo:CumulativeTS_informal}), Theorem~\ref{theo:improved_UB_TS} shows the improved result with respect to $T$.
Note that, although the dependence on $\delta$ is worse than that of GP-UCB, the dependence on $\delta_{\rm GP}$ is almost the same.
Thus, the resulting dependence on probability of GP-TS and GP-UCB may be the same since the dependence on $\delta_{\rm GP}$ in $c_{\rm sup}, c_{\rm gap}, \rho_{\rm quad}$ and $c_{\rm quad}$ have not been explicitly provided.
Although Lemma~\ref{lem:general_improved_bound} can be applied to GP-UCB directly, its application to GP-TS requires a careful derivation since we have only shown the expected regret bounds without conditioning on the conditions in Lemma~\ref{lem:maximizer_f}.
See the following proof sketch and Appendix~\ref{sec:proof_ImprovedUB_TS} for the proof.

\paragraph{Proof sketch.}
Under the three conditions in Lemma~\ref{lem:maximizer_f}, from the similar arguments as that of Lemma~\ref{lem:general_improved_bound}, we can decompose the cumulative regret as
\begin{align*}
    R_T &\leq \sum_{i = 0}^{\bar{i}} \sum_{t \in \widetilde{\cT}_i} f(\*x^*) - f(\*x_t),
\end{align*}
where the index sets $\widetilde{\cT}_i$ are defined as
\begin{align*}
    \widetilde{\cT}_0 &= \left\{t \in [T] \mid f(\*x^*) - f(\*x_t) \geq \Delta_{1} \right\}, 
    \\
    \widetilde{\cT}_i &=
    \left\{ \! \! \! \!
    \begin{array}{l}
        t \in [T] \biggm|
        \begin{array}{l}
            f(\*x^*) - f(\*x_t) \ge \Delta_{i+1} \\
            \land \*x_t \in \cB \left(\sqrt{c_{\rm quad}^{-1}\Delta_i}; \*x^*\right)
        \end{array}
    \end{array} \! \! \! \! \! \!
    \right\}, \forall i \in [\bar{i} - 1],
    \\
    \widetilde{\cT}_{\bar{i}} &= \left\{t \in [T] \mid \*x_t \in \cB\left(\sqrt{c_{\rm quad}^{-1} \Delta_{\bar{i}} }; \*x^*\right) \right\}.
\end{align*}
Therefore, if we can show the expected regret upper bound $\EE\left[ \sum_{t \in \widetilde{\cT}_i} f(\*x^*) - f(\*x_t) \right] = \tilde{O}(\sqrt{T})$ for all $i = 0, \dots, \bar{i}$ with $\bar{i} = O(1)$, we can obtain $R_T = \tilde{O}(\bar{i}^2 \sqrt{T}) = \tilde{O}(\sqrt{T})$ by the union bound and almost the same proof as that of Lemma~\ref{lem:general_improved_bound}.
We can show $\EE\left[ \sum_{t \in \widetilde{\cT}_i} f(\*x^*) - f(\*x_t) \right] = \tilde{O}(\sqrt{T})$ for all $i = 0, \dots, \bar{i}$ by the similar proof as those of Theorems \ref{theo:CumulativeTS_informal} and \ref{theo:TS_LR_UB_specific} and the same $\{\Delta_i\}_{i=1}^{\bar{i}}$ and $\bar{i}$ as in the proof of Lemma~\ref{lem:general_improved_bound}.

\section{Conclusion and Discussion}
\label{sec:conclusion}

In this paper, we showed several regret bounds of GP-TS in the Bayesian setting.
First, we constructed the simple two-armed problem instance where GP-TS suffers from $1 / \delta^c$ regret with probability $\delta$, which implies that $O\bigl(\log (1 / \delta)\bigr)$ regret upper bounds cannot be obtained by GP-TS in general.
Second, we showed the $\tilde{O}(\sqrt{T \gamma_T / \delta})$ cumulative regret upper bound tighter by a $1 / \sqrt{\delta}$ factor than the existing result.
Third, we proved the polylogarithmic expected lenient regret upper bounds by the different proof from \citep{cai2021-lenient,iwazaki2025improved}.
Lastly, we showed the $\tilde{O}(\sqrt{T})$ high-probability cumulative regret upper bound by extending the analysis by \citet{iwazaki2025improved}, in which we relaxed the condition of $\nu$ and $d$ for Mat\'ern kernels to only $\nu > 2$, which matches the condition needed for Lemma~\ref{lem:maximizer_f}.
Note that this refined analysis can readily be adapted to that of GP-UCB.
Hence, Lemma~\ref{lem:general_improved_bound} and Theorem~\ref{theo:improved_UB_TS} partially address the limitations discussed as the ``Smoothness condition'' and the ``Extension to other algorithms'' in \citep{iwazaki2025improved}.

However, our analyses still have the limitations listed below:
\begin{itemize}
    \item \textbf{Optimality for $\delta$.}
        Although we show the regret lower bound for GP-TS in Theorem~\ref{theo:TS_HP_LB}, it does not imply the tightness of our Theorem~\ref{theo:CumulativeTS_informal} in terms of $\delta$.
        In addition, whether a similar result can be obtained for continuous $\cX$ is unclear.
        Therefore, showing the tightness on $\delta$ and $T$ or a sharper concentration property of $R_T$ incurred by GP-TS for both finite and continuous input domains remains as future work.
    \item \textbf{Analysis of GP-TS with variance inflation.}
        We conjecture that GP-TS with variance inflation \citep{Chowdhury2017-on} can attain an $O(\sqrt{T \gamma_T \log(T / \delta)})$ cumulative regret upper bound by almost the same proof as that in the frequentist setting \citep{Chowdhury2017-on}.
        However, we expect that extending the analysis by \citet{Chowdhury2017-on} to the polylogarithmic lenient regret upper bound is not straightforward because of the use of the Azuma--Hoeffding inequality that causes $\tilde{O}(\sqrt{T})$ dependence.
        This difficulty also hinders the application of the analysis of the improved regret upper bound on $T$.
        An analysis of such a modified algorithm may be an important direction for achieving a regret upper bound with improved dependence on $\delta$ of GP-TS.
    \item \textbf{Condition on $\nu$ and $d$ for Mat\'ern kernels.}
        We have relaxed the condition for Mat\'ern kernels on $\nu$ and $d$ from $2\nu + d \leq \nu^2$ \citep{iwazaki2025improved} to $\nu > 2$ in Lemma~\ref{lem:general_improved_bound} and Theorem~\ref{theo:improved_UB_TS}.
        This condition $\nu > 2$ is also currently needed as the sufficient condition of Assumption~\ref{assump:continuous_X} and Lemma~\ref{lem:maximizer_f}, which is leveraged in most prior works \citep{iwazaki2025improved,scarlett2018-tight,Srinivas2010-Gaussian}.
        However, this condition $\nu > 2$ cannot be satisfied by the widely used value $\nu = 1/2$ or $3/2$.
        Therefore, it is crucial to determine whether the condition $\nu > 2$ cannot be avoided or if this condition can be further relaxed.
    \item \textbf{Extension to other algorithms.}
        As discussed in Section~\ref{sec:related}, there are several algorithms except for GP-UCB and GP-TS with the $\tilde{O}(\sqrt{T \gamma_T})$ expected cumulative regret upper bounds \citep{Takeno2023-randomized,takeno2024-posterior,takeno2025-regretEI,takeno2025-regret}.
        Therefore, extending our analyses to those algorithms would be an intriguing future direction.
    \item \textbf{Extension to other problem setup.}
        Although we and \citet{iwazaki2025improved} focused on the analysis of vanilla BO algorithms, BO has been extended to various problem settings, including multi-fidelity \citep{Kandasamy2016-Gaussian,Kandasamy2017-Multi,Takeno2020-Multifidelity,Takeno2022-generalized}, multi-objective \citep{inatsu2024-bounding,paria2020-flexible,Zuluga2016-epsilon}, constrained \citep{Eriksson2020-Scalable}, and parallel BO \citep{nava2022diversified,Kandasamy2018-Parallelised}.
        Thus, extending our analysis to these settings would be of interest to investigate the theoretical performance of BO algorithms.
    \item \textbf{Limitations discussed in prior work.}
        The limitations named as ``Optimality,'' ``Extension to the expected regret,'' and ``Instance dependence analysis in the frequentist setting'' discussed in Section~4 of \citep{iwazaki2025improved} still remain as interesting directions for future work. 
\end{itemize}


\section*{Acknowledgements}
This work was supported by JST PRESTO Grant Number JPMJPR24J6 and JSPS KAKENHI Grant Number JP24K20847.

\bibliography{ref}
\bibliographystyle{plainnat}

\clearpage
\appendix
\onecolumn

\section{Proof of Theorem~\ref{theo:TS_HP_LB}}
\label{sec:proof_LB}

\begin{reptheorem}{theo:TS_HP_LB}
    Assume that $\cX = \{ \*x^{(1)}, \*x^{(2)} \}$, $\epsilon_t \sim \cN(0, 1)$ for all $t \in [T]$, and 
    \begin{align}
        \left(
            \begin{array}{c}
                 f(\*x^{(1)}) \\
                 f(\*x^{(2)})
            \end{array}
        \right) \sim \cN \left(
            \left(
            \begin{array}{c}
                 0 \\
                 0
            \end{array}
        \right),
        \left(
            \begin{array}{cc}
                 1 & 1 / 2 \\
                 1 / 2 & 1
            \end{array}
        \right)
        \right).
    \end{align}
    Then, if Algorithm~\ref{alg:TS} runs and $T > e$, the following holds:
    \begin{align}
        \Pr\left( R_T \geq  T / 2 \right) \geq \frac{c_1}{T^{c_2}},
    \end{align}
    where $c_1$ and $c_2$ are strictly positive constants.
\end{reptheorem}
\begin{proof}
    We show that the probability that the following events simultaneously hold can be bounded from below by $\frac{c_1}{T^{17}}$:
    \begin{align}
        E_f &= \left\{ f(\*x^{(1)}) \geq 4 \sqrt{ \log T} \text{ and } f(\*x^{(2)}) \geq f(\*x^{(1)}) + 1 \right\},\\
        E_{\epsilon} &= \left\{ \forall t \in [\lceil T/2 \rceil], \frac{1}{t} \sum_{i=1}^t \epsilon_i \geq - 1 \right\}, \\
        E_{\rm TS} &= \left\{ \forall t \in [\lceil T/2 \rceil], \*x_t = \*x^{(1)} \right\}.
    \end{align}
    If these events simultaneously hold, we can see that $\sum_{t=1}^T f(\*x^*) - f(\*x_t) \geq  T / 2$, which concludes the proof.
    %

    \paragraph{Probability of $E_f$:}
    From the assumption, we can see that $\bigl(f(\*x^{(1)}), f(\*x^{(2)}) \bigr)$ and $(Z_0 + Z_1, Z_0 + Z_2)$ are identically distributed, where $Z_0, Z_1$, and $Z_2$ are independent and identically distributed as $\cN(0, 1/2)$.
    Thus, we can rephrase the probability as
    \begin{align}
        \Pr\left( f(\*x^{(1)}) \geq 4 \sqrt{ \log T} \text{ and } f(\*x^{(2)}) \geq f(\*x^{(1)}) + 1 \right)
        &= \Pr\left( Z_0 + Z_1 \geq 4 \sqrt{ \log T} \text{ and } Z_2 - Z_1 \geq 1 \right) \\
        &\geq \Pr\left( Z_0 \geq 4 \sqrt{ \log T}\right) \Pr\left( Z_2 - Z_1 \geq 1 \text{ and } Z_1 \geq 0 \right).
    \end{align}
    Since $\Pr\left( Z_2 - Z_1 \geq 1 \text{ and } Z_1 \geq 0 \right)$ does not depend on any variables, we treat it as a constant.
    Furthermore, using Gaussian anti-concentration inequality $1 - \Phi(c) \geq \frac{1}{\sqrt{2\pi}} \frac{c}{c^2 + 1} \exp (- c^2 / 2)$, we can obtain the lower bound of $\Pr\left( \sqrt{2} Z_0 \geq 4 \sqrt{2 \log T}\right)$:
    \begin{align}
        \Pr\left( Z_0 \geq 4 \sqrt{ \log ( T )}\right)
        &\geq \frac{1}{\sqrt{2\pi}}  \frac{4 \sqrt{2 \log T}}{32 \log T + 1} \exp (- 16 \log T) \\
        &\geq \frac{1}{\sqrt{2\pi}}  \frac{4 \sqrt{2 \log T}}{32 \log T + 1} T^{-16}.
    \end{align}
    Hence, by choosing sufficiently small $c_f > 0$, we can obtain
    \begin{align}
        \Pr\left( f(\*x^{(1)}) \geq 4 \sqrt{ \log T} \text{ and } f(\*x^{(2)}) \geq f(\*x^{(1)}) + 1 \right)
        &\geq c_f T^{-17}.
    \end{align}

    \paragraph{Probability of $E_{\epsilon}$:}
    We apply Lemma~D.1 of \citep{takeno2025-regret}:
    \begin{lemma}[Lemma~D.1 of \citep{takeno2025-regret}]
        Let $\{\epsilon_i\}_{i \geq 1}$ be a mutually independent sequence of standard normal random variables, that is, $\epsilon_i \sim \cN(0, 1)$ for all $i \geq 1$.
        Then, there exists a positive constant $C$ such that
        \begin{align}
            \Pr\left( \forall t \leq T, \frac{\sum_{i=1}^t \epsilon_i}{t} \geq -1 \right) \geq C.
        \end{align}
        \label{lem:sumGauss_lower}
    \end{lemma}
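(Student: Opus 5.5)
The plan is to prove the stronger statement that $\Pr\bigl(\forall t \in \NN, \sum_{i=1}^t \epsilon_i \geq -t\bigr) \geq C$ for a constant $C>0$ independent of $T$. The event in Lemma~\ref{lem:sumGauss_lower} only involves $t \leq T$, so it contains this infinite-horizon event and the claimed bound follows immediately. The idea is to force the first $t_0$ noise terms to be nonnegative, which costs a constant factor $2^{-t_0}$. The remaining terms are then handled by a union bound with Gaussian tails, whose total is summable and small once $t_0$ is large.

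\textbf{Step 1 (choice of $t_0$).} Fix an integer $t_0 \geq 1$ such that
\begin{align*}
    \sum_{t > t_0} e^{-t/2} = \frac{e^{-(t_0+1)/2}}{1 - e^{-1/2}} \leq \frac{1}{2}.
\end{align*}
Define the events $A = \{\forall i \leq t_0, \epsilon_i \geq 0\}$ and
\begin{align*}
    B = \left\{\forall t > t_0, \sum_{i = t_0 + 1}^t \epsilon_i \geq -t\right\}.
\end{align*}
Event $A$ depends only on $\epsilon_1,\dots,\epsilon_{t_0}$ and event $B$ only on $\epsilon_{t_0+1},\epsilon_{t_0+2},\dots$. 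Hence they are independent, and $\Pr(A) = 2^{-t_0}$.

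\textbf{Step 2 (inclusion).} On $A \cap B$, every $t \leq t_0$ gives $\sum_{i=1}^t \epsilon_i \geq 0 \geq -t$. Every $t > t_0$ gives
\begin{align*}
    \sum_{i=1}^t \epsilon_i \geq \sum_{i=t_0+1}^t \epsilon_i \geq -t.
\end{align*}
So $A \cap B$ is contained in the target event, and it suffices to show $\Pr(A)\Pr(B) \geq 2^{-t_0}\cdot\tfrac12$.

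\textbf{Step 3 (bounding $\Pr(B^c)$).} For $t > t_0$, the partial sum $\sum_{i=t_0+1}^t \epsilon_i$ is distributed as $\cN(0, t - t_0)$. The Chernoff bound then gives
\begin{align*}
    \Pr\left(\sum_{i=t_0+1}^t \epsilon_i < -t\right) \leq \exp\left(-\frac{t^2}{2(t-t_0)}\right) \leq e^{-t/2},
\end{align*}
where the last inequality uses $t - t_0 \leq t$. A union bound over $t > t_0$ together with the choice of $t_0$ in Step 1 yields $\Pr(B^c) \leq 1/2$.

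Combining the three steps gives $C = 2^{-t_0-1}$, which is independent of $T$. No step is a genuine obstacle. The only point needing care is to handle the dependence between early and late partial sums through the decoupling in Step 2, rather than invoking a correlation inequality such as Harris/FKG, which would also work because all the events are increasing in $(\epsilon_i)$.
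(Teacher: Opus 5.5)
Your proof is correct, and it takes a different route from the paper only in the sense that the paper has no proof of its own here. The paper imports the statement verbatim as Lemma~D.1 of \citep{takeno2025-regret}. It uses the lemma only through one fact: $C$ is an absolute constant independent of $T$, so that $\Pr(E_{\epsilon}) \geq c_{\epsilon}$ in the proof of Theorem~\ref{theo:TS_HP_LB}.

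Your argument gives a self-contained, elementary proof of exactly that fact. Passing to the infinite-horizon event makes the $T$-independence automatic, and your constant is explicit. The smallest admissible choice is $t_0 = 3$, since $e^{-2}/(1 - e^{-1/2}) \approx 0.34 \leq 1/2$, which gives $C = 1/16$. Each step checks out:
\begin{itemize}
\item $A$ and $B$ are measurable with respect to disjoint blocks of the independent sequence, so they are independent.
\item The inclusion of $A \cap B$ in the target event is immediate.
\item The Chernoff bound combined with $t - t_0 \leq t$ gives the summable tail $e^{-t/2}$.
\end{itemize}
Your side remark about Harris/FKG is also valid. All the events $\{\sum_{i \le t} \epsilon_i \geq -t\}$ are increasing, and $\prod_{t} \bigl(1 - \Phi(-\sqrt{t})\bigr) > 0$.

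One observation: pinning the first $t_0$ coordinates is needed only because the Chernoff bound is loose for small $t$. Indeed, $\sum_{t \geq 1} e^{-t/2} \approx 1.54 > 1$, so a naive union bound with that tail fails. With the Mills-ratio bound $\Pr\bigl(\sum_{i \le t} \epsilon_i < -t\bigr) = \Phi(-\sqrt{t}) \leq e^{-t/2}/\sqrt{2\pi t}$, a direct union bound over all $t \geq 1$ gives a total failure probability of about $0.46$. This already yields $C > 1/2$ with no decoupling step.
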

    Therefore, $\Pr(E_{\epsilon}) \geq c_{\epsilon}$ with some absolute constant $c_{\epsilon} > 0$.

    \paragraph{Probability of $E_{\rm TS}$:}
    For the first iteration, $\Pr(\*x_1 = \*x^{(1)} \mid E_f \land E_\epsilon) = \Pr(\*x_1 = \*x^{(1)}) = 1/2$ since $\cD_0 = \emptyset$, the prior mean is zero, and the algorithm is independent of $f$ and $\epsilon$.
    We will show 
    \begin{align}
        \Pr(\*x_{t + 1} = \*x^{(1)} \mid E_f \land E_\epsilon \land E_t) &\geq 1 / 2 && \text{for all $t \in [9]$}, \\
        \Pr(\*x_{t + 1} = \*x^{(1)} \mid E_f \land E_\epsilon \land E_t) &\geq 1 - 1 / T && \text{for all $t \in \{10, \dots, \lceil T / 2 \rceil \}$},
    \end{align}
    where the event $E_t = \{ \forall i \in [t], \*x_i = \*x^{(1)} \}$ and $10$ is chosen for simplicity.
    Then, we can show the probability that $\*x^{(1)}$ is chosen successively $\lceil T / 2 \rceil$ times can be bounded from below by an absolute constant.

    First, we show $\Pr(\*x_{t+1} = \*x^{(1)} \mid E_f \land E_\epsilon \land E_t) \geq 1/2$ for $t \in [9]$.
    If $E_t$ is true, the posterior distribution of $\bigl(f(\*x^{(1)}), f(\*x^{(2)}) \bigr)$ given $\cD_t$ can be obtained as 
    \begin{align}
        \cN \left(
            \left(
            \begin{array}{c}
                 \frac{t}{t+1} \bar{y}_t \\
                 \frac{t}{2(t+1)} \bar{y}_t
            \end{array}
        \right),
        \left(
            \begin{array}{cc}
                 \frac{1}{t+1} & \frac{1}{2(t+1)} \\
                 \frac{1}{2(t+1)} & \frac{3t/4 + 1}{t+1}
            \end{array}
        \right)
        \right),
    \end{align}
    where $\bar{y}_t = \frac{1}{t} \sum_{i=1}^{t} y_i = f(\*x^{(1)}) + \frac{1}{t} \sum_{i=1}^{t} \epsilon_i$.
    Note that we use the Woodbury formula $(\*1 \*1^\top + \*I_t)^{-1} = \*I_t - \frac{\*1 \*1^\top}{t + 1}$ as with the proof of Theorem~4.6 in \citep{takeno2025-regret}.
    Therefore, we can obtain the posterior distribution of $f(\*x^{(2)}) - f(\*x^{(1)})$ as
    \begin{align}
        \cN\left( - \frac{t}{2(t+1)} \bar{y}_t, \frac{3t/4 + 1}{t+1} \right).
    \end{align}
    Hence, for any $\cD_t$ that satisfies $E_t$, we can obtain
    \begin{align}
        \Pr(\*x_{t+1} = \*x^{(1)} \mid \cD_t) = \Phi\left( \frac{t \bar{y}_t}{2 \sqrt{t+1} \sqrt{3t/4 + 1}} \right).
    \end{align}
    If $E_f$ and $E_{\epsilon}$ are true, then $\bar{y}_t \geq 0$ because $f(\*x^{(1)}) \geq 4 \sqrt{\log T}$, $\frac{1}{t} \sum_{i=1}^T \epsilon_i \geq -1$ and $T > e$.
    Hence, we see that
    \begin{align}
        &\Pr\left( \*x_{t + 1} = \*x^{(1)} \bigm| E_f \land E_\epsilon \land E_t \right) \\
        &= \EE_{\cD_t \mid E_f \land E_\epsilon \land E_t} \left[ \Pr\left( \*x_{t + 1} = \*x^{(1)} \bigm| \cD_t \land E_f \land E_\epsilon \land E_t \right) \right] \\
        &\overset{(a)}{=} \EE_{\cD_t \mid E_f \land E_\epsilon \land E_t} \left[ \Pr\left( \*x_{t + 1} = \*x^{(1)} \bigm| \cD_t \right) \right] \\
        &\overset{(b)}{\geq} 1 / 2,
    \end{align}
    which holds because 
    (a) $\{\*x_i\}_{i \leq t}$ is fixed given $\cD_t$ and $\*x_{t + 1} \indep f, \{\epsilon_t\}_{t \geq 1} \mid \cD_t$ and
    (b) $\bar{y}_t \geq 0$ under the conditions $E_f, E_\epsilon$, and $E_t$.
    %
    %

    Next, we obtain the lower bound $\Pr(\*x_{t+1} = \*x^{(1)} \mid E_f, E_{\epsilon}, E_t) \geq 1 - 1 / T$ for all $t \geq 10$.
    If $E_f, E_\epsilon$, and $E_t$ are true, then we see that
    \begin{align}
        \frac{1}{t} \sum_{i = 1}^t \epsilon_i 
        &\overset{(a)}{\geq} -1 \\
        &\overset{(b)}{\geq} 2\sqrt{2 \log\left(\frac{T}{2}\right)} - 4 \sqrt{\log T} \\
        &\overset{(c)}{\geq} 2\sqrt{2 \log\left(\frac{T}{2}\right)} - f(\*x^{(1)}) \\
        &\overset{(d)}{\geq} \frac{2\sqrt{t + 1}\sqrt{3t / 4 + 1}}{t} \sqrt{2 \log \left(\frac{T}{2}\right)} - f(\*x^{(1)}),
    \end{align}
    where the inequalities hold because
    (a) $E_\epsilon$ is true,
    (b) $g(T) = 2\sqrt{2 \log\left(\frac{T}{2}\right)} - 4 \sqrt{\log T}$ has the maximum $g(4) \approx - 2.354$ at $T = 4$ since $g^\prime (T) = \frac{2}{T} \left(\frac{1}{\sqrt{2 \log (T / 2)}} - \frac{1}{\sqrt{\log T}} \right)$,
    (c) $E_f$ is true, and
    (d) $g(t) = \frac{2\sqrt{t + 1}\sqrt{3t / 4 + 1}}{t} = \frac{\sqrt{t + 1}\sqrt{3t  + 4}}{t}$ is monotonically decreasing for $t > 0$ and $g(t) \leq g(10) \approx 1.934 \leq 2$ for $t \geq 10$ (note that $\log(T / 2) > 0$ due to $T > e$).
    Therefore, we obtain
    \begin{align}
        &\bar{y}_t = f(\*x^{(1)}) + \frac{1}{t} \sum_{i = 1}^t \epsilon_i \geq \frac{2\sqrt{t + 1}\sqrt{3t / 4 + 1}}{t} \sqrt{2 \log \left(\frac{T}{2}\right)} \\
        &\Leftrightarrow 
        \frac{t \bar{y}_t}{2 \sqrt{t+1} \sqrt{3t/4 + 1}} \geq \sqrt{2 \log \left(\frac{T}{2}\right)}.
    \end{align}
    Hence, as with the case of $t \in [9]$, for any $\cD_t$ that satisfies $E_f, E_\epsilon$, and $E_t$, we see that
    \begin{align}
        \Pr(\*x_{t+1} = \*x^{(1)} \mid \cD_t) 
        &= \Phi\left( \frac{t \bar{y}_t}{2 \sqrt{t+1} \sqrt{3t/4 + 1}} \right) \\
        &\geq \Phi\left( \sqrt{2 \log \left(\frac{T}{2}\right)} \right) \\
        &\geq 1 - \frac{1}{T},
    \end{align}
    where we use the inequality $\Phi(c) \geq 1 - \frac{1}{2} e^{- c^2 / 2}$~\citep[for example, Lemma~5.1 of][]{Srinivas2010-Gaussian}.
    Thus, for any $t \geq 10$, we derive
    \begin{align}
        &\Pr\left( \*x_{t + 1} = \*x^{(1)} \bigm| E_f \land E_\epsilon \land E_t \right) \\
        &= \EE_{\cD_t \mid E_f \land E_\epsilon \land E_t} \left[ \Pr\left( \*x_{t + 1} = \*x^{(1)} \bigm| \cD_t \land E_f \land E_\epsilon \land E_t \right) \right] \\
        &\overset{(a)}{=} \EE_{\cD_t \mid E_f \land E_\epsilon \land E_t} \left[ \Pr\left( \*x_{t + 1} = \*x^{(1)} \bigm| \cD_t \right) \right] \\
        &\geq 1 - 1 / T,
    \end{align}
    where the inequality (a) holds because  $\{\*x_i\}_{i \leq t}$ is fixed given $\cD_t$ and $\*x_{t + 1} \indep f, \{\epsilon_t\}_{t \geq 1} \mid \cD_t$.

    Consequently, we see that
    \begin{align}
        \Pr(E_{\rm TS} \mid E_f \land E_{\epsilon}) 
        &=  \Pr\left( \forall t \in [\lceil T / 2 \rceil] , \*x_t = \*x^{(1)} \mid E_f \land E_{\epsilon} \right) \\
        &= \Pr(\*x_1 = \*x^{(1)} \mid E_f \land E_{\epsilon}) \prod_{t = 1}^{\lceil T / 2 \rceil} \Pr\left( \*x_{t + 1} = \*x^{(1)} \bigm| E_f \land E_\epsilon \land E_t \right) \\
        &\geq \frac{1}{2^{10}} \left(1 - \frac{1}{T}\right)^{\max\{\lceil T / 2 \rceil - 10, 0 \}} \\
        &\geq \frac{1}{2^{10}} \left(1 - \frac{1}{T}\right)^{T / 2} \\
        &\geq \frac{1}{2^{11}} \eqqcolon c_{\rm TS},
    \end{align}
    where the second equality follows by the repeated applications of Bayes-rule, and the last inequality follows from Bernoulli's inequality $(1 + c)^r \geq 1 + rc$ for all integer $r \geq 1$ and $c \geq -1$.

    \paragraph{Resulting lower bound:}
    Combining all the lower bounds for the probability, since $E_f \land E_\epsilon \land E_{\rm TS} \Rightarrow \sum_{t=1}^T f(\*x^*) - f(\*x_t) \geq  T / 2$, we can see that
    \begin{align}
        \Pr\left( \sum_{t=1}^T f(\*x^*) - f(\*x_t) \geq  T / 2 \right) 
        &\geq \Pr(E_f \land E_\epsilon \land E_{\rm TS}) \\
        &= \Pr(E_{\rm TS} \mid E_f \land E_\epsilon) \Pr(E_f) \Pr(E_\epsilon) \\
        &\geq \frac{c_f c_{\epsilon} c_{\rm TS}}{T^{17}}.
    \end{align}
\end{proof}

\section{Proof of Theorem~\ref{theo:CumulativeTS_informal}}
\label{sec:proof_CR_UB_delta}

Here, we show the proof of the formal version of Theorem~\ref{theo:CumulativeTS_informal}:
\begin{theorem}
    If Algorithm~\ref{alg:TS} runs, the following inequality holds:
    \begin{itemize}
        \item If Assumption~\ref{assump:GP} holds and $|\cX| < \infty$, 
        \begin{align}
            \EE \left[ R_T^2 \right]
            &\leq 
            12 C_1 T (\beta_T + 1) \gamma_T + \beta_T^{1/2}
            = O(T \beta_T \gamma_T),
        \end{align}
        where $C_1 = 2 / \log(1 + \sigma^{-2})$ and $\beta_t = \max\{1, 2\log ( 24 |\cX| T^2 / \sqrt{2 \pi} )\}$ for all $t \in [T]$.
        \item If Assumptions~\ref{assump:GP} and \ref{assump:continuous_X} hold and $\nu > 1$ for Mat\'ern kernels, 
        \begin{align}
            \EE[R_T^2] 
            \leq 
            27
            + \beta_T^{1/2}
            + 18T (\beta_T + 1) \left(2 + C_1 \gamma_T\right) 
            = O(T \beta_T \gamma_T),
        \end{align}
        where $C_1 = 2 / \log(1 + \sigma^{-2})$, 
        $\beta_t = \max\{1, 2\log \bigl( 36 \lceil drL T \rceil^d T^2 / \sqrt{2 \pi} ) \bigr)\}$ for all $t \in [T]$, 
        $L = \max\{ L_{\sigma}, b\sqrt{\log (ad) + 1} \}$ 
        and $L_{\sigma}$ is defined as in Lemma~\ref{lem:Lipschitz_std}.
    \end{itemize}
\end{theorem}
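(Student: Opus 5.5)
Write $r_t = f(\*x^*) - f(\*x_t)$ and $U_t(\*x) = \mu_{t-1}(\*x) + \beta_t^{1/2}\sigma_{t-1}(\*x)$. We first treat finite $\cX$. By Cauchy--Schwarz, $R_T^2 \le T\sum_{t=1}^T r_t^2$, so it suffices to bound $\EE[r_t^2]$ for each $t$. Split
\begin{align*}
r_t = \bigl(f(\*x^*) - U_t(\*x^*)\bigr) + \bigl(U_t(\*x^*) - U_t(\*x_t)\bigr) + \bigl(U_t(\*x_t) - f(\*x_t)\bigr).
\end{align*}
Using $(a+b+c)^2 \le 3(a^2+b^2+c^2)$, together with $a \le (a)_+$ and the fact that $r_t \ge 0$, we need second-moment bounds on the three pieces.

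The middle piece is handled by the TS symmetry. Conditionally on $\cD_{t-1}$, $\*x_t$ and $\*x^*$ are identically distributed, and $U_t$ is $\cD_{t-1}$-measurable. Hence $\EE[U_t(\*x^*)^2 \mid \cD_{t-1}] = \EE[U_t(\*x_t)^2 \mid \cD_{t-1}]$ and $\EE[U_t(\*x^*) \mid \cD_{t-1}] = \EE[U_t(\*x_t) \mid \cD_{t-1}]$. Conditional independence then gives
\begin{align*}
\EE\bigl[(U_t(\*x^*) - U_t(\*x_t))^2 \mid \cD_{t-1}\bigr] = 2\,\mathrm{Var}\bigl(U_t(\*x_t) \mid \cD_{t-1}\bigr) \le 2\,\EE\bigl[(U_t(\*x_t) - c)^2 \mid \cD_{t-1}\bigr]
\end{align*}
for any $\cD_{t-1}$-measurable $c$. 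This does not immediately become a variance term, so I would avoid squaring the middle piece. Instead I would bound $r_t^2 \le r_t\,(\text{decomposition})$. Alternatively, and more cleanly, write $r_t = (f(\*x^*) - U_t(\*x^*)) + (U_t(\*x^*) - f(\*x_t))$ and bound $\EE[r_t^2]$ through $\EE[(U_t(\*x^*) - f(\*x_t))^2]$. Here we use the same-distribution trick directly: $U_t(\*x^*)$ and $U_t(\*x_t)$ have the same conditional law, and $f(\*x_t) \sim \cN(\mu_{t-1}(\*x_t), \sigma_{t-1}^2(\*x_t))$ given $\cD_{t-1}, \*x_t$. The quantity $\EE[(U_t(\*x_t) - f(\*x_t))^2 \mid \cD_{t-1}] = (\beta_t + 1)\,\EE[\sigma_{t-1}^2(\*x_t) \mid \cD_{t-1}]$ is the quantity we want. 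To transfer from $U_t(\*x^*)$ to $U_t(\*x_t)$, I would use $r_t \le (f(\*x^*) - U_t(\*x^*))_+ + U_t(\*x^*) - f(\*x_t)$ and, on $\{r_t \ge 0\}$, the expansion
\begin{align*}
r_t^2 \le 2(f(\*x^*) - U_t(\*x^*))_+^2 + 2\bigl(U_t(\*x^*) - f(\*x_t)\bigr)_+^2 .
\end{align*}
Writing $U_t(\*x^*) - f(\*x_t) = [U_t(\*x^*) - \max_{\*x} g_t(\*x)] + [g_t(\*x_t) - U_t(\*x_t)] + [U_t(\*x_t) - f(\*x_t)]$, where $\max_{\*x} g_t(\*x) = g_t(\*x_t)$, the first bracket is controlled by $(U_t(\*x^*) - f(\*x^*))$ symmetrized with $(U_t(\*x_t) - g_t(\*x_t))$. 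The positive parts of terms like $g_t(\*x_t) - U_t(\*x_t)$ have the same law as $(f(\*x^*) - U_t(\*x^*))_+$. Combining, $\EE[r_t^2] \lesssim \EE[(f(\*x^*) - U_t(\*x^*))_+^2] + (\beta_t + 1)\,\EE[\sigma_{t-1}^2(\*x_t)]$, with a constant around $12$.

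The residual term is the main new ingredient. Using $(f(\*x^*) - U_t(\*x^*))_+^2 \le \sum_{\*x \in \cX}(f(\*x) - U_t(\*x))_+^2$ and the Gaussian conditional law, we get $\EE[(Z\sigma - \beta^{1/2}\sigma)_+^2] \le \sigma^2 \int_{\beta^{1/2}}^\infty (z - \beta^{1/2})^2\phi(z)\,dz \le \sigma^2 \phi(\beta^{1/2})/\beta^{1/2} \cdot O(1) \lesssim e^{-\beta_t/2}$. With $\beta_t = 2\log(24|\cX|T^2/\sqrt{2\pi})$ the sum over $\cX$ and $t$ is at most $O(1/T)$, so $T$ times it is $O(1)$, consistent with the $\beta_T^{1/2}$ slack in the statement. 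The variance sum uses $\sum_t \sigma_{t-1}^2(\*x_t) \le C_1\gamma_T$ deterministically. This gives $12C_1T(\beta_T + 1)\gamma_T$ plus a constant.

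For continuous $\cX$, we discretize with a grid $\cX_t$ of size $\lceil drLt\rceil^d$ (or $\lceil drLT\rceil^d$) and let $[\*x]_t$ denote the nearest grid point. We add the discretization terms $f(\*x^*) - f([\*x^*]_t)$ and $g_t([\*x_t]_t) - g_t(\*x_t)$, and we control $U_t$ on the grid via Lemma~\ref{lem:Lipschitz_std}. Their squares are bounded using Assumption~\ref{assump:continuous_X}: $\EE[\sup|\partial f|^2] \lesssim b^2\log(ad)$ via tail integration, which gives squared discretization error $O(1/t^2)$ summable to a constant (the $27$ and the $2$ in $2 + C_1\gamma_T$). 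I expect the main obstacle to be the clean handling of the second moment of the symmetrized middle term, since, unlike in the first-moment argument, expectations of squares do not cancel linearly. The route through positive parts and the equal-law trick above is what I would try first.
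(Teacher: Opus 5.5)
\textbf{Verdict: there is a genuine gap, in the first bracket of your final decomposition.}

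\textbf{What is correct.} Several parts of your plan match the paper's proof:
\begin{itemize}
\item the reduction $R_T^2 \le T\sum_t r_t^2$;
\item inserting $g_t$;
\item the residual bound via a union bound over $\cX$ and $\int_a^\infty (z-a)^2\phi(z)\,dz \le \phi(a)/a$;
\item the discretization handling.
\end{itemize}
Two of your three brackets are also handled correctly. The term $(g_t(\*x_t) - U_t(\*x_t))_+$ has the same law as $(f(\*x^*) - U_t(\*x^*))_+$. And $\*x_t \indep f \mid \cD_{t-1}$ gives $\EE[(U_t(\*x_t) - f(\*x_t))^2 \mid \cD_{t-1}] = (\beta_t+1)\EE[\sigma_{t-1}^2(\*x_t) \mid \cD_{t-1}]$.

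\textbf{The gap.} The first bracket, $U_t(\*x^*) - g_t(\*x_t)$, is left to an unspecified ``symmetrization,'' and that step does not go through at the level of second moments. Suppose you pair it with $U_t(\*x^*) - f(\*x^*)$ or $U_t(\*x_t) - g_t(\*x_t)$ via the symmetry $(f,\*x^*) \leftrightarrow (g_t,\*x_t)$. What remains is a difference of two conditionally i.i.d.\ quantities: either $f(\*x^*) - g_t(\*x_t) = \max f - \max g_t$, or $U_t(\*x^*) - U_t(\*x_t)$. Its conditional second moment is twice a conditional variance (of $\max f$, or of $U_t(\*x_t)$). Nothing in your sketch bounds these by $\EE[\sigma_{t-1}^2(\*x_t)]$; this is exactly the non-cancellation you flag at the end. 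Separately, $U_t(\*x^*) - f(\*x^*)$ does not admit the $(\beta_t+1)\sigma_{t-1}^2$ computation you use for the cross term, because $\*x^*$ is a function of $f$.

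\textbf{The missing idea.} Use the argmax property of $\*x_t$ once more: $g_t(\*x_t) \ge g_t(\*x^*)$. Then the first bracket is at most
\[
U_t(\*x^*) - g_t(\*x^*) = \beta_t^{1/2}\sigma_{t-1}(\*x^*) + \bigl(\mu_{t-1}(\*x^*) - g_t(\*x^*)\bigr).
\]
This is now a cross evaluation. Since $\*x^* \indep g_t \mid \cD_{t-1}$, the value $g_t(\*x^*)$ given $(\*x^*, \cD_{t-1})$ is $\cN(\mu_{t-1}(\*x^*), \sigma_{t-1}^2(\*x^*))$. Hence $\EE[(\mu_{t-1}(\*x^*) - g_t(\*x^*))^2] = \EE[\sigma_{t-1}^2(\*x^*)]$.

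The equal-law property of $\*x^*$ and $\*x_t$ given $\cD_{t-1}$ is then needed only for the $\cD_{t-1}$-measurable function $\sigma_{t-1}^2(\cdot)$. It gives $\EE[\sigma_{t-1}^2(\*x^*)] = \EE[\sigma_{t-1}^2(\*x_t)]$ exactly, with no cancellation between random terms required.

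This is the paper's decomposition:
\[
r_t \le (f(\*x^*) - U_t(\*x^*))_+ + (U_t(\*x^*) - g_t(\*x^*)) + (g_t(\*x_t) - U_t(\*x_t))_+ + (U_t(\*x_t) - f(\*x_t)).
\]
Split the two unclipped terms into their $\beta_t^{1/2}\sigma_{t-1}$ parts and their zero-mean cross deviations, and apply Cauchy--Schwarz over the resulting $6T$ summands. This yields $12C_1T(\beta_T+1)\gamma_T + \beta_T^{1/2}$.

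In the continuous case the corresponding step is $g_t(\*x_t) \ge g_t([\*x^*]_t)$. The cross evaluation is then at the grid point $[\*x^*]_t$, which is again conditionally independent of $g_t$.
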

\begin{proof}
    Let $U_t(\*x) = \mu_{t-1}(\*x) + \beta_t^{1/2} \sigma_{t-1}(\*x)$ and $(a)_+ = \max\{0, a \}$.

    First, we consider the case of $|\cX| < \infty$.
    We can obtain
    \begin{align}
        \EE \left[ R_T^2 \right]
        &\leq \EE \left[ \left( \sum_{t=1}^T f(\*x^*) - U_t(\*x^*) + U_t(\*x^*) - g_t(\*x^*) + g_t(\*x_t) - U_t(\*x_t) + U_t(\*x_t) - f(\*x_t) \right)^2 \right] \\
        &\leq \EE \left[ \left( \sum_{t=1}^T \bigl(f(\*x^*) - U_t(\*x^*)\bigr)_+ + U_t(\*x^*) - g_t(\*x^*) + \bigl(g_t(\*x_t) - U_t(\*x_t)\bigr)_+ + U_t(\*x_t) - f(\*x_t) \right)^2 \right],
    \end{align}
    which holds because $g_t(\*x_t) \geq g_t(\*x^*)$ from the nature of GP-TS and $f(\*x^*) - f(\*x_t) \geq 0$ for all $t \in [T]$.
    Furthermore, by the Cauchy-Schwarz inequality, we obtain
    \begin{align}
        \EE \left[ R_T^2 \right]
        \leq 6T \sum_{t = 1}^T \EE& \biggl[ 
            \bigl( f(\*x^*) - U_t(\*x^*) \bigr)_+^2
            + \beta_t \sigma_{t-1}^2(\*x^*)
            + \bigl( \mu_{t-1} (\*x^*) - g_t(\*x^*) \bigr)^2 \\
            &+ \bigl( g_t(\*x_t) - U_t(\*x_t) \bigr)_+^2
            + \beta_t \sigma_{t-1}^2(\*x_t)
            + \bigl( \mu_{t-1} (\*x_t) - f(\*x_t) \bigr)^2
            \biggr].
    \end{align}
    From the independence $\*x_t \indep f \mid \cD_{t-1}$ and $\*x^* \indep g_t \mid \cD_{t-1}$ for all $t \in [T]$, we see that
    \begin{align}
        \EE\left[\bigl( \mu_{t-1} (\*x^*) - g_t(\*x^*) \bigr)^2 \right] 
        &= \EE \left[ \EE \left[\bigl( \mu_{t-1} (\*x^*) - g_t(\*x^*) \bigr)^2 \mid \*x^*, \cD_{t-1} \right] \right]
        =\EE \left[\sigma_{t-1}^2 (\*x^*)\right], \\
        \EE\left[\bigl( \mu_{t-1} (\*x_t) - f(\*x_t) \bigr)^2 \right] 
        &= \EE \left[ \EE \left[ \bigl(\mu_{t-1} (\*x_t) - f(\*x_t) \bigr)^2 \mid \*x_t, \cD_{t-1} \right] \right]
        = \EE\left[ \sigma_{t-1}^2(\*x_t) \right].
    \end{align}
    In addition, since $\*x_t \mid \cD_{t-1}$ and $\*x^* \mid \cD_{t-1}$ are identically distributed from the nature of GP-TS, we obtain $\EE \left[\sigma_{t-1}^2 (\*x^*)\right] = \EE\left[ \sigma_{t-1}^2(\*x_t) \right]$.
    Hence, we derive
    \begin{align}
        \EE \left[ R_T^2 \right]
        \leq 6T \sum_{t = 1}^T \EE \left[ 
            \underbrace{\bigl( f(\*x^*) - U_t(\*x^*) \bigr)_+^2 + \bigl( g_t(\*x_t) - U_t(\*x_t) \bigr)_+^2}_{A_1}
            + \underbrace{2(\beta_t + 1) \sigma_{t-1}^2(\*x_t)}_{A_2}
            \right].
    \end{align}

    Regarding the term $A_1$, from Lemma~\ref{lem:expected_resisual}, we see that
    \begin{align}
        6T \sum_{t = 1}^T \EE \biggl[ 
            \bigl( f(\*x^*) - U_t(\*x^*) \bigr)_+^2 
            + \bigl( g_t(\*x_t) - U_t(\*x_t) \bigr)_+^2
            \biggr]
        &\leq 12T \sum_{t=1}^T \frac{2\beta_t^{1/2}}{24T^2} = \beta_T^{1/2}.
    \end{align}
    Regarding the term $A_2$, since $\sum_{t=1}^T \sigma_{t-1}^2 (\*x_t) \leq C_1 \gamma_T$ \citep{Srinivas2010-Gaussian}, we obtain
    \begin{align}
        6T \sum_{t = 1}^T \EE \left[ 2(\beta_t + 1) \sigma_{t-1}^2(\*x_t) \right]
        &\leq 12 C_1 T (\beta_T + 1) \gamma_T.
    \end{align}
    Consequently, we derive the following desired result:
    \begin{align}
        \EE \left[ R_T^2 \right]
        &\leq 
        12 C_1 T (\beta_T + 1) \gamma_T + \beta_T^{1/2}
        = O(T \beta_T \gamma_T).
    \end{align}

    Second, we consider the case of $\cX \subset [0, r]^d$.
    Purely for the sake of analysis, we consider equally divided grid points $\cX_t$, where the grid size for each dimension is $\tau_t = \lceil d r L T \rceil$, $|\cX_t| = \tau_t^d$, and $L = \max\{ b (\sqrt{\log (ad) + 1}), L_\sigma \}$ with $L_\sigma$ defined in Lemma~\ref{lem:Lipschitz_std}.
    We denote $[\*x]_t = \argmin_{\*x^\prime \in \cX_t} \| \*x - \*x^\prime \|_1$.
    Therefore, $\sup_{\*x \in \cX} \| \*x - [\*x]_t \|_1 \leq \frac{1}{LT}$.

    Since $g_t(\*x_t) \geq g_t([\*x^*]_t)$, we can obtain
    \begin{align}
        \EE \left[ R_T^2 \right]
        \leq \EE \biggl[& \biggl( \sum_{t=1}^T f(\*x^*) - f([\*x^*]_t) + f([\*x^*]_t) - U_t([\*x^*]_t) + U_t([\*x^*]_t) - g_t([\*x^*]_t) + g_t(\*x_t) - g_t([\*x_t]_t) \\& + g_t([\*x_t]_t) - U_t([\*x_t]_t) + U_t([\*x_t]_t) - f([\*x_t]_t) + f([\*x_t]_t) - f(\*x_t) \biggr)^2 \biggr] \\
        \leq 9T \sum_{t=1}^T \EE \biggl[& 
            \underset{A_1}{\underline{\bigl( f(\*x^*) - f([\*x^*]_t) \bigr)^2
            + \bigl( g_t(\*x_t) - g_t([\*x_t]_t) \bigr)^2 
            + \bigl( f([\*x_t]_t) - f(\*x_t) \bigr)^2}} \\
            &+ \underset{A_2}{\underline{\bigl( f([\*x^*]_t) - U_t([\*x^*]_t) \bigr)_+^2 
            + \bigl( g_t([\*x_t]_t) - U_t([\*x_t]_t) \bigr)_+^2}} \\
            &+ \underset{A_3}{\underline{\bigl( \mu_{t-1}([\*x^*]_t) - g_t([\*x^*]_t) \bigr)^2 
            + \bigl( \mu_{t-1}([\*x_t]_t) - f([\*x_t]_t) \bigr)^2
            + \beta_t \sigma_{t-1}^2([\*x^*]_t) + \beta_t \sigma_{t-1}^2([\*x_t]_t) }}
            \biggr].
    \end{align}
    For the term $A_1$, from Lemma~\ref{lem:squared_discretized_error},
    \begin{align}
        9T \sum_{t=1}^T \EE \biggl[
            \bigl( f(\*x^*) - f([\*x^*]_t) \bigr)^2
            + \bigl( g_t(\*x_t) - g_t([\*x_t]_t) \bigr)^2 
            + \bigl( f([\*x_t]_t) - f(\*x_t) \bigr)^2
            \biggr]
        &\leq 27T \sum_{t=1}^T \frac{1}{T^2} 
        = 27.
    \end{align}
    Note that, since $g_t$ and $f$ are identically distributed, we can apply Lemma~\ref{lem:squared_discretized_error} to $g_t$.
    For the term $A_2$, as with the case of $|\cX| < \infty$, from Lemma~\ref{lem:expected_resisual}
    \begin{align}
        9T \sum_{t=1}^T \EE \biggl[
            \bigl( f([\*x^*]_t) - U_t([\*x^*]_t) \bigr)_+^2 
            + \bigl( g_t([\*x_t]_t) - U_t([\*x_t]_t) \bigr)_+^2
            \biggr]
        &\leq 18T \sum_{t=1}^T \frac{2 \beta_t^{1/2}}{36T^2} 
        = \beta_T^{1/2}.
    \end{align}
    For the term $A_3$, as with the case of $|\cX| < \infty$
    \begin{align}
        &9T \sum_{t=1}^T \EE \biggl[
            \bigl( \mu_{t-1}([\*x^*]_t) - g_t([\*x^*]_t) \bigr)^2 
            + \bigl( \mu_{t-1}([\*x_t]_t) - f([\*x_t]_t) \bigr)^2
            + \beta_t \sigma_{t-1}^2([\*x^*]_t) + \beta_t \sigma_{t-1}^2([\*x_t]_t)
            \biggr] \\
        &\leq 18T (\beta_T + 1) \sum_{t=1}^T \EE \biggl[ \sigma_{t-1}^2([\*x_t]_t) \biggr].
    \end{align}
    From Lemma~\ref{lem:Lipschitz_std} and $\sigma_{t-1}(\*x) \leq 1$ for all $\*x \in \cX$, we see that for any $t \geq 1$ and $\cD_{t-1}$,
    \begin{align}
        \forall \*x, \*x^\prime \in \cX, 
        |\sigma_{t-1}^2(\*x) - \sigma_{t-1}^2(\*x^\prime)| 
        \leq |\sigma_{t-1}(\*x) + \sigma_{t-1}(\*x^\prime)| |\sigma_{t-1}(\*x) - \sigma_{t-1}(\*x^\prime)| 
        \leq 2 L _{\sigma} \| \*x - \*x^\prime \|_1.
    \end{align}
    Thus, from $\sum_{t=1}^T \sigma_{t-1}^2 (\*x_t) \leq C_1 \gamma_T$ \citep{Srinivas2010-Gaussian} and the definition of $\cX_t$, we obtain
    \begin{align}
        \sum_{t=1}^T \EE \biggl[ \sigma_{t-1}^2([\*x_t]_t) \biggr]
        &\leq \sum_{t=1}^T \EE \biggl[ |\sigma_{t-1}^2([\*x_t]_t) - \sigma_{t-1}^2(\*x_t)| + \sigma_{t-1}^2(\*x_t) \biggr] \\
        &\leq \sum_{t=1}^T \frac{2}{T} + \sum_{t=1}^T \EE \biggl[ \sigma_{t-1}^2(\*x_t) \biggr] \\
        &\leq 2 + C_1 \gamma_T.
    \end{align}
    Combining all the results, we see that 
    \begin{align}
        \EE[R_T^2] 
        \leq 
        27
        + \beta_T^{1/2}
        + 18T (\beta_T + 1) \left(2 + C_1 \gamma_T\right) 
        = O(T \beta_T \gamma_T).
    \end{align}
\end{proof}

\section{Proof of Theorem~\ref{theo:TS_LR_UB_specific}}
\label{sec:proof_LR_UB}

First, we show the following general theorem to show the kernel-specific results in Theorem~\ref{theo:TS_LR_UB_specific}.

\begin{theorem}
    Fix  $\Delta > 0$ and let $\cT = \{ t \mid f(\*x^*) - f(\*x_t) \geq \Delta\}$.
    Then, if Algorithm~\ref{alg:TS} runs, the following inequality holds for any $T \geq 2$:
    \begin{itemize}
        \item If Assumption~\ref{assump:GP} holds and $|\cX| < \infty$,
        \begin{align}
            \EE[|\cT|] \leq T_{\rm max} \leq \frac{32 C_1 \beta_T \tilde{\gamma}_{T_{\rm max}}}{\Delta^2} + \frac{1}{T},
        \end{align}
        where $C_1 = 2 / \log(1 + \sigma^{-2})$, $T_{\rm max} = \max\{ t \in \RR_{\geq 0} \mid t \leq \frac{32 C_1 \beta_T \tilde{\gamma}_t}{\Delta^2} + \frac{1}{T}\}$, and $\beta_t = 2\log (2|\cX| T^2 t (t+1))$
        \item If Assumptions~\ref{assump:GP} and \ref{assump:continuous_X} hold,
        \begin{align}
            \EE[|\cT|] \leq T_{\rm max} \leq \frac{32 C_1 \beta_T \tilde{\gamma}_{T_{\rm max}}}{\Delta^2} + \frac{8 \pi^2 (\beta_T^{1/2} + 1)^2}{\Delta^2} + \frac{1}{T},
        \end{align}
        where $C_1 = 2 / \log(1 + \sigma^{-2})$, $T_{\rm max} = \max\{ t \in \RR_{\geq 0} \mid t \leq \frac{32 C_1 \beta_T \tilde{\gamma}_{t}}{\Delta^2} + \frac{8 \pi^2 (\beta_T^{1/2} + 1)^2}{\Delta^2} + \frac{1}{T}\}$, and $\beta_t = 2\log (2 \lceil dr L t \rceil^d T^2 t (t+1))$, and $L = \max\{ b \sqrt{\log (2ad T^2)}, L_\sigma \}$ with $L_\sigma$ defined in Lemma~\ref{lem:Lipschitz_std}.
    \end{itemize}
    \label{theo:TS_LenientRegretCountBound}
\end{theorem}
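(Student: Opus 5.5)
The plan is to start from the counting trick in the proof sketch of Theorem~\ref{theo:TS_LR_UB_specific}. Since $(f(\*x^*) - f(\*x_t))/\Delta \geq 1$ for $t \in \cT$,
\begin{align*}
    |\cT| \leq \frac{1}{\Delta^2} \sum_{t \in \cT} \bigl(f(\*x^*) - f(\*x_t)\bigr)^2 .
\end{align*}

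\textbf{Finite case.} We decompose each regret term as in the proof of Theorem~\ref{theo:CumulativeTS_informal}, using $g_t(\*x_t) \ge g_t(\*x^*)$:
\begin{align*}
    f(\*x^*) - f(\*x_t) \le \bigl(f(\*x^*) - U_t(\*x^*)\bigr)_+ + \bigl(U_t(\*x^*) - g_t(\*x^*)\bigr) + \bigl(g_t(\*x_t) - U_t(\*x_t)\bigr)_+ + \bigl(U_t(\*x_t) - f(\*x_t)\bigr).
\end{align*}
We use $\beta_t = 2\log(2|\cX|T^2 t(t+1))$, which is chosen so that a union bound over $\cX$ and $t$ gives a good event $G$ of probability at least $1 - 1/T^2$. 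On $G$, $|f(\*x) - \mu_{t-1}(\*x)| \le \beta_t^{1/2}\sigma_{t-1}(\*x)$ and $|g_t(\*x) - \mu_{t-1}(\*x)| \le \beta_t^{1/2}\sigma_{t-1}(\*x)$ for all $\*x, t$.

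On $G$ the positive-part terms vanish, so
\begin{align*}
    f(\*x^*) - f(\*x_t) \le 2\beta_t^{1/2}\bigl(\sigma_{t-1}(\*x^*) + \sigma_{t-1}(\*x_t)\bigr),
\end{align*}
and squaring gives
\begin{align*}
    \bigl(f(\*x^*) - f(\*x_t)\bigr)^2 \le 8\beta_T\bigl(\sigma_{t-1}^2(\*x^*) + \sigma_{t-1}^2(\*x_t)\bigr).
\end{align*}
On $G^c$ we bound $|\cT| \le T$, which contributes at most $T \cdot T^{-2} = 1/T$ to the expectation. Hence
\begin{align*}
    \EE|\cT| \le \frac{8\beta_T}{\Delta^2}\,\EE\Bigl[\sum_{t\in\cT}\bigl(\sigma^2_{t-1}(\*x^*) + \sigma^2_{t-1}(\*x_t)\bigr)\Bigr] + \frac{1}{T}.
\end{align*}
Alternatively, the $G^c$ part can be handled through Lemma~\ref{lem:expected_resisual}-type residual bounds on the squared positive parts.

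\textbf{Main obstacle.} We must bound $\EE[\sum_{t\in\cT}\sigma^2_{t-1}(\*x^*)]$ and $\EE[\sum_{t\in\cT}\sigma^2_{t-1}(\*x_t)]$ by a constant times $C_1\tilde\gamma_{\EE|\cT|}$. This is Lemma~\ref{lem:ExpectedEllipticalPotentialLemma}, and it is delicate because conditioning on $t\in\cT$ breaks the identical distribution of $\*x_t$ and $\*x^*$ given $\cD_{t-1}$.

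For the $\*x_t$ term, the route is as follows. The variances at the queried points in $\cT$ obey the elliptical-potential bound restricted to a subsequence: conditioning on fewer points only increases variance, but $\sigma_{t-1}(\*x_t) \le \sigma_{\cT\cap[t-1]}(\*x_t)$ goes the wrong way. So one instead applies the information-gain bound for the subsequence, using that the variance given all of $\cD_{t-1}$ is at most the variance given only the $\cT$-points. This yields $\sum_{t\in\cT}\sigma^2_{t-1}(\*x_t) \le C_1\tilde\gamma_{|\cT|}$ pathwise. Jensen's inequality with the concavity of $\tilde\gamma$ then gives $C_1\tilde\gamma_{\EE|\cT|}$.

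For the $\*x^*$ term, I would write $\mathbbm{1}[t\in\cT]$ and try to bound it by exploiting that $\*x^*$ and $\*x_t$ are exchangeable given $\cD_{t-1}$. The indicator depends on both $\*x^*$ and $\*x_t$ but is symmetric only up to sign, so first try to dominate $\mathbbm{1}[t\in\cT]\sigma^2_{t-1}(\*x^*)$ by a quantity whose conditional expectation is comparable to the $\*x_t$ term, possibly losing the constant factor that turns 8 into 32.

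\textbf{Closing the recursion.} This gives
\begin{align*}
    \EE|\cT| \le \frac{32C_1\beta_T\tilde\gamma_{\EE|\cT|}}{\Delta^2} + \frac{1}{T}.
\end{align*}
So $\EE|\cT|$ lies in the set defining $T_{\max}$, and hence $\EE|\cT| \le T_{\max}$. The bound on $T_{\max}$ itself follows by evaluating the defining inequality at $t = T_{\max}$, since the set is closed by continuity.

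\textbf{Continuous case.} We repeat the argument on the grids $\cX_t$ of size $\lceil drLt\rceil^d$, adding discretization terms $f(\*x^*) - f([\*x^*]_t)$ and the analogous terms for $g_t$. The choice $L = \max\{b\sqrt{\log(2adT^2)}, L_\sigma\}$ puts the Lipschitz event of Assumption~\ref{assump:continuous_X} into $G$ and makes each discretization error $O(1/t^2)$. The squared sum of these errors over $t$ in $\cT$, together with the $\sigma$-Lipschitz correction from Lemma~\ref{lem:Lipschitz_std} multiplied by $\beta_t^{1/2}$, is bounded via $\sum 1/t^2 = \pi^2/6$. This produces the extra term $8\pi^2(\beta_T^{1/2}+1)^2/\Delta^2$.
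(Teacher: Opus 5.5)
Your reduction is the paper's. It uses the counting inequality, a credible-interval event for both $f$ and $g_t$ with the stated $\beta_t$, and the bound $f(\*x^*)-f(\*x_t)\le 2\beta_t^{1/2}\bigl(\sigma_{t-1}(\*x^*)+\sigma_{t-1}(\*x_t)\bigr)$ on that event. The complement contributes $1/T$, the $\*x_t$ half follows from the pathwise subsequence bound plus Jensen, and the argument closes through the definition of $T_{\max}$. In the $\*x_t$ half, $\sigma_{t-1}(\*x_t)\le\sigma_{\cT\cap[t-1]}(\*x_t)$ is exactly the direction you need, not the wrong one. Splitting $\EE|\cT|=\EE[|\cT|\mathbbm{1}_G]+\EE[|\cT|\mathbbm{1}_{G^c}]$ is slightly cleaner than the paper, which conditions on the event and pays a factor $2$ via $\Pr(E)\ge 1/2$. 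The paper's $32$ is $8\cdot 2\cdot 2$: squaring, removing the conditioning, and two variance sums each bounded by $C_1\tilde\gamma_{\EE|\cT|}$. Your route gives $16$, so nothing needs to be lost on the $\*x^*$ term.

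The genuine gap is that $\*x^*$ term. ``Try to dominate $\mathbbm{1}[t\in\cT]\sigma_{t-1}^2(\*x^*)$ by something comparable'' is not an argument. The obstacle you describe (symmetry only up to sign) comes from exchanging $\*x^*$ and $\*x_t$ while keeping $f$ fixed, which does not preserve the law because $\*x^*$ is a function of $f$. The missing idea is to exchange the pairs $(f,\*x^*)$ and $(g_t,\*x_t)$ together. Given $\cD_{t-1}$, $f$ and $g_t$ are i.i.d., so $(f,g_t,\*x^*,\*x_t)$ and $(g_t,f,\*x_t,\*x^*)$ have the same conditional law. Since $\sigma_{t-1}$ is $\cD_{t-1}$-measurable,
\[
\EE\bigl[\sigma_{t-1}^2(\*x^*)\mathbbm{1}\{f(\*x^*)-f(\*x_t)\ge\Delta\}\bigm|\cD_{t-1}\bigr]=\EE\bigl[\sigma_{t-1}^2(\*x_t)\mathbbm{1}\{g_t(\*x_t)-g_t(\*x^*)\ge\Delta\}\bigm|\cD_{t-1}\bigr].
\]
Summing over $t$ gives $\EE\bigl[\sum_{t\in\cT}\sigma_{t-1}^2(\*x^*)\bigr]=\EE\bigl[\sum_{t\in\cT_g}\sigma_{t-1}^2(\*x_t)\bigr]$ with $\cT_g=\{t\mid g_t(\*x_t)-g_t(\*x^*)\ge\Delta\}$. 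Your pathwise subsequence bound applies to $\cT_g$ as to any index set. The same identity with $\sigma_{t-1}^2$ replaced by $1$ gives $\EE|\cT_g|=\EE|\cT|$, so Jensen yields exactly $C_1\tilde\gamma_{\EE|\cT|}$. This is the $\*x^*$ half of Lemma~\ref{lem:ExpectedEllipticalPotentialLemma}, and citing that lemma as stated would close your proof.

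The naive domination $\mathbbm{1}[t\in\cT]\sigma_{t-1}^2(\*x^*)\le\sigma_{t-1}^2(\*x^*)$ only gives $C_1\gamma_T$, which loses the fixed-point form of $T_{\max}$. A domination in the spirit of your plan does work if it bounds $|\cT|$ directly rather than the variance sum. On the good event, $t\in\cT$ forces $\max\{\sigma_{t-1}(\*x^*),\sigma_{t-1}(\*x_t)\}\ge\Delta/(4\beta_T^{1/2})$. Exchangeability of $\*x^*$ and $\*x_t$ given $\cD_{t-1}$, plus a pathwise count of queries with $\sigma_{t-1}(\*x_t)\ge\Delta/(4\beta_T^{1/2})$, then gives the bound with constant $32$.

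Two minor points in the continuous case:
\begin{itemize}
\item With $\lceil drLt\rceil$ grid points per dimension, the discretization error is $O(1/t)$; it is its square that is $O(1/t^2)$.
\item The $g_t$ discretization terms need a Lipschitz event for every $g_t$, $t\le T$, not only for $f$. Each $g_t$ has the prior law marginally, so a union bound over $t$ handles this at the cost of a slightly larger logarithm inside $L$.
\end{itemize}
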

\begin{proof}
    First, we consider the case of $|\cX| < \infty$.
    We define the event for the credible intervals of $f$ and $g_t$:
    \begin{align}
        E = 
        \left\{ 
            \forall t \geq 1, \forall \*x \in \cX, 
            |f(\*x) - \mu_{t-1}(\*x)| \leq \beta_t^{1/2} \sigma_{t-1}(\*x), 
            |g_t(\*x) - \mu_{t-1}(\*x)| \leq \beta_t^{1/2} \sigma_{t-1}(\*x)
        \right\},
    \end{align}
    where $\beta_t = 2\log (2|\cX| T^2 t (t+1) )$.
    We denote a complementary event of $E$ as $E^c$.
    Then, using Lemma~\ref{lem:GPcredible_interval} (Lemma~5.1 of \citep{Srinivas2010-Gaussian}) and the union bound, $\Pr(E^c) \leq 1 / T^2$.
    Therefore, we can see that
    \begin{align}
        \EE[|\cT|]
        &= \EE[|\cT| \mid E] \Pr(E) + \EE[|\cT| \mid E^c] \Pr(E^c) \\
        &\leq \EE[|\cT| \mid E] + \frac{1}{T},
    \end{align}
    where the inequality holds because $\Pr(E) \leq 1$, $\EE[|\cT| \mid E^c] \leq T$, and $\Pr(E^c) \leq 1 / T^2$.

    Regarding $\EE[|\cT| \mid E]$, as with the proof of the elliptical potential count lemma \citep{flynn2025-tighter,iwazaki2025-improvedGPbandit}, we can obtain the upper bound as follows:
    \begin{align}
        \EE[|\cT| \mid E]
        &= \EE \left[ \sum_{t \in \cT} \min \left\{1, \frac{f(\*x^*) - f(\*x_t)}{\Delta} \right\} \biggm| E \right] \\
        &\overset{(a)}{\leq} \EE \left[ \sum_{t \in \cT} \min \left\{1, \frac{\bigl(f(\*x^*) - f(\*x_t)\bigr)^2}{\Delta^2} \right\} \biggm| E \right] \\
        &\leq \EE \left[ \sum_{t \in \cT} \frac{\bigl(f(\*x^*) - f(\*x_t)\bigr)^2}{\Delta^2} \biggm| E \right] \\
        &\overset{(b)}{\leq} \frac{1}{\Delta^2} \EE \left[ \sum_{t \in \cT} 4 \beta_t \bigl( \sigma_{t-1}(\*x^*) + \sigma_{t-1}(\*x_t)\bigr)^2 \biggm| E \right] \\
        &\overset{(c)}{\leq} \frac{8 \beta_T}{\Delta^2} \left( \EE \left[ \sum_{t \in \cT} \sigma_{t-1}^2(\*x^*) \biggm| E \right] + \EE \left[ \sum_{t \in \cT} \sigma_{t-1}^2(\*x_t)\bigr) \biggm| E \right] \right) \\
        &\overset{(d)}{\leq} \frac{16 \beta_T}{\Delta^2} \left( \EE \left[ \sum_{t \in \cT} \sigma_{t-1}^2(\*x^*) \right] + \EE \left[ \sum_{t \in \cT} \sigma_{t-1}^2(\*x_t) \right] \right),
    \end{align}
    where the inequalities hold because of
    (a) $\frac{f(\*x^*) - f(\*x_t)}{\Delta} \geq 1$ for $t \in \cT$, 
    (b) $f(\*x_t) + 2 \beta_t^{1/2} \sigma_{t-1}(\*x_t) \geq g_t(\*x_t) \geq g_t(\*x^*) \geq f(\*x^*) - 2 \beta_t^{1/2} \sigma_{t-1}(\*x^*)$ due to the conditioning by $E$ and the nature of GP-TS, 
    (c) $(a + b)^2 \leq 2(a^2 + b^2)$ for any $a, b \in \RR$,
    and (d) the definition of the conditional expectation $\EE[X | E] = \EE[X \mathbbm{1}_E] / \Pr(E)$ (Example 34.1 of \citep{billingsley2013convergence}), $\Pr(E) \geq 1 - 1 / T^2 \geq 1/2$ due to $T \geq 2$, and $\sigma_{t-1}^2(\*x_t) \geq 0$.

    Therefore, combining Lemma~\ref{lem:ExpectedEllipticalPotentialLemma}, we obtain
    \begin{align}
        \EE[|\cT| \mid E]
        &\leq \frac{32 C_1 \beta_T \tilde{\gamma}_{\EE[|\cT|]}}{\Delta^2}.
    \end{align}
    Thus, we see that
    \begin{align}
        \EE[|\cT|] \leq \frac{32 C_1 \beta_T \tilde{\gamma}_{\EE[|\cT|]}}{\Delta^2} + \frac{1}{T}
    \end{align}
    Hence, $\EE[|\cT|] \in \{ t \in \RR_{\geq 0} \mid t \leq \frac{32 C_1 \beta_T \tilde{\gamma}_t}{\Delta^2} + \frac{1}{T}\}$ and $\EE[|\cT| \leq T_{\rm max}$ from the definition of $T_{\rm max}$.


    Next, we consider the case of $\cX \subset [0, r]^d$.
    Purely for the sake of analysis, we consider equally divided grid points $\cX_t$, where the grid size for each dimension is $\tau_t = \lceil d r L t \rceil$ and $|\cX_t| = \tau_t^d$, where $L = \max\{ b \sqrt{\log (2ad T^2)}, L_\sigma \}$ with $L_\sigma$ defined in Lemma~\ref{lem:Lipschitz_std}.
    We denote $[\*x]_t = \argmin_{\*x^\prime \in \cX_t} \|\*x - \*x^\prime \|_1$.
    Therefore, $\sup_{\*x \in \cX} \| \*x - [\*x]_t \|_1 \leq 1 / (Lt)$.

    We define
    \begin{align}
       E_{\rm Lip} = \left\{ \| f \|_{\rm Lip} \leq b \sqrt{\log (2ad T^2)} \right\},
    \end{align}
    where $\| f \|_{\rm Lip}$ is the Lipschitz constant of $f$.
    From Assumption~\ref{assump:continuous_X}, $\Pr(E_{\rm Lip}^c) \leq 1 / (2 T^2)$.
    In addition, we define the event for the credible intervals of $f$ and $g_t$:
    \begin{align}
        E = 
        \left\{ 
            \forall t \geq 1, \forall \*x \in \cX_t, 
            |f(\*x) - \mu_{t-1}(\*x)| \leq \beta_t^{1/2} \sigma_{t-1}(\*x), 
            |g_t(\*x) - \mu_{t-1}(\*x)| \leq \beta_t^{1/2} \sigma_{t-1}(\*x)
        \right\},
    \end{align}
    where $\beta_t = 2\log (2|\cX_t| T^2 t (t+1)) = 2\log (2 \tau_t^d T^2 t (t+1))$.
    From Lemma~\ref{lem:GPcredible_interval} and the union bound, $\Pr(E^c) \leq 1 / (2 T^2)$.
    Hence, we see $\Pr( E^c \lor E_{\rm Lip}^c ) \leq 1 / T^2$.

    Therefore, we can see that
    \begin{align}
        \EE[|\cT|]
        &= \EE[|\cT| \mid E \land E_{\rm Lip}] \Pr(E \land E_{\rm Lip}) + \EE[|\cT| \mid E^c \lor E_{\rm Lip}^c] \Pr(E^c \lor E_{\rm Lip}^c) \\
        &\leq \EE[|\cT| \mid E \land E_{\rm Lip}] + \frac{1}{T},
    \end{align}
    where the inequality holds because $\Pr(E \land E_{\rm Lip}) \leq 1$, $|\cT| \leq T$ a.s., and $\Pr(E^c \lor E_{\rm Lip}^c) \leq 1 / T^2$.
    Then, as with the case of discrete $\cX$, we obtain
    \begin{align}
        \EE[|\cT| \mid E \land E_{\rm Lip}]
        &= \EE \left[ \sum_{t \in \cT} \min \left\{1, \frac{f(\*x^*) - f(\*x_t)}{\Delta} \right\} \biggm| E \land E_{\rm Lip} \right] \\
        &\leq \EE \left[ \sum_{t \in \cT} \min \left\{1, \frac{\bigl(f(\*x^*) - f(\*x_t)\bigr)^2}{\Delta^2} \right\} \biggm| E \land E_{\rm Lip} \right] \\
        &\leq \EE \left[ \sum_{t \in \cT} \frac{\bigl(f(\*x^*) - f(\*x_t)\bigr)^2}{\Delta^2} \biggm| E \land E_{\rm Lip} \right]
    \end{align}
    where the first inequalities hold because $1 \leq \frac{f(\*x^*) - f(\*x_t)}{\Delta}$ due to the definition of $\cT$.

    Since the event $E$ and $E_{\rm Lip}$ holds and discretization error can be bounded from above by $1 / t$ from Lemma~\ref{lem:discretized_error}, for all $t \geq 1$,
    \begin{align}
        f(\*x^*) - f(\*x_t)
        &\leq |f(\*x^*) - f([\*x^*]_t)| + |f([\*x^*]_t) - \mu_{t-1}([\*x^*]_t)| + \mu_{t-1}([\*x^*]_t) - \mu_{t-1}([\*x_t]_t) \\
        &\qquad + |f([\*x_t]_t) - \mu_{t-1}([\*x_t]_t)| + |f(\*x_t) - f([\*x_t]_t)| \\
        &\leq \mu_{t-1}([\*x^*]_t) - \mu_{t-1}([\*x_t]_t) + \beta_t^{1/2} \bigl( \sigma_{t-1}( [\*x^*]_t) + \sigma_{t-1}( [\*x_t]_t) \bigr) + \frac{2}{t}.
    \end{align}
    Furthermore, we can obtain
    \begin{align}
        f(\*x^*) - f(\*x_t) 
        &\leq 2 \beta_t^{1/2} \bigl(\sigma_{t-1}([\*x^*]_t) + \sigma_{t-1}([\*x_t]_t) \bigr) + \frac{4}{t},
    \end{align}
    because
    \begin{align}
        &g_t(\*x_t) \geq g_t(\*x^*) \quad \text{(from the nature of GP-TS)}\\
        &\Rightarrow \mu_{t-1}([\*x_t]_t) + \beta_t^{1/2} \sigma_{t-1}([\*x_t]_t) + \frac{1}{t} \geq \mu_{t-1}([\*x^*]_t) - \beta_t^{1/2} \sigma_{t-1}([\*x^*]_t) - \frac{1}{t} \\
        &\Rightarrow \mu_{t-1}([\*x^*]_t) - \mu_{t-1}([\*x_t]_t) \leq \beta_t^{1/2} \bigl(\sigma_{t-1}([\*x^*]_t) + \sigma_{t-1}([\*x_t]_t) \bigr) + \frac{2}{t}.
    \end{align}
    Finally, we see that
    \begin{align}
        f(\*x^*) - f(\*x_t) 
        &\leq 2 \beta_t^{1/2} \bigl(\sigma_{t-1}(\*x^*) + \sigma_{t-1}(\*x_t) \bigr) + \frac{4 (\beta_t^{1/2} + 1)}{t},
    \end{align}
    because
    \begin{align}
        \sup_{\*x \in \cX} | \sigma_{t-1}(\*x) - \sigma_{t-1}([\*x]_t) | \leq L_\sigma \| \*x - [\*x]_t \|_1 \leq \frac{1}{t},
    \end{align}
    due to Lemma~\ref{lem:Lipschitz_std}.

    Hence, noting $f(\*x^*) - f(\*x_t) \geq 0$ for all $t \geq 1$, we obtain
    \begin{align}
        \EE[|\cT| \mid E \land E_{\rm Lip}]
        &\leq \frac{1}{\Delta^2} \EE \left[ \sum_{t \in \cT}\left( 2 \beta_t^{1/2} \bigl( \sigma_{t-1}(\*x^*) + \sigma_{t-1}(\*x_t) \bigr) + \frac{4(\beta_t^{1/2} + 1)}{t} \right)^2 \biggm| E \land E_{\rm Lip} \right] \\
        &\overset{(a)}{\leq} \frac{12 \beta_T}{\Delta^2} \EE \left[ \sum_{t \in \cT} \left\{ \sigma_{t-1}^2(\*x^*) + \sigma_{t-1}^2(\*x_t) \right\} \biggm| E \land E_{\rm Lip} \right] + \frac{48}{\Delta^2} \EE \left[ \sum_{t \in \cT} \frac{(\beta_t^{1/2} + 1)^2}{t^2} \biggm| E \land E_{\rm Lip} \right]  \\
        &\overset{(b)}{\leq} \frac{12 \beta_T}{\Delta^2} \EE \left[ \sum_{t \in \cT} \left\{ \sigma_{t-1}^2(\*x^*) + \sigma_{t-1}^2(\*x_t) \right\} \biggm| E \land E_{\rm Lip} \right] + \frac{8 \pi^2 (\beta_T^{1/2} + 1)^2}{\Delta^2},
    \end{align}
    where the inequalities hold because of the monotonicity of $\beta_t$,
    (a) the Cauchy--Schwarz inequality, 
    and (b) $\sum_{t = 1}^T 1 / t^2 \leq \pi^2 / 6$.

    Finally, for the term $\EE \left[ \sum_{t \in \cT} \left\{ \sigma_{t-1}^2(\*x^*) + \sigma_{t-1}^2(\*x_t) \right\} \biggm| E \land E_{\rm Lip} \right]$, since we can apply the same proof as the case of $|\cX| < \infty$ (with $\Pr(E \land E_{\rm Lip}) \geq 1 - 1 / T^2 \geq 3 / 4$ due to $T \geq 2$), we obtain
    \begin{align}
        \EE[|\cT| \mid E \land E_{\rm Lip}]
        \leq \frac{32 C_1 \beta_T \tilde{\gamma}_{\EE[|\cT|]}}{\Delta^2} + \frac{8 \pi^2 (\beta_T^{1/2} + 1)^2}{\Delta^2},
    \end{align}
    and
    \begin{align}
        \EE[|\cT|]
        \leq \frac{32 C_1 \beta_T \tilde{\gamma}_{\EE[|\cT|]}}{\Delta^2} + \frac{8 \pi^2 (\beta_T^{1/2} + 1)^2}{\Delta^2} + \frac{1}{T}.
    \end{align}
    Hence, $\EE[|\cT|] \in \{ t \in \RR_{\geq 0} \mid t \leq \frac{32 C_1 \beta_T \tilde{\gamma}_{\EE[|\cT|]}}{\Delta^2} + \frac{8 \pi^2 (\beta_T^{1/2} + 1)^2}{\Delta^2} + \frac{1}{T}\}$ and $\EE[|\cT| \leq T_{\rm max}$ from the definition of $T_{\rm max}$.
\end{proof}


Next, we show the general lenient regret upper bounds for $\sum_{t \in \cT} f(\*x^*) - f(\*x_t)$.

\begin{theorem}
    Fix  $\Delta > 0$ and $\delta \in (0, 1)$ and let $\cT = \{ t \mid f(\*x^*) - f(\*x_t) \geq \Delta\}$.
    Then, if Algorithm~\ref{alg:TS} runs, the following inequality holds for any $T \geq 2$:
    \begin{itemize}
        \item If Assumption~\ref{assump:GP} holds and $|\cX| < \infty$,
        \begin{align}
            \EE\left[ \sum_{t \in \cT} f(\*x^*) - f(\*x_t) \right] 
            \leq 4 \sqrt{ C_1 \beta_T \EE[|\cT|] \tilde{\gamma}_{\EE[|\cT|]} } + 1
            \leq 4 \sqrt{ C_1 \beta_T T_{\rm max} \tilde{\gamma}_{T_{\rm max}} } + 1.
        \end{align}
        where $\beta_t = \max\bigl\{1, 2\log (4|\cX| T / \sqrt{2 \pi})\bigr\}$ for all $t \in [T]$ and
        $T_{\rm max}$ is defined as in Theorem~\ref{theo:TS_LenientRegretCountBound}.
        \item If Assumptions~\ref{assump:GP} and \ref{assump:continuous_X} hold,
        \begin{align}
            \EE\left[ \sum_{t \in \cT} f(\*x^*) - f(\*x_t) \right]
            &\leq 4 + 4 \beta_T^{1/2} + 4 \sqrt{C_1 \beta_T \EE[|\cT|] \tilde{\gamma}_{\EE[|\cT|]}}
            \leq 4 + 4 \beta_T^{1/2} + 4 \sqrt{C_1 \beta_T T_{\rm max} \tilde{\gamma}_{T_{\rm max}}}.
        \end{align}
        where 
        $C_1 = 2 / \log(1 + \sigma^{-2})$, 
        $\beta_t = \max\bigl\{1, 2\log \bigl(4\lceil drL T \rceil^d T / \sqrt{2 \pi} \bigr) \bigr\}$ for all $t \in [T]$, 
        $L = \max\{ L_{\sigma}, b \bigl(\sqrt{\log (ad)} + \sqrt{\pi}/2 \bigr)\}$,
        $T_{\rm max}$ is defined as in Theorem~\ref{theo:TS_LenientRegretCountBound},
        and $L_{\sigma}$ is defined as in Lemma~\ref{lem:Lipschitz_std}.
    \end{itemize}
    \label{theo:TS_LenientRegretBound}
\end{theorem}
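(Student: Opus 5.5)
We will follow the Russo--Van Roy decomposition used in the proof of the formal version of Theorem~\ref{theo:CumulativeTS_informal}, now restricted to $t \in \cT$ and taken in first moment. Let $U_t(\*x) = \mu_{t-1}(\*x) + \beta_t^{1/2}\sigma_{t-1}(\*x)$.

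\textbf{Finite $\cX$.} Since $g_t(\*x_t) \geq g_t(\*x^*)$, for $t \in \cT$ we have
\begin{align*}
    f(\*x^*) - f(\*x_t) \leq \bigl(f(\*x^*) - U_t(\*x^*)\bigr)_+ + \bigl(U_t(\*x^*) - g_t(\*x^*)\bigr) + \bigl(g_t(\*x_t) - U_t(\*x_t)\bigr)_+ + \bigl(U_t(\*x_t) - f(\*x_t)\bigr).
\end{align*}
The middle terms cannot simply be averaged using $\*x^* \overset{d}{=} \*x_t \mid \cD_{t-1}$, because the index set $\cT$ depends on $f$. We therefore bound them pathwise by their positive parts. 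We write $U_t(\*x^*) - g_t(\*x^*) \leq \beta_t^{1/2}\sigma_{t-1}(\*x^*) + |\mu_{t-1}(\*x^*) - g_t(\*x^*)|$, and we treat $U_t(\*x_t) - f(\*x_t)$ in the same way.

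The positive-part residuals are summed over all $t \in [T]$, not just $t \in \cT$. By the first-moment version of Lemma~\ref{lem:expected_resisual} (the standard bound $\EE[(Z - \beta^{1/2})_+] \leq \phi(\beta^{1/2})$), this choice of $\beta_t = 2\log(4|\cX|T/\sqrt{2\pi})$ makes each residual at most $1/(4T)$ per $\*x$, after a union over $\cX$ and $t$. The total contribution is therefore at most $1$.

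It then suffices to bound
\[
    \beta_T^{1/2}\,\EE\Bigl[\sum_{t\in\cT} \bigl(\sigma_{t-1}(\*x^*) + \sigma_{t-1}(\*x_t)\bigr)\Bigr]
\]
together with the absolute deviations $|\mu_{t-1} - g_t|(\*x^*)$ and $|\mu_{t-1} - f|(\*x_t)$ over $\cT$. For the absolute deviations, one alternative is to condition on the event $E$ of Theorem~\ref{theo:TS_LenientRegretCountBound}, under which each deviation is at most $\beta_t^{1/2}\sigma_{t-1}$, and to pay $T\Pr(E^c)$ on the complement. The cleaner route mirrors that proof directly: under $E$, $f(\*x^*) - f(\*x_t) \leq 2\beta_t^{1/2}\bigl(\sigma_{t-1}(\*x^*) + \sigma_{t-1}(\*x_t)\bigr)$, and off $E$ the regret is controlled through the residual terms above. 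Either way we reduce to the $\sigma$-sums. These are bounded by the second part of Lemma~\ref{lem:ExpectedEllipticalPotentialLemma}, which gives $\EE[\sum_{t\in\cT}\sigma_{t-1}(\cdot)] \leq \sqrt{C_1 \EE[|\cT|]\tilde\gamma_{\EE[|\cT|]}}$ for both $\*x^*$ and $\*x_t$. Collecting constants yields $4\sqrt{C_1\beta_T\EE[|\cT|]\tilde\gamma_{\EE[|\cT|]}} + 1$. The final inequality uses $\EE[|\cT|] \leq T_{\max}$ from Theorem~\ref{theo:TS_LenientRegretCountBound} together with monotonicity of $t \mapsto t\tilde\gamma_t$.

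\textbf{Continuous $\cX$.} We insert the grid $\cX_t$ with $\tau_t = \lceil drLT\rceil$ and add the discretization terms $f(\*x^*) - f([\*x^*]_t)$, $g_t(\*x_t) - g_t([\*x_t]_t)$ and $f([\*x_t]_t) - f(\*x_t)$. Their expectations are $O(1/T)$ by Lemma~\ref{lem:discretized_error}; applying it to $g_t$ is valid because $g_t$ and $f$ have the same distribution. The choice $L \geq b(\sqrt{\log(ad)} + \sqrt{\pi}/2)$ handles the expected Lipschitz constant, and this part contributes the constant $4$. Replacing $\sigma_{t-1}([\cdot]_t)$ by $\sigma_{t-1}(\cdot)$ via Lemma~\ref{lem:Lipschitz_std} costs $\beta_t^{1/2}/(LT)$ per step, which sums to the $4\beta_T^{1/2}$ term.

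The main obstacle is that $\cT$ is $f$-dependent, so the posterior-sampling symmetry cannot be used to cancel the $\mu_{t-1} - g_t$ term against its $f$ counterpart. This is why every term must be upper bounded pathwise by a nonnegative quantity before we restrict to $\cT$, and why Lemma~\ref{lem:ExpectedEllipticalPotentialLemma} is needed to handle $\sigma_{t-1}(\*x^*)$ over the random set $\cT$.
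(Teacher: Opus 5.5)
Your skeleton is right: a pathwise bound by nonnegative terms, residuals summed over all $t \in [T]$, and the $\sigma$-sums over $\cT$ handled by Lemma~\ref{lem:ExpectedEllipticalPotentialLemma}. The gap is in how you control $U_t(\*x^*) - g_t(\*x^*)$ and $U_t(\*x_t) - f(\*x_t)$. You rewrite these as $\beta_t^{1/2}\sigma_{t-1}$ plus $|\mu_{t-1} - g_t|$ (resp.\ $|\mu_{t-1} - f|$). You then only have the upper-tail residuals $(f(\*x^*) - U_t(\*x^*))_+$ and $(g_t(\*x_t) - U_t(\*x_t))_+$, and these miss lower-tail failures. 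Suppose $g_t(\*x^*)$ lies far below $\mu_{t-1}(\*x^*) - \beta_t^{1/2}\sigma_{t-1}(\*x^*)$ and $f(\*x_t)$ lies far below its lower band. Then the regret is large while both of your residuals are zero, so the claim that ``off $E$ the regret is controlled through the residual terms above'' is false.

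Your alternative, conditioning on the event $E$ of Theorem~\ref{theo:TS_LenientRegretCountBound} and paying $T\Pr(E^c)$, also fails.
\begin{itemize}
\item The deviations and the regret are unbounded Gaussian quantities, so the cost on $E^c$ is not $T\Pr(E^c)$.
\item With the statement's $\beta_t = \max\{1, 2\log(4|\cX|T/\sqrt{2\pi})\}$, each per-$(t,\*x)$ tail probability is of order $1/(|\cX|T)$ up to a $\beta_t^{-1/2}$ factor. A union bound over $t \in [T]$, $\*x \in \cX$ and both $f, g_t$ therefore does not make $\Pr(E^c)$ small.
\item Switching to the larger $\beta_t$ of Theorem~\ref{theo:TS_LenientRegretCountBound} changes the coefficient in front of the $\sigma$-sums. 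A Cauchy--Schwarz tail argument could then rescue an $O(\cdot)$ version, but not the stated inequality.
\end{itemize}

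The missing idea is the lower confidence bound $L_t(\*x) = \mu_{t-1}(\*x) - \beta_t^{1/2}\sigma_{t-1}(\*x)$. It gives the exact identity $U_t(\*x) - g_t(\*x) = 2\beta_t^{1/2}\sigma_{t-1}(\*x) + L_t(\*x) - g_t(\*x)$, and likewise with $f$. Together with $g_t(\*x_t) \geq g_t(\*x^*)$ this yields, pathwise,
\begin{align*}
    f(\*x^*) - f(\*x_t) \leq{}& (f(\*x^*) - U_t(\*x^*))_+ + (L_t(\*x^*) - g_t(\*x^*))_+ + (g_t(\*x_t) - U_t(\*x_t))_+ + (L_t(\*x_t) - f(\*x_t))_+ \\
    &+ 2\beta_t^{1/2}\bigl(\sigma_{t-1}(\*x^*) + \sigma_{t-1}(\*x_t)\bigr).
\end{align*}
All four residuals are nonnegative, so sum them over all $t \in [T]$. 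Take the supremum over $\cX$ to remove the randomness of $\*x^*$ and $\*x_t$. Then Lemma~\ref{lem:expected_resisual} with $u_t = 4T$, applied to $f$, $g_t$, $-g_t$ and $-f$, bounds each residual by $1/(4T)$ per round, which is exactly the $+1$. The coefficient $2\beta_T^{1/2}$ turns Lemma~\ref{lem:ExpectedEllipticalPotentialLemma} into $4\sqrt{C_1\beta_T\EE[|\cT|]\tilde{\gamma}_{\EE[|\cT|]}}$, and no high-probability event is needed.

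The same fix at the grid points $[\*x^*]_t$ and $[\*x_t]_t$ repairs your continuous case. There the constant $4$ is $3$ from the three discretization terms (Lemma~\ref{lem:discretized_error} with $u_t = T$) plus $1$ from the four residuals, not $4$ from discretization alone.
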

\begin{proof}
    Let $U_t (\*x) = \mu_{t-1}(\*x) + \beta_t^{1/2} \sigma_{t-1}(\*x)$ and $L_t (\*x) = \mu_{t-1}(\*x) - \beta_t^{1/2} \sigma_{t-1}(\*x)$.
    Moreover, we define $(c)_+ = \max \{0, c\}$.

    Since $g_t(\*x_t) \geq g_t(\*x^*)$ and $\beta_t$ is monotonically increasing, we can derive
    \begin{align}
        &\EE\left[ \sum_{t \in \cT} f(\*x^*) - f(\*x_t) \right] \\
        &\leq \EE\biggl[ \sum_{t \in \cT} \biggl( 
            f(\*x^*) - U_t(\*x^*) + 2 \beta_t^{1/2} \sigma_{t-1}(\*x^*) + L_t(\*x^*) - g_t(\*x^*) + g_t(\*x_t) - U_t(\*x_t) + 2 \beta_t^{1/2} \sigma_{t-1}(\*x_t) + L_t(\*x_t) - f(\*x_t) 
            \biggr) \biggr] \\
        &\leq \underbrace{\EE\biggl[ \sum_{t = 1}^T \biggl( 
            \bigl( f(\*x^*) - U_t(\*x^*) \bigr)_+ + \bigl( L_t(\*x^*) - g_t(\*x^*) \bigr)_+ + \bigl(g_t(\*x_t) - U_t(\*x_t)\bigr)_+ + \bigl( L_t(\*x_t) - f(\*x_t) \bigr)_+
            \biggr) \biggr]}_{A_1} \\
            &\quad + \underbrace{2 \beta_T^{1/2} \EE\biggl[ \sum_{t \in \cT} \biggl( 
            \sigma_{t-1}(\*x^*) + \sigma_{t-1}(\*x_t)
            \biggr) \biggr]}_{A_2}.
    \end{align}

    For the terms $A_1$ and $A_2$, by applying Lemma~\ref{lem:expected_resisual} and Lemma~\ref{lem:ExpectedEllipticalPotentialLemma} respectively, we have
    \begin{align}
        A_1 &\leq \sum_{t=1}^T \frac{4}{4T} = 1, \\
        A_2 &\leq 4 \sqrt{ C_1 \beta_T \EE[|\cT|] \tilde{\gamma}_{\EE[|\cT|]} },
    \end{align}
    which is the desired result.
    Note that, for the terms $\bigl( L_t(\*x^*) - g_t(\*x^*) \bigr)_+$ and $\bigl( L_t(\*x_t) - f(\*x_t) \bigr)_+$, we apply Lemma~\ref{lem:expected_resisual} to $- g_t$ and $-f$.



    Next, we consider the case of $\cX \subset [0, r]^d$.
    Purely for the sake of analysis, we consider equally divided grid points $\cX_t$, where the grid size for each dimension is $\tau_t = \lceil d r L T \rceil$ and $|\cX_t| = \tau_t^d$, where $L = \max\{ b \bigl(\sqrt{\log (ad)} + \sqrt{\pi} / 2\bigr), L_\sigma \}$ with $L_\sigma$ defined in Lemma~\ref{lem:Lipschitz_std}.
    We denote $[\*x]_t = \argmin_{\*x^\prime \in \cX_t} \| \*x - \*x^\prime \|_1$.
    Therefore, $\sup_{\*x \in \cX} \| \*x - [\*x]_t \|_1 \leq \frac{1}{LT}$.

    Since $g_t(\*x_t) \geq g_t(\*x^*)$ and $\beta_t$ is monotonically increasing, we can derive
    \begin{align}
        &\EE\left[ \sum_{t \in \cT} f(\*x^*) - f(\*x_t) \right] \\
        &\leq \EE\biggl[ \sum_{t \in \cT} \biggl( 
            f(\*x^*) - f([\*x^*]_t) + f([\*x^*]_t) - U_t([\*x^*]_t) + 2 \beta_t^{1/2} \sigma_{t-1}([\*x^*]_t) + L_t([\*x^*]_t) - g_t([\*x^*]_t) \\
            & \qquad \qquad + g_t(\*x_t) - g_t([\*x_t]_t) + g_t([\*x_t]_t) - U_t([\*x_t]_t) + 2 \beta_t^{1/2} \sigma_{t-1}([\*x_t]_t) + L_t([\*x_t]_t) - f([\*x_t]_t) + f([\*x_t]_t) - f(\*x_t)
            \biggr) \biggr] \\
        &\leq 
            \underbrace{\EE\biggl[ \sum_{t = 1}^T \biggl( 
            |f(\*x^*) - f([\*x^*]_t)| + |g_t(\*x_t) - g_t([\*x_t]_t)| + |f([\*x_t]_t) - f(\*x_t)|
            \biggr) \biggr]}_{A_1} \\
            &\quad + \underbrace{\EE\biggl[ \sum_{t = 1}^T \biggl( 
            \bigl( f([\*x^*]_t) - U_t([\*x^*]_t) \bigr)_+ + \bigl( L_t([\*x^*]_t) - g_t([\*x^*]_t) \bigr)_+ + \bigl( g_t([\*x_t]_t) - U_t([\*x_t]_t) \bigr)_+ + \bigl( L_t([\*x_t]_t) - f([\*x_t]_t) \bigr)_+
            \biggr) \biggr]}_{A_2} \\
            &\quad + \underbrace{2 \beta_T^{1/2} \EE\biggl[ \sum_{t = 1}^T \biggl( 
            |\sigma_{t-1}(\*x^*) - \sigma_{t-1}([\*x^*]_t)| + |\sigma_{t-1}(\*x_t) - \sigma_{t-1}([\*x_t]_t)|
            \biggr) \biggr]}_{A_3} 
            + \underbrace{2 \beta_T^{1/2} \EE\biggl[ \sum_{t \in \cT} \biggl( 
            \sigma_{t-1}(\*x^*) + \sigma_{t-1}(\*x_t)
            \biggr) \biggr]}_{A_4}.
    \end{align}

    Regarding the term $A_1$, from Lemma~\ref{lem:discretized_error}, we obtain
    \begin{align}
        A_1 \leq 3 \sum_{t=1}^T \frac{1}{T} = 3.
    \end{align}
    Regarding the term $A_2$, from Lemma~\ref{lem:expected_resisual}, we derive
    \begin{align}
        A_2 \leq 4 \sum_{t=1}^T \frac{1}{4T} = 1.
    \end{align}
    Regarding the term $A_3$, from Lemma~\ref{lem:Lipschitz_std} and the construction of $\cX_t$, we have
    \begin{align}
        A_3 \leq 4 \beta_T^{1/2} \sum_{t=1}^T \frac{1}{T} = 4 \beta_T^{1/2}.
    \end{align}
    Finally, regarding the term $A_4$, from Lemma~\ref{lem:ExpectedEllipticalPotentialLemma}, we see that
    \begin{align}
        A_4 \leq 4 \sqrt{C_1 \beta_T \EE[|\cT|] \tilde{\gamma}_{\EE[|\cT|]}}.
    \end{align}

    Consequently, we obtain the desired result:
    \begin{align}
        \EE\left[ \sum_{t \in \cT} f(\*x^*) - f(\*x_t) \right]
        &\leq 4 + 4 \beta_T^{1/2} + 4 \sqrt{C_1 \beta_T \EE[|\cT|] \tilde{\gamma}_{\EE[|\cT|]}}.
    \end{align}
\end{proof}


Finally, we show the kernel-specific results shown below.

\begin{reptheorem}{theo:TS_LR_UB_specific}
    Fix  $\Delta > 0$ and $\delta \in (0, 1)$ and let $\cT = \{ t \mid f(\*x^*) - f(\*x_t) \geq \Delta\}$.
    Suppose that Assumption~\ref{assump:GP} and $|\cX| < \infty$ hold or Assumptions~\ref{assump:GP} and \ref{assump:continuous_X} hold.
    Then, if Algorithm~\ref{alg:TS} runs, the following inequalities hold:
    \begin{align}
        \EE[|\cT|] &\leq T_{\rm max}, \\
        \EE\left[\sum_{t \in \cT} f(\*x^*) - f(\*x_t) \right] &= O\left( \sqrt{\beta_T T_{\rm max} \tilde{\gamma}_{T_{\rm max}}} \right).
    \end{align}
    Here, $T_{\rm max}$ satisfies:
    \begin{align}
        T_{\rm max} =
        \begin{cases}
            O \left( \frac{\beta_T d \log T}{\Delta^2} \right) & \text{if $k = k_{\rm Lin}$}, \\
            O \left( \frac{\beta_T \log^{d + 1} T}{\Delta^2} \right) & \text{if $k = k_{\rm SE}$}, \\
            O \left( \left( \frac{\beta_T \log^{\frac{4\nu + d}{2\nu + d}} T}{\Delta^2} \right)^{1 + \frac{d}{2\nu}} \right) & \text{if $k = k_{\rm Mat}$},
        \end{cases}
    \end{align}
    where $\beta_T = O\bigl(\log(|\cX| T)\bigr)$ and $\nu > 1 / 2$  for the case of $|\cX| < \infty$
    and $\beta_T = O\bigl( d\log(d T) \bigr)$ and $\nu > 1$ for the case of continuous $\cX$.
\end{reptheorem}
\begin{proof}
    From Theorems~\ref{theo:TS_LenientRegretCountBound} and \ref{theo:TS_LenientRegretBound}, it suffices to show the upper bound of $T_{\rm max}$.
    For both cases of discrete and continuous input domains, we see that 
    \begin{align}
        T_{\rm max} = O \left( \frac{\beta_T \tilde{\gamma}_{T_{\rm max}}}{\Delta^2} \right).
    \end{align}
    For the linear and SE kernels, we obtain a crude upper bound by substituting $\gamma_T$ with $\tilde{\gamma}_{T_{\rm max}}$ as follows:
    \begin{align}
        T_{\rm max} = O \left( \frac{\beta_T \gamma_T}{\Delta^2} \right).
    \end{align}
    Since $\gamma_T = O(d \log T)$ for the linear kernel and $\gamma_T = O(\log^{d + 1} T)$ \citep{Srinivas2010-Gaussian,vakili2021-information}, we obtain the desired result.
    For the Mat\'ern kernel, by substituting $\gamma_T = O(T^{\frac{d}{2\nu + d}} \log^{\frac{4\nu + d}{2\nu + d}} T)$ \citep{iwazaki2025improved}, we see that
    \begin{align}
        &T_{\rm max} = O \left( \frac{\beta_T T_{\rm max}^{\frac{d}{2\nu + d}} \log^{\frac{4\nu + d}{2\nu + d}} T}{\Delta^2} \right) \\
        &\Leftrightarrow
        T_{\rm max}^{\frac{2\nu}{2\nu + d}} = O \left( \frac{\beta_T \log^{\frac{4\nu + d}{2\nu + d}} T}{\Delta^2} \right) \\
        &\Leftrightarrow
        T_{\rm max} = O \left( \left(\frac{\beta_T \log^{\frac{4\nu + d}{2\nu + d}} T}{\Delta^2} \right)^{1 + \frac{d}{2\nu}} \right),
    \end{align}
    which concludes the proof.
\end{proof}

\section{Proof of Theorem~\ref{theo:improved_UB_TS}}
\label{sec:proof_ImprovedUB_TS}

First, we show the following general result regarding the improved regret analysis, refined from \citep{iwazaki2025improved}:
\begin{replemma}{lem:general_improved_bound}
    Let $\cX = [0, r]^d$.
    Suppose Assumptions \ref{assump:GP} and \ref{assump:continuous_X} hold.
    In addition, assume the three conditions in Lemma~\ref{lem:maximizer_f} and the following event $E$ are true:
    \begin{align}
        E = \left\{\forall t \in \NN, f(\*x^*) - f(\*x_t) \leq 2 \beta_t^{1/2} \sigma_{t-1}(\*x_t) + \frac{1}{t^2} \right\},
    \end{align}
    where $\{ \beta_t \}_{t \in \NN}$ is some monotonically increasing sequence with $\beta_t = O(\log t)$.
    Then, the following holds:
    \begin{align}
        \sum_{t = 1}^T f(\*x^*) - f(\*x_t) = 
        \begin{cases}
            O(\sqrt{T} \log T) & \text{if $k = k_{\rm SE}$}, \\
            \tilde{O}(\sqrt{T}) & \text{if $k = k_{\rm Mat}$ with $\nu > 2$},
        \end{cases}
    \end{align}
    where the hidden constants may depend on $d, \nu, \ell, r, \sigma^2$, and the constants $c_{\rm sup}, c_{\rm gap}, \rho_{\rm quad}, c_{\rm quad}$ in Lemma~\ref{lem:maximizer_f}.
\end{replemma}
\begin{proof}
    We consider the cases of SE and Mat\'ern kernels separately.

    {\textbf{Case of SE kernels.}}

    We set $\Delta = \frac{\beta_T \gamma_T}{\sqrt{T}}$.
    If $\Delta > \min\{c_{\rm gap}, c_{\rm quad} \rho_{\rm quad}^2 \}$, we can see that $T < \min\{c_{\rm gap}, c_{\rm quad} \rho_{\rm quad}^2 \}^{-2} \beta_T^2 \gamma_T^2$.
    Therefore, $\sum_{t = 1}^T f(\*x^*) - f(\*x_t) \leq c_{\rm sup} \min\{c_{\rm gap}, c_{\rm quad} \rho_{\rm quad}^2 \}^{-2} \beta_T^2 \gamma_T^2 = O(\log^{2d + 4} T)$.
    Hence, we consider the case of $\Delta \leq \min\{c_{\rm gap}, c_{\rm quad} \rho_{\rm quad}^2 \}$ below.

    We divide the index set $[T]$ as follows:
    \begin{align}
        \cT &= \{t \in [T] \mid f(\*x^*) - f(\*x_t) \geq \Delta \}, \\
        \cT^c &= \{t \in [T] \mid f(\*x^*) - f(\*x_t) < \Delta \}.
    \end{align}
    We will obtain the regret incurred regarding $\cT$ and $\cT^c$, respectively.

    For the set $\cT$, we can obtain the upper bound of $|\cT|$ as follows:
    \begin{align}
        |\cT| 
        &= \sum_{t \in \cT} \min\left\{1, \frac{f(\*x^*) - f(\*x_t)}{\Delta} \right\} \\
        &= \sum_{t \in \cT} \min\left\{1, \frac{\bigl(f(\*x^*) - f(\*x_t)\bigr)^2}{\Delta^2} \right\} \\
        &\overset{(a)}{\leq} \sum_{t \in \cT} \frac{(2\beta_t^{1/2} \sigma_{t-1}(\*x_t) + 1 / t^2)^2}{\Delta^2} \\
        &\overset{(b)}{\leq} \frac{8 \beta_T }{\Delta^2} \sum_{t \in \cT} \sigma_{t-1}^2(\*x_t) + \frac{\pi^2}{3 \Delta^2} \\
        &\overset{(c)}{\leq} \frac{8 C_1 \beta_T \gamma_T}{\Delta^2} + \frac{\pi^2}{3 \Delta^2} \\
        &= O\left(\frac{T}{\beta_T \gamma_T} \right),
    \end{align}
    where the inequalities hold because
    (a) the event $E$ holds and $\frac{f(\*x^*) - f(\*x_t)}{\Delta} > 1$,
    (b) $(a + b)^2 \leq 2a^2 + 2b^2$ for all $a, b \in \RR$ and $2 \sum_{t=1}^T 1 / t^4 \leq 2 \sum_{t=1}^T 1 / t^2 \leq \pi^2 / 3$, 
    and (c) $\sum_{t=1}^T \sigma_{t-1}^2 (\*x_t) \leq C_1 \gamma_T$ with $C_1 = 2 / \log(1 + \sigma^{-2})$ \citep{Srinivas2010-Gaussian}, respectively.
    Therefore, from Lemma~\ref{lem:anytime_stdsum_bound}, we can obtain
    \begin{align}
        \sum_{t \in \cT} f(\*x^*) - f(\*x_t)
        &\leq \sqrt{C_1 \beta_T |\cT| \gamma_T} + \pi^2 / 6
        = O(\sqrt{T}).
    \end{align}

    For the set $\cT^c$, due to $\Delta \leq \min\{c_{\rm gap}, c_{\rm quad} \rho_{\rm quad}^2 \}$ and the condition 3 in Lemma~\ref{lem:maximizer_f}, we can obtain Lemma~\ref{lem:existence_near_optimal}, which is modified from the proof of Lemmas 4 and 20 of \citep{iwazaki2025improved}.
    Therefore, we see that $\*x_t \in \cB\left(\sqrt{c_{\rm quad}^{-1} \Delta }; \*x^*\right)$, where $\cB\left(\sqrt{c_{\rm quad}^{-1} \Delta }; \*x^*\right)$ is the L2 ball whose center and radius are $\*x^*$ and $\sqrt{c_{\rm quad}^{-1} \Delta }$, respectively.
    Thus, by denoting the MIG over the domain $\cX$ as $\gamma_T(\cX)$, we can obtain
    \begin{align}
        \sum_{t \in \cT^c} f(\*x^*) - f(\*x_t)
        &\leq \sqrt{C_1 \beta_T T \gamma_T\left( \cB\left(\sqrt{c_{\rm quad}^{-1} \Delta }; \*x^*\right) \right)} + \pi^2 / 6.
    \end{align}
    From Corollary~8 of \citep{iwazaki2025improved}, we can choose sufficiently large $C_{\rm SE}$ so that 
    \begin{align}
        \forall t \geq 2, 
        \forall \eta \in \left(0, \sqrt{\frac{2\ell^2}{e^2 c_d}}\right),
        \gamma_t\left( \cB\left(\eta; 0\right) \right)
        \leq C_{\rm SE} \left(\frac{\log^{d+1} t}{\log^d \left( \frac{2 \ell^2}{\eta^2 e c_d} \right)} + \log t\right),
    \end{align}
    where $c_d = \max \left\{1, \exp\left( \frac{1}{e} \left(\frac{d}{2} - 1 \right) \right) \right\}$.
    If $\sqrt{c_{\rm quad}^{-1} \Delta } \geq \sqrt{\frac{2\ell^2}{e^2 c_d}} \Leftrightarrow T \leq \left( \frac{e^2 c_d \beta_T \gamma_T}{2 c_{\rm quad} \ell^2} \right)^2$, we can see that $\sum_{t = 1}^T f(\*x^*) - f(\*x_t) \leq c_{\rm sup} \left( \frac{e^2 c_d \beta_T \gamma_T}{2 c_{\rm quad} \ell^2} \right)^2 = O(\log^{2d + 4} T)$.
    Thus, we consider the case of $\sqrt{c_{\rm quad}^{-1} \Delta } < \sqrt{\frac{2\ell^2}{e^2 c_d}}$ hereafter.
    Then, we see that
    \begin{align}
        \gamma_T\left( \cB\left(\sqrt{c_{\rm quad}^{-1} \Delta }; \*x^*\right) \right)
        &\leq C_{\rm SE} \left(\frac{\log^{d+1} T}{\log^d \left( \frac{2 \ell^2}{c_{\rm quad}^{-1} e c_d \Delta} \right)} + \log T \right) \\
        &= C_{\rm SE} \left(\frac{\log^{d+1} T}{ \left((1 / 2) \log T + \log\left( \frac{2 c_{\rm quad} \ell^2}{\beta_T \gamma_T e c_d} \right) \right)^d} + \log T \right).
    \end{align}
    Since $(1 / 2) \log T$ is dominant (in terms of $T$) compared with $\log\left( \frac{2 c_{\rm quad} \ell^2}{\beta_T \gamma_T e c_d}\right) = \Omega\left( - (d+2)\log \log T  \right)$, we have $\left((1 / 2) \log T + \log\left( \frac{2 c_{\rm quad} \ell^2}{\beta_T \gamma_T e c_d} \right) \right)^d = \Omega(\log^d T)$, and thus,
    \begin{align}
        \gamma_T\left( \cB\left(\sqrt{c_{\rm quad}^{-1} \Delta }; \*x^*\right) \right)
        &= O(\log T).
    \end{align}
    Consequently, we obtain
    \begin{align}
        \sum_{t \in \cT^c} f(\*x^*) - f(\*x_t)
        &\leq \sqrt{C_1 \beta_T T \gamma_T\left( \cB\left(\sqrt{c_{\rm quad}^{-1} \Delta }; \*x^*\right) \right)} + \pi^2 / 6 \\
        &= O( \sqrt{T} \log T).
    \end{align}

    {\textbf{Case of Mat\'ern kernels.}}

    We set $\Delta_1 = T^{- \tfrac{\nu}{2(\nu + d)}}$.
    If $\Delta_1 > \min\{c_{\rm gap}, c_{\rm quad} \rho_{\rm quad}^2 \}$, we can see that $T < \min\{c_{\rm gap}, c_{\rm quad} \rho_{\rm quad}^2 \}^{- \tfrac{2(\nu + d)}{\nu}}$.
    Therefore, $\sum_{t = 1}^T f(\*x^*) - f(\*x_t) \leq c_{\rm sup} \min\{c_{\rm gap}, c_{\rm quad} \rho_{\rm quad}^2 \}^{- \tfrac{2(\nu + d)}{\nu}} = O(1)$.
    Hence, we consider the case of $\Delta_1 \leq \min\{c_{\rm gap}, c_{\rm quad} \rho_{\rm quad}^2 \}$ below.

    We divide the index set $[T]$ as follows:
    \begin{align}
        \cT_0 &= \{t \in [T] \mid f(\*x^*) - f(\*x_t) \geq \Delta_1 \}, \\
        \cT_i &= \{t \in [T] \mid \Delta_i \geq f(\*x^*) - f(\*x_t) \geq \Delta_{i+1} \}, \forall i \in [\bar{i} - 1], \\
        \cT_{\bar{i}} &= \{t \in [T] \mid \Delta_{\bar{i}} \geq f(\*x^*) - f(\*x_t) \}, 
    \end{align}
    where $\bar{i} \in \NN$ and monotonically decreasing $\Delta_2, \dots, \Delta_{\bar{i}} > 0$ will be chosen hereafter so that the desired result can be obtained.

    First, we show the regret incurred for $\cT_0$.
    As with the case of SE kernels, we have
    \begin{align}
        |\cT_0| 
        &= \sum_{t \in \cT_0} \min\left\{1, \frac{f(\*x^*) - f(\*x_t)}{\Delta_1} \right\} \\
        &= \sum_{t \in \cT_0} \min\left\{1, \frac{\bigl(f(\*x^*) - f(\*x_t)\bigr)^2}{\Delta_1^2} \right\} \\
        &\leq \frac{8 \beta_T }{\Delta_1^2} \sum_{t \in \cT_0} \sigma_{t-1}^2(\*x_t) + \frac{\pi^2}{3 \Delta_1^2} \\
        &\leq \frac{8 C_1 \beta_T \gamma_{|\cT_0|}}{\Delta_1^2} + \frac{\pi^2}{3 \Delta_1^2} \\
        &= \tilde{O}\left( |\cT_0|^{\tfrac{d}{2\nu + d}} T^{\tfrac{\nu}{(\nu + d)}} \right).
    \end{align}
    %
    Therefore, we see that
    \begin{align}
        |\cT_0| = \tilde{O} \left( T^{\tfrac{2\nu + d}{2(\nu + d)}} \right).
    \end{align}
    Note that, if $\frac{\pi^2}{3 \Delta_1^2}$ is dominant compared with $\frac{8 C_1 \beta_T \gamma_{|\cT_0|}}{\Delta_1^2}$, then $|\cT_0| = \tilde{O} \left( T^{\tfrac{\nu}{(\nu + d)}} \right) = \tilde{O} \left( T^{\tfrac{2\nu + d}{2(\nu + d)}} \right)$.
    Hence, from Lemma~\ref{lem:anytime_stdsum_bound}, we obtain
    \begin{align}
        \sum_{t \in \cT_0} f(\*x^*) - f(\*x_t)
        &\leq \sqrt{C_1 \beta_T |\cT_0| \gamma_{|\cT_0|}} \\
        &= \tilde{O}\left( T^{\tfrac{2\nu + d}{4(\nu + d)}} \left( T^{\tfrac{2\nu + d}{4(\nu + d)}} \right)^{\tfrac{d}{2\nu + d}}\right) \\
        &= \tilde{O}\left( \sqrt{T} \right).
    \end{align}

    Next, we consider $\cT_i$ for all $i \in [\bar{i}]$.
    Due to $\Delta_1 \leq \min\{c_{\rm gap}, c_{\rm quad} \rho_{\rm quad}^2 \}$ and the condition 3 in Lemma~\ref{lem:maximizer_f}, we can obtain Lemma~\ref{lem:existence_near_optimal}, which is modified from the proof of Lemmas 4 and 20 of \citep{iwazaki2025improved}.
    Therefore, we see that $\*x_t \in \cB\left(\sqrt{c_{\rm quad}^{-1} \Delta_i }; \*x^*\right)$ for all $t \in \cT_i$ and $i \in [\bar{i}]$.
    Thus, from Lemma~\ref{lem:anytime_stdsum_bound}, we can obtain
    \begin{align}
        \forall i \in [\bar{i}],
        \sum_{t \in \cT_i} f(\*x^*) - f(\*x_t)
        &\leq \sqrt{C_1 \beta_T |\cT_i| \gamma_{|\cT_i|} \left( \cB\left(\sqrt{c_{\rm quad}^{-1} \Delta_i }; \*x^*\right) \right)} + \pi^2 / 6.
    \end{align}
    From Corollary~8 of \citep{iwazaki2025improved}, we can choose sufficiently large $C_{\rm Mat}$ so that 
    \begin{align}
        \forall t \geq 2, 
        \forall \eta > 0, 
        \gamma_{t} \left( \cB\left(\eta; 0\right) \right) 
        \leq C_{\rm Mat}\left( \eta^{\tfrac{2\nu d}{2\nu + d}} t^{\tfrac{d}{2\nu + d}} \log^{\tfrac{4\nu + d}{2\nu + d}} t + \log^2 t \right).
    \end{align}
    Therefore, we have
    \begin{align}
        \gamma_{|\cT_i|} \left( \cB\left(\sqrt{c_{\rm quad}^{-1} \Delta_i }; \*x^*\right) \right) 
        = O \left( \Delta_i^{\tfrac{\nu d}{2\nu + d}} |\cT_i|^{\tfrac{d}{2\nu + d}} \log^{\tfrac{4\nu + d}{2\nu + d}} |\cT_i| + \log^2 |\cT_i| \right).
    \end{align}
    If the term $\log^2 |\cT_i| = O(\log^2 T)$ is dominant compared with $\Delta_i^{\tfrac{\nu d}{2\nu + d}} |\cT_i|^{\tfrac{d}{2\nu + d}} \log^{\tfrac{4\nu + d}{2\nu + d}} |\cT_i|$, then the corresponding regret for $\cT_i$ can be readily bounded from above by $\tilde{O}(\sqrt{T})$.
    Thus, hereafter, we focus on the set $\cI \subset [\bar{i}]$, for which $\Delta_i^{\tfrac{\nu d}{2\nu + d}} |\cT_i|^{\tfrac{d}{2\nu + d}} \log^{\tfrac{4\nu + d}{2\nu + d}} |\cT_i|$ is dominant compared with $\log^2 T$.
    Therefore, we have
    \begin{align}
        \forall i \in \cI,
        \sum_{t \in \cT_i} f(\*x^*) - f(\*x_t)
        &\leq \tilde{O} \left( \sqrt{|\cT_i|^{\tfrac{2\nu + 2d}{2\nu + d}} \Delta_i^{\tfrac{\nu d}{2\nu + d}}} \right).
    \end{align}
    Hence, if we can set $\Delta_{\bar{i}} = \tilde{O}\bigl(T^{- \tfrac{1}{\nu}}\bigr)$ when $\bar{i} \in \cI$, by the upper bound $|\cT_{\bar{i}}| \leq T$, we can obtain
    \begin{align}
        \sum_{t \in \cT_{\bar{i}}} f(\*x^*) - f(\*x_t)
        &\leq \tilde{O} \left( \sqrt{T} \right).
    \end{align}
    Hereafter, we will confirm the condition that we can choose $\Delta_2, \dots, \Delta_{\bar{i} - 1}$ so that the resulting regret over $\cT_1, \dots, \cT_{\bar{i} - 1}$ is $\tilde{O}(\sqrt{T})$ and $\Delta_{\bar{i}} = \tilde{O}\bigl(T^{- \tfrac{1}{\nu}}\bigr)$.

    As with the previous derivations, for all $i \in \cI \backslash \{ \bar{i} \}$, we have
    \begin{align}
        |\cT_i|
        &= \sum_{t \in \cT_i} \min\left\{1, \frac{f(\*x^*) - f(\*x_t)}{\Delta_{i+1}} \right\} \\
        &= \sum_{t \in \cT_i} \min\left\{1, \frac{\bigl(f(\*x^*) - f(\*x_t)\bigr)^2}{\Delta_{i+1}^2} \right\} \\
        &\leq \frac{8 \beta_T }{\Delta_{i+1}^2} \sum_{t \in \cT_i} \sigma_{t-1}^2(\*x_t) + \frac{\pi^2}{3 \Delta_{i+1}^2} \\
        &\leq \frac{8 C_1 \beta_T \gamma_{|\cT_i|}\left( \cB\left(\sqrt{c_{\rm quad}^{-1} \Delta_i }; \*x^*\right) \right)}{\Delta_{i+1}^2} + \frac{\pi^2}{3 \Delta_{i+1}^2}.
    \end{align}
    Then, we consider the following two cases, depending on which term is dominant.

    \paragraph{Case of $\frac{8 C_1 \beta_T \gamma_{|\cT_i|}\left( \cB\left(\sqrt{c_{\rm quad}^{-1} \Delta_i }; \*x^*\right) \right)}{\Delta_{i+1}^2} \geq \frac{\pi^2}{3 \Delta_{i+1}^2}$.}
    In this case, we see that
    \begin{align}
        |\cT_i|
        &= \tilde{O} \left( \Delta_i^{\tfrac{\nu d}{2\nu + d}} \Delta_{i+1}^{-2} |\cT_i|^{\tfrac{d}{2\nu + d}} \right),
    \end{align}
    which immediately suggests that
    \begin{align}
        |\cT_i| = \tilde{O} \left( \left(\Delta_i^{\tfrac{\nu d}{2\nu + d}} \Delta_{i+1}^{-2}\right)^{\tfrac{2\nu + d}{2\nu}} \right).
    \end{align}
    Therefore, for all $i \in \cI \backslash \{ \bar{i} \}$, we can see that
    \begin{align}
        \sum_{t \in \cT_i} f(\*x^*) - f(\*x_t)
        &\leq \sqrt{C_1 \beta_T |\cT_i| \gamma_{|\cT_i|} \left( \cB\left(\sqrt{c_{\rm quad}^{-1} \Delta_i }; \*x^*\right) \right)} + \pi^2 / 6 \\
        &= \tilde{O} \left( \sqrt{\left( \left(\Delta_i^{\tfrac{\nu d}{2\nu + d}} \Delta_{i+1}^{-2} \right)^{\tfrac{2\nu + d}{2\nu}} \right)^{\tfrac{2\nu + 2d}{2\nu + d}} \Delta_i^{\tfrac{\nu d}{2\nu + d}}} \right) \\
        &=\tilde{O} \left( \Delta_i^{\tfrac{d}{2}} \Delta_{i+1}^{- \tfrac{\nu + d}{\nu}} \right).
    \end{align}
    To obtain $\Delta_i^{\tfrac{d}{2}} \Delta_{i+1}^{- \tfrac{\nu + d}{\nu}} = \tilde{O}(\sqrt{T})$, we require the following condition:
    \begin{align}
        \Delta_{i + 1} = \tilde{\Omega}\left( T^{- \tfrac{\nu}{2 (\nu + d)}} \Delta_i^{\tfrac{\nu d}{2(\nu + d)}} \right),
    \end{align}
    where $\tilde{\Omega}$ hides polylogarithmic factors.

    \paragraph{Case of $\frac{8 C_1 \beta_T \gamma_{|\cT_i|}\left( \cB\left(\sqrt{c_{\rm quad}^{-1} \Delta_i }; \*x^*\right) \right)}{\Delta_{i+1}^2} < \frac{\pi^2}{3 \Delta_{i+1}^2}$.}
    In this case, we see that
    \begin{align}
        |\cT_i|
        &= \tilde{O} \left( \Delta_{i+1}^{-2} \right).
    \end{align}
    Therefore, for all $i \in \cI \backslash \{ \bar{i} \}$, we can see that
    \begin{align}
        \sum_{t \in \cT_i} f(\*x^*) - f(\*x_t)
        &\leq \sqrt{C_1 \beta_T |\cT_i| \gamma_{|\cT_i|} \left( \cB\left(\sqrt{c_{\rm quad}^{-1} \Delta_i }; \*x^*\right) \right)} + \pi^2 / 6 \\
        &= \tilde{O} \left( \sqrt{\left( \Delta_{i+1}^{-2} \right)^{\tfrac{2(\nu + d)}{2\nu + d}} \Delta_i^{\tfrac{\nu d}{2\nu + d}}} \right) \\
        &=\tilde{O} \left( \Delta_i^{\tfrac{\nu d}{2(2 \nu + d)}} \Delta_{i+1}^{- \tfrac{2(\nu + d)}{2\nu + d}} \right).
    \end{align}
    To obtain $\Delta_i^{\tfrac{\nu d}{2(2 \nu + d)}} \Delta_{i+1}^{- \tfrac{2(\nu + d)}{2\nu + d}} = \tilde{O}(\sqrt{T})$, we require the following condition:
    \begin{align}
        \Delta_{i + 1} = \tilde{\Omega}\left( T^{- \tfrac{2\nu + d}{4 (\nu + d)}} \Delta_i^{\tfrac{\nu d}{4(\nu + d)}} \right).
    \end{align}

    It suffices to consider the case of $\frac{8 C_1 \beta_T \gamma_{|\cT_i|}\left( \cB\left(\sqrt{c_{\rm quad}^{-1} \Delta_i }; \*x^*\right) \right)}{\Delta_{i+1}^2} \geq \frac{\pi^2}{3 \Delta_{i+1}^2}$ due to the following reason.
    Since we currently focus on $i < \bar{i}$ such that $\Delta_{i} = \tilde{\Omega}\bigl(T^{- \tfrac{1}{\nu}}\bigr)$, we have
    \begin{align*}
        T^{- \tfrac{2\nu + d}{4 (\nu + d)}} \Delta_i^{\tfrac{\nu d}{4(\nu + d)}}
        &= 
        T^{- \tfrac{\nu}{2 (\nu + d)}} \Delta_i^{\tfrac{\nu d}{2(\nu + d)}}
        T^{- \tfrac{d}{4 (\nu + d)}} \Delta_i^{- \tfrac{\nu d}{4(\nu + d)}}
        = \tilde{O} \left( T^{- \tfrac{\nu}{2 (\nu + d)}} \Delta_i^{\tfrac{\nu d}{2(\nu + d)}} \right).
    \end{align*}
    Therefore, the required condition $\Delta_{i + 1} = \tilde{\Omega}\left( T^{- \tfrac{\nu}{2 (\nu + d)}} \Delta_i^{\tfrac{\nu d}{2(\nu + d)}} \right)$ is a strong condition.
    Hence, it is sufficient that we can prove for the case that $\frac{8 C_1 \beta_T \gamma_{|\cT_i|}\left( \cB\left(\sqrt{c_{\rm quad}^{-1} \Delta_i }; \*x^*\right) \right)}{\Delta_{i+1}^2} \geq \frac{\pi^2}{3 \Delta_{i+1}^2}$ holds for all $i \in \cI \backslash \{ \bar{i} \}$.

    Thus, since $\Delta_1 = T^{- \tfrac{\nu}{2(\nu + d)}}$, we choose $\Delta_i$ as
    \begin{align}
        \Delta_i = \tilde{\Theta}\left( T^{ - \tfrac{\nu}{2(\nu + d)} \sum_{j=1}^i \left( \tfrac{\nu d}{2(\nu + d)} \right)^{j - 1}} \right). 
    \end{align}
    Under this setting of $\Delta_i$, the regret $\sum_{t \in \cT_i} f(\*x^*) - f(\*x_t) = \tilde{O}(\sqrt{T})$ for all $i \in [\bar{i}]$.
    Finally, we will confirm the condition that above $\Delta_i$ can reach to $\Delta_{\bar{i}} = \tilde{O}\bigl(T^{- \tfrac{1}{\nu}}\bigr)$ with finite $\bar{i}$.

    \paragraph{Case of $\frac{\nu d}{2(\nu + d)} \geq 1$.}
    Due to the condition, we can obtain
    \begin{align}
        \Delta_i = \tilde{O}\left( T^{ - \tfrac{i \nu}{2(\nu + d)}} \right). 
    \end{align}
    Thus, we can choose $\bar{i} = \left\lceil \frac{2(\nu + d)}{\nu^2} \right\rceil = O(1)$, by which $\Delta_{\bar{i}} = \tilde{O}\left( T^{- \tfrac{1}{\nu}} \right)$.
    The condition $\frac{\nu d}{2(\nu + d)} \geq 1$ can be transofrmed as $(\nu - 2)d - 2\nu \geq 0$.
    Therefore, this condition holds if and only if $\nu > 2$ and $d \geq \frac{2\nu}{v-2}$ simultaneously hold.

    \paragraph{Case of $\frac{\nu d}{2(\nu + d)} < 1$.}
    From the above discussion, this condition holds if (i) $\nu \leq 2$ or (ii) $\nu > 2$ and $d < \frac{2\nu}{v-2}$ simultaneously hold.
    If $\sum_{j=1}^{\bar{i}} \left( \frac{\nu d}{2(\nu + d)} \right)^{j - 1}$ with finite $\bar{i}$ can be $\frac{2(\nu + d)}{\nu^2}$, we can get $\Delta_{\bar{i}} = \tilde{O}\bigl(T^{- \tfrac{1}{\nu}}\bigr)$.
    From the formula for the sum of a geometric series, we obtain
    \begin{align}
        \sum_{j=1}^i \left( \frac{\nu d}{2(\nu + d)} \right)^{j - 1} 
        &= \frac{1 - \left( \frac{\nu d}{2(\nu + d)} \right)^i}{1 - \left( \frac{\nu d}{2(\nu + d)} \right)}
        \rightarrow \frac{2(\nu + d)}{2(\nu + d) - \nu d} && \text{as $i \rightarrow \infty$,}
    \end{align}
    since $\left( \frac{\nu d}{2(\nu + d)} \right)^i \rightarrow 0$ due to the condition.
    Therefore, to get finite $\bar{i}$, we require the condition
    \begin{align}
        \frac{2(\nu + d)}{2(\nu + d) - \nu d} > \frac{2(\nu + d)}{\nu^2},
    \end{align}
    which can be rephrased as $(\nu + d) (\nu - 2) > 0$.
    Hence, if $\nu > 2$, we can set $\bar{i}$ as the smallest integer such that
    \begin{align}
        \frac{1 - \left( \frac{\nu d}{2(\nu + d)} \right)^{\bar{i}}}{1 - \left( \frac{\nu d}{2(\nu + d)} \right)}
        \geq \frac{2(\nu + d)}{\nu^2} 
        &\Leftrightarrow 1 - \left( \frac{\nu d}{2(\nu + d)} \right)^{\bar{i}} \geq \frac{2(\nu + d) - \nu d}{\nu^2} \\
        &\Leftrightarrow 1 - \frac{2(\nu + d) - \nu d}{\nu^2} \geq \left( \frac{\nu d}{2(\nu + d)} \right)^{\bar{i}} \\
        &\Leftrightarrow \log\left( 1 - \frac{2(\nu + d) - \nu d}{\nu^2} \right) / \log\left( \frac{\nu d}{2(\nu + d)} \right) \leq \bar{i}.
    \end{align}
    Regarding the last transformation, note that $\left( \frac{\nu d}{2(\nu + d)} \right) < 1 \Leftrightarrow \log \left( \frac{\nu d}{2(\nu + d)} \right) < 0$ from the condition.
    Thus, we can set 
    \begin{align*}
        \bar{i} = \left\lceil \log\left( 1 - \frac{2(\nu + d) - \nu d}{\nu^2} \right) / \log\left( \frac{\nu d}{2(\nu + d)} \right) \right\rceil = O(1).
    \end{align*}

    Summarizing the both cases, we can set $\bar{i} = O(1)$ if $\nu > 2$.
    Consequently, since the all regret incurred over $\cT_0, \dots, \cT_{\bar{i}}$ are $\tilde{O}(\sqrt{T})$, the cumulative regret is bounded from above as
    \begin{align*}
        \sum_{t = 1}^T f(\*x^*) - f(\*x_t) = \sum_{i = 0}^{\bar{i}} \tilde{O}(\sqrt{T}) = \tilde{O}(\sqrt{T}),
    \end{align*}
    which is the desired result.
\end{proof}


\begin{reptheorem}{theo:improved_UB_TS}
    Fix  $\delta \in (0, 1)$ and $\delta_{\rm GP} \in (0, 1)$.
    Assume the same premise as in Lemma~\ref{lem:maximizer_f}.
    Suppose Assumptions \ref{assump:GP} and \ref{assump:continuous_X} hold.
    Then, if Algorithm~\ref{alg:TS} runs, the following holds with probability at least $1 - \delta - \delta_{\rm GP}$:
    \begin{align}
        R_T = 
        \begin{cases}
            O(\sqrt{T} \log T) & \text{if $k = k_{\rm SE}$}, \\
            \tilde{O}(\sqrt{T}) & \text{if $k = k_{\rm Mat}$ with $\nu > 2$},
        \end{cases}
    \end{align}
    where the hidden constants may depend on $1 / \delta, d, \nu, \ell, r, \sigma^2$, and the constants $c_{\rm sup}, c_{\rm gap}, \rho_{\rm quad}, c_{\rm quad}$ corresponding with $\delta_{\rm GP}$.
\end{reptheorem}
\begin{proof}
    The overview of proof of this theorem is similar to that of Lemma~\ref{lem:general_improved_bound}.
    However, since the cumulative regret upper bound of GP-TS is shown via the expected regret bounds, we will transform the cumulative regret so that the proof of Lemma~\ref{lem:general_improved_bound} can be applied.
    Let the index set $\cT_i \subset [T]$ for all $i = 0, \dots, \bar{i}$ be
    \begin{align}
        \cT_i &= \{t \in [T] \mid \Delta_i \geq f(\*x^*) - f(\*x_t) \geq \Delta_{i+1} \},
    \end{align}
    with convenient definition of $\Delta_0 = 2 c_{\rm sup}$ and $\Delta_{\bar{i} + 2} = 0$.
    For the both cases of SE and Mat\'ern kernels, we choose $\{\Delta_i\}_{i=1}^{\bar{i}}$ and $\bar{i}$ as with the proof of Lemma~\ref{lem:general_improved_bound}.
    Hereafter, we define $\beta_t = O(\log T)$ as the maximum of those defined in Lemmas~\ref{theo:TS_LenientRegretCountBound} and \ref{theo:TS_LenientRegretBound} to apply both cases' proofs.

    Assume the three conditions in Lemma~\ref{lem:maximizer_f} hold.
    Then, we can decompose the cumulative regret as follows:
    \begin{align}
        \sum_{t = 1}^T f(\*x^*) - f(\*x_t)
        &= \sum_{i = 0}^{\bar{i}} \sum_{t \in \cT_i} f(\*x^*) - f(\*x_t).
    \end{align}
    Under the condition $\Delta_1 \leq \min\{c_{\rm gap}, c_{\rm quad} \rho_{\rm quad}^2 \}$ and the condition 3 in Lemma~\ref{lem:maximizer_f}, we can obtain Lemma~\ref{lem:existence_near_optimal}, which is modified from the proof of Lemmas 4 and 20 of \citep{iwazaki2025improved}.
    (Note that, if $\Delta_1 > \min\{c_{\rm gap}, c_{\rm quad} \rho_{\rm quad}^2 \}$, we can get $R_T = \tilde{O}(1)$ as with the proof of Lemma~\ref{lem:general_improved_bound}.)
    Therefore, we see that $\Delta_i \geq f(\*x^*) - f(\*x_t) \Rightarrow \*x_t \in \cB\left(\sqrt{c_{\rm quad}^{-1} \Delta_i }; \*x^*\right)$ for all $i \in [\bar{i}]$.
    Hence, we have the upper bound below:
    \begin{align}
        \sum_{t = 1}^T f(\*x^*) - f(\*x_t)
        &\leq \sum_{i = 0}^{\bar{i}} \sum_{t \in \widetilde{\cT}_i} f(\*x^*) - f(\*x_t),
    \end{align}
    with $\widetilde{\cT}_i = \left\{t \in [T] \mid f(\*x^*) - f(\*x_t) \geq \Delta_{i+1} \land \*x_t \in \cB\left(\sqrt{c_{\rm quad}^{-1} \Delta_i }; \*x^*\right) \right\}$.
    We will show upper bounds of $\EE\left[ \sum_{t \in \widetilde{\cT}_i} f(\*x^*) - f(\*x_t) \right]$.

    At first, for any $\widetilde{\cT}_i$, the proof of Theorem~\ref{theo:TS_LenientRegretCountBound} derives
    \begin{align}
        \EE\left[\left| \widetilde{\cT}_i \right|\right] 
        = O\left(
                \frac{\beta_T \EE \left[\sum_{t \in \widetilde{\cT}_i} \bigl\{ \sigma_{t-1}^2(\*x^*) + \sigma_{t-1}^2(\*x_t) \bigr\} \right]}{\Delta_{i+1}^2}
            \right)
        + O\left( \frac{\beta_T}{\Delta_{i+1}^2} \right).
    \end{align}
    Furthermore, the proof of Theorem~\ref{theo:TS_LenientRegretBound} derives
    \begin{align*}
        \EE\left[ \sum_{t \in \widetilde{\cT}_i} f(\*x^*) - f(\*x_t) \right]
        &\leq O(\log T) + 2 \beta_T^{1/2} \left( \EE\left[ \sum_{t \in \widetilde{\cT}_i} \sigma_{t-1}(\*x^*) \right] + \EE\left[ \sum_{t \in \widetilde{\cT}_i} \sigma_{t-1}(\*x_t) \right] \right).
    \end{align*}
    By applying Lemma~\ref{lem:ExpectedEllipticalPotentialLemma_TS}, we obtain\footnote{For the case of Mat\'ern kernels, although the upper bound $\EE\left[\left| \widetilde{\cT}_i \right|\right] = O(\beta_T / \Delta_{i+1}^2)$ is worse by $\log T$ factor than that in Lemma~\ref{lem:general_improved_bound}, we can apply the same proof since the logarithmic factor does not change the result. Alternatively, we can remove $\beta_T$ by a finer discretization.} 
    \begin{align}
        \EE\left[\left| \widetilde{\cT}_i \right|\right] 
        = O\left(
                \max \left\{
                \frac{\beta_T \gamma_{\EE\left[\left| \widetilde{\cT}_i \right|\right]} \left( \cB\left(\sqrt{c_{\rm quad}^{-1} \Delta_i }; 0\right) \right) }{\Delta_{i+1}^2},
                \frac{\beta_T}{\Delta_{i+1}^2}
                \right\}
            \right),
    \end{align}
    and
    \begin{align}
        \EE\left[ \sum_{t \in \widetilde{\cT}_i} f(\*x^*) - f(\*x_t) \right] 
        = O\left( \sqrt{\beta_T \EE\left[\left| \widetilde{\cT}_i \right|\right] \tilde{\gamma}_{\EE\left[\left| \widetilde{\cT}_i \right|\right]}\left( \cB\left(\sqrt{c_{\rm quad}^{-1} \Delta_i }; 0\right) \right) } \right).
    \end{align}

    By taking the union bound regarding the conditions in Lemma~\ref{lem:maximizer_f} and the Markov's inequality for $\EE\left[ \sum_{t \in \widetilde{\cT}_i} f(\*x^*) - f(\*x_t) \right]$ for all $i = 0, \dots, \bar{i}$, the three conditions in Lemma~\ref{lem:maximizer_f} and the following inequalities hold simultaneously with probability at least $1 - \delta - \delta_{\rm GP}$:
    \begin{align}
        \sum_{t \in \widetilde{\cT}_i} f(\*x^*) - f(\*x_t)
        = O\left( \frac{\bar{i}+1}{\delta} \sqrt{\beta_T \EE\left[\left| \widetilde{\cT}_i \right|\right] \tilde{\gamma}_{\EE\left[\left| \widetilde{\cT}_i \right|\right]}\left( \cB\left(\sqrt{c_{\rm quad}^{-1} \Delta_i }; 0\right) \right) } \right).
    \end{align}
    The remainder of the proof is the same as that of Lemma~\ref{lem:general_improved_bound}, although the dependence on $1 / \delta$ and $\bar{i}$ is worse.
\end{proof}

\section{Supporting Lemmas}
\label{sec:lemmas}

\begin{lemma}[Lemma~5.1 of \citep{Srinivas2010-Gaussian}]
    Suppose that Assumption~\ref{assump:GP} holds.
    Fix  $\delta \in (0, 1)$.
    Then, for any $\cX$ such that $|\cX| < \infty$, the following inequality holds with probability at least $1 - \delta$:
    \begin{align}
        \Pr\left( \forall t \geq 1, \forall \*x \in \cX, |f(\*x) - \mu_{t-1}(\*x)| \leq \beta_t^{1/2} \sigma_{t-1}(\*x) \right) \geq 1 - \delta,
    \end{align}
    where $\beta_t = 2\log (|\cX| t (t+1) / \delta)$.
    \label{lem:GPcredible_interval}
\end{lemma}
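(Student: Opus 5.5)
The plan is a Gaussian tail bound for each fixed $t$ and $\*x$, followed by a union bound over $\cX$ and over all $t \geq 1$. The budget $\delta/(|\cX| t(t+1))$ assigned to each pair $(t,\*x)$ is exactly what the choice of $\beta_t$ encodes.

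\emph{Step 1 (conditional Gaussianity).} Fix $t \geq 1$ and $\*x \in \cX$. By Assumption~\ref{assump:GP}, conditioned on $\cD_{t-1}$, $f(\*x)$ is Gaussian with mean $\mu_{t-1}(\*x)$ and variance $\sigma_{t-1}^2(\*x)$. The inputs $\*x_1,\dots,\*x_{t-1}$ are chosen adaptively, for instance through the samples $g_i$ in Algorithm~\ref{alg:TS}. This does not break the posterior formula, because given the past data the extra randomness used to select each $\*x_i$ is independent of $f$ and of the noise. Consequently the likelihood of $\cD_{t-1}$ factorizes into the Gaussian observation model times selection terms that do not involve $f$, and the posterior of $f$ is the standard GP posterior in Section~\ref{sec:preliminary}. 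Verifying this conditional-independence point carefully is the only real subtlety; the remaining steps are routine.

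\emph{Step 2 (tail bound).} If $\sigma_{t-1}(\*x) = 0$, then $f(\*x) = \mu_{t-1}(\*x)$ almost surely given $\cD_{t-1}$, and the inequality holds trivially. Otherwise, set $Z = (f(\*x) - \mu_{t-1}(\*x))/\sigma_{t-1}(\*x)$, which is standard normal given $\cD_{t-1}$. The standard estimate $\Pr(|Z| > c) \leq e^{-c^2/2}$ for $c > 0$ follows from $\Pr(Z > c) = \int_c^\infty \phi(z)\,dz \leq \tfrac12 e^{-c^2/2}$ by the substitution $z = c + s$ and $e^{-cs} \leq 1$. Applying it with $c = \beta_t^{1/2}$ gives
\begin{align*}
    \Pr\left( |f(\*x) - \mu_{t-1}(\*x)| > \beta_t^{1/2}\sigma_{t-1}(\*x) \bigm| \cD_{t-1} \right) \leq e^{-\beta_t/2} = \frac{\delta}{|\cX|\, t(t+1)}.
\end{align*}
Taking expectation over $\cD_{t-1}$ yields the same unconditional bound.

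\emph{Step 3 (union bound).} Take a union bound over the $|\cX|$ points and over all $t \geq 1$. This is valid because $|\cX| < \infty$ and $\beta_t$ is deterministic. Using the telescoping identity $\sum_{t \geq 1} \frac{1}{t(t+1)} = 1$, the total failure probability is at most $\delta$. Hence the credible-interval inequality holds simultaneously for all $t \geq 1$ and $\*x \in \cX$ with probability at least $1 - \delta$, which is the claim.
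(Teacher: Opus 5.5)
Your proposal is correct and follows essentially the same argument as Lemma~5.1 of Srinivas et al., which the paper cites rather than reproving. That argument is a tail bound $\Pr(|Z|>c)\le e^{-c^2/2}$ for each pair $(t,\*x)$ conditional on $\cD_{t-1}$, followed by a union bound over $\cX$ and over all $t\ge 1$ using $\sum_{t\ge 1} 1/(t(t+1)) = 1$; your added care about adaptively chosen inputs and the degenerate case $\sigma_{t-1}(\*x)=0$ suits the GP-TS setting in which the paper applies it.
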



\begin{lemma}[Modified from the proof of Lemmas 4 and 20 of \citep{iwazaki2025improved}]
    Suppose $f$ is continuous.
    Then, under conditions 1 and 3 in Lemma~\ref{lem:maximizer_f}, $\*x \in \cB\left(\sqrt{c_{\rm quad}^{-1} \epsilon}; \*x^*\right) \subset \cB(\rho_{\rm quad}; \*x^*)$ holds for any $\*x \in \cX$ such that $f(\*x^*) - f(\*x) \leq \epsilon$ with $\epsilon \leq \min\{c_{\rm gap}, c_{\rm quad} \rho_{\rm quad}^2 \}$.
    \label{lem:existence_near_optimal}
\end{lemma}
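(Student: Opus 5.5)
We argue by contradiction, using conditions 1 and 3 of Lemma~\ref{lem:maximizer_f} together with continuity of $f$. Fix $\epsilon \leq \min\{c_{\rm gap}, c_{\rm quad}\rho_{\rm quad}^2\}$ and take $\*x \in \cX$ with $f(\*x^*) - f(\*x) \leq \epsilon$.

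\emph{Inclusion of balls.} Since $\epsilon \leq c_{\rm quad}\rho_{\rm quad}^2$, we have $\sqrt{c_{\rm quad}^{-1}\epsilon} \leq \rho_{\rm quad}$, which gives $\cB(\sqrt{c_{\rm quad}^{-1}\epsilon}; \*x^*) \subset \cB(\rho_{\rm quad}; \*x^*)$.

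\emph{Reduction to the $\rho_{\rm quad}$-ball.} It suffices to show $\*x \in \cB(\rho_{\rm quad}; \*x^*)$. Indeed, once this holds, condition 3 gives
\begin{align*}
    c_{\rm quad}\|\*x^* - \*x\|_2^2 \leq f(\*x^*) - f(\*x) \leq \epsilon,
\end{align*}
so $\|\*x^* - \*x\|_2 \leq \sqrt{c_{\rm quad}^{-1}\epsilon}$.

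\emph{Main step: $\*x$ cannot lie outside the $\rho_{\rm quad}$-ball.} Suppose $\|\*x - \*x^*\|_2 > \rho_{\rm quad}$. Let $K = \cX \setminus \mathrm{int}\,\cB_2(\rho_{\rm quad}; \*x^*)$, where the interior is taken relative to $\cX$. Then $K$ is compact and contains $\*x$. By continuity, $f$ attains its maximum over $K$ at some $\tilde{\*x}$, and $f(\tilde{\*x}) \geq f(\*x) \geq f(\*x^*) - \epsilon \geq f(\*x^*) - c_{\rm gap}$. We now check that $\tilde{\*x}$ is a local maximizer of $f$ on $\cX$.

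\emph{Case 1: $\|\tilde{\*x} - \*x^*\|_2 > \rho_{\rm quad}$.} Then $\tilde{\*x}$ has a neighborhood in $\cX$ contained in $K$, so $\tilde{\*x}$ is a local maximizer of $f$ on $\cX$.

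\emph{Case 2: $\|\tilde{\*x} - \*x^*\|_2 = \rho_{\rm quad}$.} By condition 3, $f(\tilde{\*x}) \leq f(\*x^*) - c_{\rm quad}\rho_{\rm quad}^2 \leq f(\*x^*) - \epsilon$. Combined with $f(\tilde{\*x}) \geq f(\*x^*) - \epsilon$, this forces equality throughout.
\begin{itemize}
    \item If the equality case does not arise, $\tilde{\*x}$ cannot lie on the sphere, and we are in Case 1.
    \item If it does arise, we need a small perturbation: replace $\rho_{\rm quad}$ by a slightly smaller radius $\rho' < \rho_{\rm quad}$. The quadratic bound still holds on the smaller ball, with the strict inequality $c_{\rm quad}\rho'^2 < \epsilon$ handled by choosing $\rho'$ appropriately, or else by using strictness of the gap in condition 1.
\end{itemize}

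\emph{Contradiction.} In either case, $\tilde{\*x}$ is a local maximizer distinct from $\*x^*$ with $f(\tilde{\*x}) \geq f(\*x^*) - c_{\rm gap}$. This contradicts condition 1, which requires $f(\*x^*) > f(\tilde{\*x}) + c_{\rm gap}$. Hence $\*x \in \cB(\rho_{\rm quad}; \*x^*)$, and the reduction step completes the argument.

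\emph{Main obstacle.} The delicate part is the boundary case in Case 2. There, a maximizer of $f$ over $K$ may sit on the sphere and fail to be a genuine local maximizer on $\cX$, because points just inside the ball could have larger values. The quadratic bound caps $f$ on the sphere at $f(\*x^*) - c_{\rm quad}\rho_{\rm quad}^2$, but excluding the sphere requires this cap to be strictly below $f(\*x^*) - \epsilon$. When $\epsilon = c_{\rm quad}\rho_{\rm quad}^2$ exactly, the argument needs the slight shrinking of the radius, or the strict inequality in condition 1, to close the gap.
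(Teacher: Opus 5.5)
Your reduction step matches the paper exactly: the ball inclusion follows from $\epsilon \le c_{\rm quad}\rho_{\rm quad}^2$, and condition 3 applies once $\*x \in \cB(\rho_{\rm quad}; \*x^*)$. The difference is in how you get $\*x \in \cB(\rho_{\rm quad}; \*x^*)$. The paper simply cites Lemma 20 of \citep{iwazaki2025improved}, whereas you prove it directly by maximizing $f$ over the compact set $K$. That self-contained route is sound, and your Case 1 is correct.

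The boundary case, however, has a genuine gap. When $\epsilon = c_{\rm quad}\rho_{\rm quad}^2$ and $f(\*x) = f(\*x^*) - \epsilon$, shrinking the radius to $\rho' < \rho_{\rm quad}$ goes in the wrong direction. Condition 3 then caps $f$ on the smaller sphere only at $f(\*x^*) - c_{\rm quad}\rho'^2 > f(\*x^*) - \epsilon$. That no longer excludes a maximizer over the new $K$ on the sphere, and since $f$ increases toward $\*x^*$, the maximizer typically does sit there. Your alternative appeal to ``strictness of the gap in condition 1'' is also not carried out. As you note yourself, the sphere point $\tilde{\*x}$ need not be a local maximizer, so condition 1 cannot be applied to it.

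The missing observation is that in the equality case $f(\*x) = f(\tilde{\*x}) = \max_K f$, so $\*x$ itself maximizes $f$ over $K$. Since $\|\*x - \*x^*\|_2 > \rho_{\rm quad}$, $\*x$ has a neighborhood in $\cX$ contained in $K$. It is therefore a local maximizer of $f$ on $\cX$, distinct from $\*x^*$. Condition 1 then gives $f(\*x) < f(\*x^*) - c_{\rm gap} \le f(\*x^*) - \epsilon$, contradicting $f(\*x^*) - f(\*x) \le \epsilon$.

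A cleaner way to say the same thing avoids case-splitting on equality. Suppose every maximizer of $f$ over $K$ lay on the sphere. Then $\*x$ is not one, so $\max_K f > f(\*x) \ge f(\*x^*) - c_{\rm quad}\rho_{\rm quad}^2$. This contradicts the cap from condition 3. Hence some maximizer lies strictly outside the ball, and your Case 1 applies. With this replacement no perturbation is needed and your argument is complete.
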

\begin{proof}
    From Lemma 20 of \citep{iwazaki2025improved}, we see that
    \begin{align}
        \*x \in \cB(\rho_{\rm quad}; \*x^*).
    \end{align}
    Hereafter, we follow the proof of Lemma 4 in \citep{iwazaki2025improved}.
    From the condition 3 in Lemma~\ref{lem:maximizer_f} and $\*x \in \cB(\rho_{\rm quad}; \*x^*)$, we obtain
    \begin{align}
        \forall \*x \in \{\*x \in \cX \mid f(\*x^*) - f(\*x) \leq \epsilon \}, 
        f(\*x^*) - f(\*x) \geq c_{\rm quad} \| \*x^* - \*x \|_2^2.
    \end{align}
    Since $f(\*x^*) - f(\*x) \leq \epsilon$, we have
    \begin{align}
        c_{\rm quad} \| \*x^* - \*x \|_2^2 \leq \epsilon
        \Leftrightarrow \| \*x^* - \*x \|_2 \leq \sqrt{c_{\rm quad}^{-1} \epsilon},
    \end{align}
    which shows the desired result $\*x \in \cB\left(\sqrt{c_{\rm quad}^{-1} \epsilon}; \*x^*\right)$.
\end{proof}


\begin{lemma}[Modified from Lemma 21 of \citep{iwazaki2025improved}]
    Fix any index set $\cT \subset [T]$.
    Then, we have the following inequality:
    \begin{align}
        \sum_{t \in \cT} 2 \beta_t^{1/2} \sigma_{t-1} (\*x_t) \leq 2 \sqrt{C_1 \beta_T |\cT| \gamma_{|\cT|}(\cX_{\cT}) },
    \end{align}
    where $C_1 = 2 / \log(1 + \sigma^{-2})$, $\cX_{\cT} = \{ \*x_t \}_{t \in \cT}$, and $\gamma_{|\cT|}(\cX_{\cT})$ is the MIG defined over $\cX_{\cT}$.
    \label{lem:anytime_stdsum_bound}
\end{lemma}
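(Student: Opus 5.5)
The statement to prove is Lemma~\ref{lem:anytime_stdsum_bound}. The plan is the standard GP-UCB argument (Cauchy--Schwarz followed by the information-gain bound on the sum of posterior variances), with one change: the variance sum is taken only over the subsequence indexed by $\cT$.

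\textbf{Step 1 (pull out $\beta_t$ and apply Cauchy--Schwarz).} Since $\beta_t$ is monotonically increasing and $t \le T$,
\begin{align*}
\sum_{t\in\cT} 2\beta_t^{1/2}\sigma_{t-1}(\*x_t) \le 2\beta_T^{1/2}\sum_{t\in\cT}\sigma_{t-1}(\*x_t) \le 2\sqrt{\beta_T |\cT| \sum_{t\in\cT}\sigma_{t-1}^2(\*x_t)} .
\end{align*}
It therefore suffices to show $\sum_{t\in\cT}\sigma_{t-1}^2(\*x_t) \le C_1\gamma_{|\cT|}(\cX_{\cT})$.

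\textbf{Step 2 (restrict to the subsequence).} Enumerate $\cT=\{t_1<\dots<t_m\}$ with $m=|\cT|$. Let $\tilde\sigma_{j-1}$ be the posterior standard deviation conditioned only on the observations at $\*x_{t_1},\dots,\*x_{t_{j-1}}$. The posterior variance is a deterministic function of the queried inputs and can only decrease as more conditioning points are added. Because $\cD_{t_j-1}$ contains all of $\*x_{t_1},\dots,\*x_{t_{j-1}}$, this gives $\sigma_{t_j-1}^2(\*x_{t_j})\le\tilde\sigma_{j-1}^2(\*x_{t_j})$. This monotonicity is the one genuinely non-routine point. I would justify it by noting that conditioning a Gaussian on additional observations yields a Schur complement of a PSD matrix, so the variance never increases.

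\textbf{Step 3 (information-gain bound).} The sequence $\*x_{t_1},\dots,\*x_{t_m}$ lies in $\cX_{\cT}$. Using $\sigma^{-2} s \le C_1 \log(1+\sigma^{-2}s)/2$ for $s\in[0,1]$ (here $k\le1$), the argument of \citet{Srinivas2010-Gaussian} gives
\begin{align*}
\sum_{j=1}^m \tilde\sigma_{j-1}^2(\*x_{t_j}) \le C_1 \cdot \frac12\sum_{j=1}^m \log\bigl(1+\sigma^{-2}\tilde\sigma_{j-1}^2(\*x_{t_j})\bigr) = C_1 I(\*y_A;\*f_A) \le C_1\gamma_{m}(\cX_{\cT}),
\end{align*}
where $A=\{\*x_{t_j}\}_{j=1}^m$ and the equality is the chain rule for the mutual information of Gaussian observations. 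Combining this with Steps~1 and~2 yields the claimed bound $2\sqrt{C_1\beta_T|\cT|\gamma_{|\cT|}(\cX_{\cT})}$.

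Two remarks. First, $\cX_{\cT}$ may be random, but the bound holds pathwise for any fixed realization. Second, in later applications the lemma is combined with $\cX_{\cT}\subset\cB(\cdot;\*x^*)$ and the monotonicity of MIG in the domain, which is immediate from Definition~\ref{def:MIG}.
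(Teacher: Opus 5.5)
Your proof is correct and is the standard argument behind Lemma~21 of \citet{iwazaki2025improved}, to which the paper simply defers. That argument runs through monotonicity of $\beta_t$ with Cauchy--Schwarz, domination of $\sigma_{t_j-1}^2(\*x_{t_j})$ by the variance conditioned only on the earlier points of $\cT$, and the \citet{Srinivas2010-Gaussian} information-gain bound applied to that subsequence. One small slip: the scalar inequality you need, and actually use in your display, is $s \le \frac{C_1}{2}\log(1+\sigma^{-2}s)$ for $s\in[0,1]$. You wrote $\sigma^{-2}s \le \frac{C_1}{2}\log(1+\sigma^{-2}s)$ instead, which fails at $s=1$ whenever $\sigma^2<1$.
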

\begin{proof}
    See the proof of Lemma 21 of \citep{iwazaki2025improved}.
\end{proof}


\begin{lemma}[Discretized error, Lemma H.2 of \citep{Takeno2023-randomized}]
    Suppose Assumptions \ref{assump:GP} and \ref{assump:continuous_X} hold.
    Let $\cX_t \subset \cX$ be a finite set with each dimension equally divided into $\tau_t = bdr u_t \bigl( \sqrt{\log (ad)} + \sqrt{\pi} / 2 \bigr)$ for any $t \geq 1$.
    Then, for all $t \geq 1$, the following inequality holds:
    \begin{align}
        \EE \left[ \sup_{\*x \in \cX} | f(\*x) - f( [\*x]_t ) | \right] \leq \frac{1}{u_t},
    \end{align}
    where $[\*x]_t$ is the nearest point in $\cX_t$ of $\*x \in \cX$.
    \label{lem:discretized_error}
\end{lemma}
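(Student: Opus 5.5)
The plan is to use Assumption~\ref{assump:continuous_X} to control the Lipschitz constant of $f$, and to bound the discretization error by that constant times the mesh size, integrating in the tail rather than conditioning on a high-probability event. Let $L_f = \max_{j \in [d]} \sup_{\*x \in \cX} |\partial f / \partial \*x_j|$. By the mean value theorem applied along the segment from $\*x$ to $[\*x]_t$ (we may take $\cX$ convex, e.g.\ $[0,r]^d$, or pass through the grid cell),
\begin{align*}
    |f(\*x) - f([\*x]_t)| \leq L_f \| \*x - [\*x]_t \|_1 .
\end{align*}
Since each coordinate is divided into $\tau_t$ equal pieces over an interval of length $r$, the nearest grid point satisfies $\|\*x - [\*x]_t\|_1 \leq d r / \tau_t$. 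Hence
\begin{align*}
    \sup_{\*x \in \cX} |f(\*x) - f([\*x]_t)| \leq \frac{d r}{\tau_t} L_f .
\end{align*}
This reduces the claim to showing $\EE[L_f] \leq b\bigl(\sqrt{\log(ad)} + \sqrt{\pi}/2\bigr)$. Given that bound, substituting $\tau_t = b d r u_t\bigl(\sqrt{\log(ad)} + \sqrt{\pi}/2\bigr)$ yields exactly $1/u_t$.

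To bound $\EE[L_f]$, apply a union bound over the $d$ coordinates in Assumption~\ref{assump:continuous_X} to get $\Pr(L_f > L) \leq a d \exp(-L^2/b^2)$. Then write
\begin{align*}
    \EE[L_f] = \int_0^\infty \Pr(L_f > L)\, dL \leq \int_0^\infty \min\bigl\{1, a d\, e^{-L^2/b^2}\bigr\}\, dL .
\end{align*}
Split the integral at $L_0 = b\sqrt{\log(ad)}$, assuming $ad \geq 1$; otherwise take $L_0 = 0$. The part below $L_0$ contributes at most $L_0$.

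The main step is the tail integral. Substitute $L = L_0 + s$ and use $(L_0 + s)^2 \geq L_0^2 + s^2$, so that $a d\, e^{-(L_0+s)^2/b^2} \leq e^{-s^2/b^2}$. The tail is therefore at most
\begin{align*}
    \int_0^\infty e^{-s^2/b^2}\, ds = \frac{b\sqrt{\pi}}{2} .
\end{align*}
Adding the two parts gives $\EE[L_f] \leq b\bigl(\sqrt{\log(ad)} + \sqrt{\pi}/2\bigr)$, which completes the argument.

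The only delicate points are the measurability and differentiability of the sample path, which are implicit in Assumption~\ref{assump:continuous_X}, and rounding $\tau_t$ to an integer. Rounding up only decreases the mesh, so it is harmless.
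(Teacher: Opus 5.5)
Your proof is correct and follows essentially the same route as the paper. The paper defers this lemma to Lemma H.2 of \citep{Takeno2023-randomized}, and that argument is the one you give: bound the error by $L_f$ times the $\ell_1$ mesh $dr/\tau_t$, and bound $\EE[L_f] \leq b\bigl(\sqrt{\log(ad)} + \sqrt{\pi}/2\bigr)$ by splitting the union-bounded tail $ad\,e^{-L^2/b^2}$ at $b\sqrt{\log(ad)}$. The paper uses the same device for $\EE[L_f^2]$ in Lemma~\ref{lem:squared_discretized_error}, and your substitution $L = L_0 + s$ with $(L_0+s)^2 \geq L_0^2 + s^2$ is just the standard proof of the Gaussian tail estimate $\int_c^\infty e^{-u^2}\,du \leq \frac{\sqrt{\pi}}{2}e^{-c^2}$.
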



\begin{lemma}
    Suppose Assumption \ref{assump:GP} holds.
    Let $\cX_t \subseteq \cX$ be a finite set.
    Then, the following inequality holds:
    \begin{align}
        \EE \left[ \sup_{\*x \in \cX_t} \left\{ \bigl( f(\*x) - U_t ( \*x ) \bigr)_+ \right\} \right] &\leq \frac{1}{u_t}, \\
        \EE \left[ \sup_{\*x \in \cX_t} \left\{ \bigl(f(\*x) - U_t ( \*x )\bigr)_+^2 \right\} \right] &\leq \frac{2\beta_t^{1/2}}{u_t},
    \end{align}
    where $(c)_+ = \max\{c, 0\}$ and $U_t (\*x) = \mu_{t-1}(\*x) + \beta_t^{1/2} \sigma_{t-1}(\*x)$ with $\beta_t = \max\{1, 2 \log (|\cX_t| u_t / \sqrt{2 \pi})\}$.
    \label{lem:expected_resisual}
\end{lemma}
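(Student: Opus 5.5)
The plan is to work conditionally on $\cD_{t-1}$, under which $f(\*x)\sim\cN(\mu_{t-1}(\*x),\sigma_{t-1}^2(\*x))$ for every fixed $\*x$. Then $\sup_{\*x\in\cX_t}(\cdot)_+$ is bounded by the sum over the finite set $\cX_t$, and each term is controlled by a Gaussian tail integral. Concretely, write $Z=(f(\*x)-\mu_{t-1}(\*x))/\sigma_{t-1}(\*x)$, which is standard normal given $\cD_{t-1}$. If $\sigma_{t-1}(\*x)=0$, the term vanishes. Otherwise
\begin{align*}
    \bigl(f(\*x)-U_t(\*x)\bigr)_+ = \sigma_{t-1}(\*x)\bigl(Z-\beta_t^{1/2}\bigr)_+ \leq \bigl(Z-\beta_t^{1/2}\bigr)_+,
\end{align*}
using $\sigma_{t-1}(\*x)\leq 1$, which follows from $k\leq 1$. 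The same bound holds for the squared version.

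Next I compute the two Gaussian integrals with $c=\beta_t^{1/2}\geq 1$. For the first moment,
\begin{align*}
    \EE[(Z-c)_+]=\int_c^\infty (z-c)\phi(z)\,dz \leq \int_c^\infty z\phi(z)\,dz = \phi(c)=\frac{1}{\sqrt{2\pi}}e^{-c^2/2}.
\end{align*}
For the second moment,
\begin{align*}
    \EE[(Z-c)_+^2]=\int_c^\infty (z-c)^2\phi(z)\,dz \leq \int_c^\infty z^2\phi(z)\,dz = c\phi(c)+\bigl(1-\Phi(c)\bigr).
\end{align*}
By the Mills ratio, $1-\Phi(c)\leq \phi(c)/c\leq \phi(c)$ since $c\geq1$, and hence $\EE[(Z-c)_+^2]\leq 2c\,\phi(c)$. (Integration by parts gives the sharper $(1+c^2)(1-\Phi(c))-c\phi(c)$, but the crude bound suffices.)

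Then I sum over $\cX_t$ and substitute $\beta_t$. Since $\beta_t\geq 2\log(|\cX_t|u_t/\sqrt{2\pi})$, we have $e^{-\beta_t/2}\leq \sqrt{2\pi}/(|\cX_t|u_t)$, so $\phi(\beta_t^{1/2})\leq 1/(|\cX_t| u_t)$. Summing over the $|\cX_t|$ points gives the conditional bounds $1/u_t$ and $2\beta_t^{1/2}/u_t$. Taking the outer expectation over $\cD_{t-1}$ preserves them, because the bounds are deterministic.

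The only subtle point is the measurability and conditioning when $\cX_t$ depends on $t$ but not on the data, which holds here since $\cX_t$ is a fixed grid. I also note that the paper applies the lemma at random points $[\*x^*]_t$ and $[\*x_t]_t$ and to $g_t$. This is covered because the supremum over $\cX_t$ dominates the value at these points, and because $g_t$ has the same conditional law as $f$ given $\cD_{t-1}$. No step is a real obstacle; the main care is the Mills-ratio step for the squared term, which uses $\beta_t\geq1$.
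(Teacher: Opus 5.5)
Your proposal is correct and follows essentially the same route as the paper: bound the supremum by the sum over $\cX_t$, condition on $\cD_{t-1}$, apply Gaussian tail-moment bounds together with $\sigma_{t-1}\le 1$, and then substitute $\beta_t$. Your intermediate second-moment bound $c\phi(c)+\phi(c)/c=\frac{c^2+1}{c}\phi(c)$ is exactly the paper's truncated-normal bound $\frac{s^2(\alpha^2+1)}{\sqrt{2\pi}|\alpha|}e^{-\alpha^2/2}$ in standardized form, and you relax it to $2c\phi(c)$ using $\beta_t\ge 1$, just as the paper does.
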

\begin{proof}
    As with \citep{Russo2014-learning}, we use the following fact:
    \begin{align}
        \EE[Z_+] &\leq \frac{s}{\sqrt{2\pi}} \exp \left\{ - \frac{\alpha^2}{2} \right\}, \\
        \EE[Z_+^2] 
        &= (m^2 + s^2) \Phi(\alpha) + ms \phi(\alpha)
        \leq \frac{s^2 (\alpha^2 + 1)}{\sqrt{2\pi} |\alpha|} \exp \left\{ - \frac{\alpha^2}{2} \right\},
    \end{align}
    where $Z \sim \cN(m, s^2)$ with $m \leq 0$ and $\alpha = m / s$.
    The former inequality is used, e.g., in \citep{Russo2014-learning,Takeno2023-randomized}.
    Both inequalities can be obtained, e.g., by the moments of the truncated normal distribution.

    For the first inequality, following the proof from \citep{Russo2014-learning}, we can see that
    \begin{align}
        \EE \left[ \sup_{\*x \in \cX_t} \left( f(\*x) - U_t ( \*x ) \right)_+ \right] 
        &\leq \EE \left[ \sum_{\*x \in \cX_t} \left( f(\*x) - U_t ( \*x ) \right)_+ \right] \\
        &\leq \sum_{\*x \in \cX_t} \EE_{\cD_{t-1}} \left[ \EE\left[ \left( f(\*x) - U_t ( \*x ) \right)_+ \mid \cD_{t-1} \right] \right],
    \end{align}
    where $f(\*x) - U_t ( \*x ) \mid \cD_{t-1}$ follows $\cN(- \beta_t^{1/2} \sigma_{t-1}(\*x), \sigma_{t-1}(\*x))$.
    Therefore, we can obtain
    \begin{align}
        \EE_{\cD_{t-1}} \left[ \sup_{\*x \in \cX_t} \left\{ f(\*x) - U_t ( \*x ) \right\} \right] 
        &\leq \EE\left[ \sum_{\*x \in \cX_t} \frac{\sigma_{t-1}(\*x)}{\sqrt{2\pi}} \exp \left\{ - \frac{\beta_t}{2} \right\} \right] \\
        &\leq \frac{|\cX_t|}{\sqrt{2\pi}} \exp \left\{ - \frac{\beta_t}{2} \right\} \\
        &\leq \frac{1}{u_t},
    \end{align}
    where the second and third inequalities holds because of $\sigma_{t-1}(\*x) \leq 1$ and the definition of $\beta_t$, respectively.

    For the second inequality, as with the above derivation, we see that
    \begin{align}
        \EE \left[ \sup_{\*x \in \cX_t} \left\{ (f(\*x) - U_t ( \*x ))_+^2 \right\} \right] 
        &\leq \sum_{\*x \in \cX_t} \EE_{\cD_{t-1}} \left[ \EE\left[ \left( f(\*x) - U_t ( \*x ) \right)_+^2 \mid \cD_{t-1} \right] \right] \\
        &\leq \sum_{\*x \in \cX_t} \EE_{\cD_{t-1}} \left[ \frac{\sigma_{t-1}^2 (\*x) (\beta_t + 1)}{\sqrt{2\pi} \beta_t^{1/2}} \exp \left\{ - \frac{\beta_t}{2} \right\} \right] \\
        &\leq  \frac{2\beta_t^{1/2} |\cX_t|}{\sqrt{2\pi}} \exp \left\{ - \frac{\beta_t}{2} \right\} \\
        &\leq \frac{2\beta_t^{1/2}}{u_t},
    \end{align}
    where the second and third inequalities hold because of $\sigma_{t-1}^2(\*x) \leq 1$ and the definition of $\beta_t (\geq 1)$, respectively.
\end{proof}


\begin{lemma}[Squared discretized error]
    Suppose Assumptions \ref{assump:GP} and \ref{assump:continuous_X} hold.
    Let $\cX_t \subset \cX$ be a finite set with each dimension equally divided into $\tau_t = \bigl\lceil b dr \sqrt{u_t \bigl(\log (ad) + 1 \bigr)} \bigr \rceil$ for any $t \geq 1$.
    Then, for all $t \geq 1$, the following inequality holds:
    \begin{align}
        \EE \left[ \sup_{\*x \in \cX} \bigl( f(\*x) - f( [\*x]_t ) \bigr)^2 \right] \leq \frac{1}{u_t},
    \end{align}
    where $[\*x]_t$ is the nearest point in $\cX_t$ of $\*x \in \cX$.
    \label{lem:squared_discretized_error}
\end{lemma}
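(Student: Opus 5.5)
The approach is to control the squared discretization error through the Lipschitz constant of the sample path, and then use the Gaussian-type tail bound of Assumption~\ref{assump:continuous_X} to bound the second moment of that Lipschitz constant.

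First, write $L_f = \max_{j \in [d]} \sup_{\*x \in \cX} | \partial f / \partial \*x_j |$. By the mean value theorem, for every $\*x \in \cX$,
\begin{align*}
    |f(\*x) - f([\*x]_t)| \leq L_f \| \*x - [\*x]_t \|_1 .
\end{align*}
With each dimension equally divided into $\tau_t$ points, the nearest grid point satisfies $\| \*x - [\*x]_t \|_1 \leq d r / \tau_t$. This gives the deterministic bound
\begin{align*}
    \sup_{\*x \in \cX} \bigl( f(\*x) - f([\*x]_t) \bigr)^2 \leq \frac{d^2 r^2}{\tau_t^2} L_f^2 .
\end{align*}
So it suffices to show $\EE[L_f^2] \leq b^2 (\log(ad) + 1)$. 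Plugging in $\tau_t \geq b d r \sqrt{u_t (\log(ad) + 1)}$ then yields exactly $1/u_t$.

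Second, I would bound $\EE[L_f^2]$ through the tail formula $\EE[L_f^2] = \int_0^\infty \Pr(L_f^2 > s)\, ds$. A union bound over the $d$ coordinates in Assumption~\ref{assump:continuous_X} gives
\begin{align*}
    \Pr(L_f^2 > s) \leq \min\{1, a d\, e^{-s/b^2}\} .
\end{align*}
I would split the integral at $s_0 = b^2 \log(ad)$, assuming $ad \geq 1$; otherwise take $s_0 = 0$ and the bound only improves. The piece on $[0, s_0]$ contributes at most $s_0 = b^2 \log(ad)$. The piece beyond $s_0$ contributes $\int_{s_0}^\infty a d\, e^{-s/b^2} ds = a d\, b^2 e^{-s_0/b^2} = b^2$. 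Together, $\EE[L_f^2] \leq b^2(\log(ad)+1)$.

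The only point needing care is the first step, the justification that $f$ is differentiable so the mean value theorem applies. This is implicit in Assumption~\ref{assump:continuous_X}, which presupposes derivatives of the sample path. The remaining algebra, including the ceiling in $\tau_t$ (which only helps), is routine.
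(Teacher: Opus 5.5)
Your proposal is correct and follows the paper's proof essentially step for step. The paper likewise uses the deterministic bound $\sup_{\*x}\bigl(f(\*x)-f([\*x]_t)\bigr)^2 \le L_f^2\, d^2r^2/\tau_t^2$, and then obtains $\EE[L_f^2] \le b^2(\log(ad)+1)$ from the tail integral of $\min\{1, ad\,e^{-s/b^2}\}$ split at $b^2\log(ad)$.

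One small slip is in your side remark about $ad<1$. In that case the integral equals $ad\,b^2$, which is at least $b^2(\log(ad)+1)$, so the bound does not ``only improve.'' Like the paper, the split tacitly uses $ad \ge 1$. This in fact always holds for a nondegenerate GP: letting $L \to 0$ in Assumption~\ref{assump:continuous_X} gives $\Pr\bigl(\sup_{\*x}|\partial f/\partial \*x_j| > 0\bigr) \le a$.
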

\begin{proof}
    Let $L_f$ be the Lipschitz constant of $f$.
    From Assumption \ref{assump:continuous_X}, we see that
    \begin{align}
        \Pr(L_f^2 > c) \leq ad \exp \left( - \frac{c}{b^2} \right).
    \end{align}
    Therefore, we can obtain
    \begin{align}
        \EE\bigl[L_f^2\bigr] 
        &= \int_{0}^\infty \Pr(L_f^2 > c) {\rm d} c \\
        &\leq \int_{0}^\infty \min\{1, ad e^{- c / b^2} \} {\rm d} c \\
        &= b^2 \log(ad) + \int_{b^2 \log(ad)}^\infty ad e^{- c / b^2} {\rm d} c \\
        &= b^2 \log(ad) + b^2 \int_{\log(ad)}^\infty ad e^{- c} {\rm d} c \\
        &= b^2 \bigl(\log(ad) + 1 \bigr).
    \end{align}
    Then, we see that
    \begin{align}
        \EE \left[ \sup_{\*x \in \cX} \bigl( f(\*x) - f( [\*x]_t ) \bigr)^2 \right] 
        \leq \EE \left[ L_f^2 \sup_{\*x \in \cX} \|\*x - [\*x]_t \|_1^2 \right].
    \end{align}
    From the construction of $\cX_t$, we see that 
    \begin{align}
        \sup_{\*x \in \cX} \|\*x - [\*x]_t \|_1^2 \leq \frac{1}{b^2 u_t \bigl(\log(ad) + 1 \bigr)}.
    \end{align}
    Therefore, we derive
    \begin{align}
        \EE \left[ \sup_{\*x \in \cX} \bigl( f(\*x) - f( [\*x]_t ) \bigr)^2 \right] 
        &\leq \frac{\EE \bigl[ L_f^2 \bigr]}{b^2 u_t \bigl(\log(ad) + 1 \bigr)} \\
        &\leq \frac{1}{u_t}.
    \end{align}
\end{proof}


\begin{lemma}
    Suppose Assumption~\ref{assump:GP} holds.
    Fix $\Delta > 0$ and let $\cT = \{ t \mid f(\*x^*) - f(\*x_t) \geq \Delta\}$.
    %
    %
    Then, if Algorithm~\ref{alg:TS} runs, the following inequality holds:
    \begin{align}
        \EE\left[ \sum_{t \in \cT} \sigma_{t-1}^2(\*x_t) \right] &\leq C_1 \tilde{\gamma}_{\EE[|\cT|]}, \\
        \EE\left[ \sum_{t \in \cT} \sigma_{t-1}(\*x_t) \right] &\leq \sqrt{C_1 \EE[|\cT|] \tilde{\gamma}_{\EE[|\cT|]}}, 
    \end{align}
    and
    \begin{align}
        \EE\left[ \sum_{t \in \cT} \sigma_{t-1}^2(\*x^*) \right] &\leq C_1 \tilde{\gamma}_{\EE[|\cT|]}, \\
        \EE\left[ \sum_{t \in \cT} \sigma_{t-1}(\*x^*) \right] &\leq \sqrt{C_1 \EE[|\cT|] \tilde{\gamma}_{\EE[|\cT|]}},
    \end{align}
    where $C_1 = 2 / \log(1 + \sigma^{-2})$.
    \label{lem:ExpectedEllipticalPotentialLemma}
\end{lemma}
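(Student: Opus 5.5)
We first bound the sums over $\*x_t$ pathwise, then take expectations using concavity of $\tilde{\gamma}$ (Jensen), and finally transfer the bounds to $\*x^*$.

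\textbf{Pathwise bound for $\*x_t$.} For a fixed realization, let $\cT = \{t_1 < t_2 < \dots < t_{|\cT|}\}$. The posterior variance is nonincreasing as data are added. Hence $\sigma_{t-1}^2(\*x_t)$ is at most the posterior variance at $\*x_t$ given only the observations at the earlier indices $\{t_j \in \cT : t_j < t\}$. Applying the standard argument of \citet{Srinivas2010-Gaussian} to the subsequence $\{\*x_{t_j}\}_j$ gives
\begin{align*}
    \sum_{t \in \cT} \sigma_{t-1}^2(\*x_t) \leq C_1 \gamma_{|\cT|} \leq C_1 \tilde{\gamma}_{|\cT|}.
\end{align*}
This is the same device as in Lemma~\ref{lem:anytime_stdsum_bound}. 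Taking expectations and using Jensen's inequality with the concavity of $\tilde{\gamma}$ yields $\EE[\sum_{t\in\cT}\sigma_{t-1}^2(\*x_t)] \leq C_1 \EE[\tilde{\gamma}_{|\cT|}] \leq C_1 \tilde{\gamma}_{\EE[|\cT|]}$.

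For the first-moment bound we use Cauchy--Schwarz twice:
\begin{align*}
    \EE\Bigl[\sum_{t\in\cT}\sigma_{t-1}(\*x_t)\Bigr] \leq \EE\Bigl[\sqrt{|\cT|\textstyle\sum_{t\in\cT}\sigma_{t-1}^2(\*x_t)}\Bigr] \leq \sqrt{\EE[|\cT|]\,\EE\bigl[\textstyle\sum_{t\in\cT}\sigma_{t-1}^2(\*x_t)\bigr]} .
\end{align*}
Inserting the previous bound gives $\sqrt{C_1 \EE[|\cT|] \tilde{\gamma}_{\EE[|\cT|]}}$.

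\textbf{Transfer to $\*x^*$ (the main obstacle).} Since $\*x^*$ and $\*x_t$ are identically distributed given $\cD_{t-1}$, the natural route is to write
\begin{align*}
    \EE\Bigl[\sum_{t\in\cT}\sigma_{t-1}^2(\*x^*)\Bigr] = \sum_t \EE\bigl[\mathbbm{1}\{t\in\cT\}\,\sigma_{t-1}^2(\*x^*)\bigr].
\end{align*}
However, the indicator $\mathbbm{1}\{t \in \cT\}$ depends jointly on $\*x^*$, $\*x_t$ and $f$, so the exchange $\*x^* \leftrightarrow \*x_t$ does not apply directly. I would handle this by symmetrizing within the pair. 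Conditioned on $\cD_{t-1}$, the pair $(f, \*x^*)$ and the pair $(g_t, \*x_t)$ are i.i.d. Consider the swap that maps $(f,\*x^*,g_t,\*x_t)$ to $(g_t,\*x_t,f,\*x^*)$. Under this swap the event $\{f(\*x^*) - f(\*x_t) \geq \Delta\}$ becomes $\{g_t(\*x_t) - g_t(\*x^*) \geq \Delta\}$, and the roles of $\sigma_{t-1}^2(\*x^*)$ and $\sigma_{t-1}^2(\*x_t)$ are exchanged.

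This symmetry only gives the bound for the swapped set $\cT' = \{t : g_t(\*x_t) - g_t(\*x^*) \geq \Delta\}$. Two consequences follow:
\begin{itemize}
    \item $\EE[\sum_{t\in\cT}\sigma_{t-1}^2(\*x^*)] = \EE[\sum_{t\in\cT'}\sigma_{t-1}^2(\*x_t)]$;
    \item $|\cT|$ and $|\cT'|$ are identically distributed, so $\EE[|\cT'|] = \EE[|\cT|]$.
\end{itemize}
The quantity $\sum_{t \in \cT'} \sigma^2_{t-1}(\*x_t)$ is a sum of $\sigma_{t-1}^2(\*x_t)$ over a random index set. The pathwise subsequence bound from the first step holds for any index set, not only for $\cT$. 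Hence $\sum_{t\in\cT'}\sigma_{t-1}^2(\*x_t) \leq C_1\tilde{\gamma}_{|\cT'|}$ pathwise. Jensen's inequality with $\EE[|\cT'|]=\EE[|\cT|]$ then gives $\EE[\sum_{t\in\cT}\sigma_{t-1}^2(\*x^*)] \leq C_1 \tilde{\gamma}_{\EE[|\cT|]}$, and the Cauchy--Schwarz step repeats verbatim.

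The delicate point is justifying the swap, and it needs two checks:
\begin{itemize}
    \item Distributional symmetry across all $t$ jointly: the swap must be performed separately at each $t$ and the results summed. Linearity of expectation makes this acceptable.
    \item Identical distribution of $|\cT|$ and $|\cT'|$: a per-$t$ swap does not by itself preserve the law of the sum. I would instead write $\EE[|\cT'|]=\sum_t\Pr(t\in\cT')=\sum_t\Pr(t\in\cT)$, and use only expectations, since Jensen needs only $\EE[|\cT'|]$.
\end{itemize}

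The remaining subtlety is that the pathwise bound $C_1\tilde{\gamma}_{|\cT'|}$ holds for each realization, independently of how $\cT'$ was formed. Its expectation is therefore controlled by Jensen with $\EE[|\cT'|] = \EE[|\cT|]$, which closes the argument.
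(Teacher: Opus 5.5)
Your proposal is correct and matches the paper's proof essentially step for step. Both use the pathwise subsequence bound of Lemma~\ref{lem:anytime_stdsum_bound}, Jensen's inequality via concavity of $\tilde{\gamma}$, Cauchy--Schwarz for the first-moment bounds, and the per-$t$ conditional swap of $(f,\*x^*)$ with $(g_t,\*x_t)$ given $\cD_{t-1}$. That swap turns $\cT$ into the paper's set $\cT_g = \{t \mid g_t(\*x_t) - g_t(\*x^*) \geq \Delta\}$ (your $\cT'$), with $\EE[|\cT_g|] = \EE[|\cT|]$ from linearity of expectation. You were right to retract the claim that $|\cT|$ and $|\cT'|$ are identically distributed; the paper likewise uses only equality of expectations.
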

\begin{proof}
    First, we can see that
    \begin{align}
        \EE\left[ \sum_{t \in \cT} \sigma_{t-1}^2(\*x_t) \right]
        &\overset{(a)}{\leq} C_1 \EE[\gamma_{|\cT|}] \\
        &\overset{(b)}{=} C_1 \EE[\tilde{\gamma}_{|\cT|}] \\
        &\overset{(c)}{\leq} C_1 \tilde{\gamma}_{\EE[|\cT|]}.
    \end{align}
    The inequalities hold because of (a) $\sum_{t \in \cT} \sigma_{t-1}^2(\*x_t) \leq C_1 \gamma_{|\cT|}$ from Lemma~\ref{lem:anytime_stdsum_bound}, (b) $\gamma_T = \tilde{\gamma}_T$ for any $T \in \NN$, and (c) Jensen's inequality and the concavity of $\tilde{\gamma}_T$.

    Second, we can obtain
    \begin{align}
        \EE\left[ \sum_{t \in \cT} \sigma_{t-1}(\*x_t) \right]
        &\overset{(a)}{\leq} \EE\left[ \sqrt{|\cT| \sum_{t \in \cT} \sigma_{t-1}^2(\*x_t)} \right] \\
        &\overset{(b)}{\leq} \EE\left[ \sqrt{C_1 |\cT| \tilde{\gamma}_{|\cT|}} \right] \\
        &\overset{(c)}{\leq} \sqrt{C_1 \EE[|\cT|] \EE\left[\tilde{\gamma}_{|\cT|}\right]} \\
        &\overset{(d)}{\leq} \sqrt{C_1 \EE[|\cT|] \tilde{\gamma}_{\EE[|\cT|]}}.
    \end{align}
    The inequalities hold because of 
    (a) Cauchy--Schwarz inequality 
    (b) $\sum_{t \in \cT} \sigma_{t-1}^2(\*x_t) \leq C_1 \gamma_{|\cT|}$ from Lemma~\ref{lem:anytime_stdsum_bound} and $\gamma_T = \tilde{\gamma}_T$ for any $T \in \NN$, 
    (c) Cauchy--Schwarz inequality ($|\EE[XY]| \leq \sqrt{\EE[X^2] \EE[Y^2]}$ for any random variables $X$ and $Y$), and
    (d) Jensen's inequality and the concavity of $\tilde{\gamma}_t$.

    For the proof with respect to $\sigma_{t-1}^2(\*x^*)$, we use the property of $\cT_g = \{ t \mid g_t(\*x_t) - g_t(\*x^*) \geq \Delta\}$.
    Let $\mathbbm{1}_E$ be the indicator function that is 1 if $E$ is true, and 0 otherwise.
    Since $\bigl(\mathbbm{1}_{f(\*x^*) - f(\*x_t) \geq \Delta} \mid \cD_{t-1} \bigr)$ and $\bigl(\mathbbm{1}_{g_t(\*x_t) - g_t(\*x^*) \geq \Delta} \mid \cD_{t-1} \bigr)$ follow the identical distribution, we can see that 
    \begin{align}
        \EE[ |\cT| ] 
        &= \EE_{\cD_{t-1}} \left[ \sum_{t=1}^T \EE\left[ \mathbbm{1}_{f(\*x^*) - f(\*x_t) \geq \Delta} \mid \cD_{t-1} \right] \right] \\
        &= \EE_{\cD_{t-1}} \left[ \sum_{t=1}^T \EE\left[ \mathbbm{1}_{g_t(\*x_t) - g_t(\*x^*) \geq \Delta} \mid \cD_{t-1} \right] \right] \\
        &= \EE \left[ \sum_{t=1}^T \mathbbm{1}_{g_t(\*x_t) - g_t(\*x^*) \geq \Delta} \right] \\
        &= \EE[|\cT_g|].
    \end{align}

    Then, we can show the results regarding $\sigma_{t-1}^2(\*x^*)$ as follows:
    \begin{align}
        \EE\left[ \sum_{t \in \cT} \sigma_{t-1}^2(\*x^*) \right] 
        &= \EE_{\cD_{t-1}} \left[ \sum_{t \in [T]} \EE \left[ \sigma_{t-1}^2(\*x^*) \mathbbm{1}_{f(\*x^*) - f(\*x_t) \geq \Delta} \mid \cD_{t-1} \right] \right] \\
        &\overset{(a)}{=} \EE_{\cD_{t-1}} \left[ \sum_{t \in [T]} \EE \left[ \sigma_{t-1}^2(\*x_t) \mathbbm{1}_{g_t(\*x_t) - g_t(\*x^*) \geq \Delta} \mid \cD_{t-1} \right] \right] \\
        &= \EE\left[ \sum_{t \in [T]} \sigma_{t-1}^2(\*x_t) \mathbbm{1}_{g_t(\*x_t) - g_t(\*x^*) \geq \Delta} \right] \\
        &\overset{(b)}{\leq} C_1 \EE[\gamma_{|\cT_g|}] \\
        &\overset{(c)}{\leq} C_1 \tilde{\gamma}_{\EE[|\cT_g|]} \\
        &= C_1 \tilde{\gamma}_{\EE[|\cT|]}.
    \end{align}
    The equalities and inequalities hold because
    (a) $\bigl(\sigma_{t-1}^2(\*x^*) \mathbbm{1}_{f(\*x^*) - f(\*x_t) \geq \Delta} \mid \cD_{t-1} \bigr)$ and $\bigl(\sigma_{t-1}^2(\*x_t) \mathbbm{1}_{g_t(\*x_t) - g_t(\*x^*) \geq \Delta} \mid \cD_{t-1} \bigr)$ follow the identical distribution,
    (b) $\sum_{t \in \cT_g} \sigma_{t-1}^2(\*x_t) \leq C_1 \gamma_{|\cT_g|}$ from Lemma~\ref{lem:anytime_stdsum_bound}, 
     and (c) $\gamma_T = \tilde{\gamma}_T$ for any $T \in \NN$
     and $\tilde{\gamma}_T$ is a concave function.

    Furthermore, by the same reasons shown above, we can see that 
    \begin{align}
        \EE\left[ \sum_{t \in \cT} \sigma_{t-1}(\*x^*) \right] 
        &= \EE_{\cD_{t-1}} \left[ \sum_{t \in [T]} \EE \left[ \sigma_{t-1}(\*x^*) \mathbbm{1}_{f(\*x^*) - f(\*x_t) \geq \Delta} \mid \cD_{t-1} \right] \right] \\
        &= \EE_{\cD_{t-1}} \left[ \sum_{t \in [T]} \EE \left[ \sigma_{t-1}(\*x_t) \mathbbm{1}_{g_t(\*x_t) - g_t(\*x^*) \geq \Delta} \mid \cD_{t-1} \right] \right] \\
        &= \EE\left[ \sum_{t \in [T]} \sigma_{t-1}(\*x_t) \mathbbm{1}_{g_t(\*x_t) - g_t(\*x^*) \geq \Delta} \right] \\
        &\leq \EE\left[ \sqrt{C_1 |\cT_g| \gamma_{|\cT_g|}} \right] \\
        &\leq \sqrt{C_1 \EE[|\cT_g|] \gamma_{\EE[|\cT_g|]}} \\
        &\leq \sqrt{C_1 \EE[|\cT|] \gamma_{\EE[|\cT|]}}.
    \end{align}
\end{proof}


\begin{lemma}
    Suppose Assumption~\ref{assump:GP} holds and $k = k_{\rm SE}$ or $k = k_{\rm Mat}$ with $\nu > 1/2$.
    Fix  $\Delta, \rho > 0$ and let $\cT = \left\{t \in [T] \mid f(\*x^*) - f(\*x_t) \geq \Delta \land \*x_t \in \cB\left(\rho; \*x^*\right) \right\}$.
    Then, if Algorithm~\ref{alg:TS} runs, the following inequality holds:
    \begin{align}
        \EE\left[ \sum_{t \in \cT} \sigma_{t-1}^2(\*x_t) \right] &\leq C_1 \tilde{\gamma}_{\EE[|\cT|]}\left(\cB\left(\rho; 0\right)\right), \\
        \EE\left[ \sum_{t \in \cT} \sigma_{t-1}(\*x_t) \right] &\leq \sqrt{C_1 \EE[|\cT|] \tilde{\gamma}_{\EE[|\cT|]} \left(\cB\left(\rho; 0\right)\right) }, 
    \end{align}
    and
    \begin{align}
        \EE\left[ \sum_{t \in \cT} \sigma_{t-1}^2(\*x^*) \right] &\leq C_1 \tilde{\gamma}_{\EE[|\cT|]}\left(\cB\left(\rho; 0\right)\right), \\
        \EE\left[ \sum_{t \in \cT} \sigma_{t-1}(\*x^*) \right] &\leq \sqrt{C_1 \EE[|\cT|] \tilde{\gamma}_{\EE[|\cT|]} \left(\cB\left(\rho; 0\right)\right)},
    \end{align}
    where $C_1 = 2 / \log(1 + \sigma^{-2})$ and $\tilde{\gamma}_T (\cX)$ is the MIG over the domain $\cX$.
    \label{lem:ExpectedEllipticalPotentialLemma_TS}
\end{lemma}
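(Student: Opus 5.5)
The argument follows the proof of Lemma~\ref{lem:ExpectedEllipticalPotentialLemma} step by step. The only change is that every information-gain quantity now lives on the ball $\cB(\rho;\*x^*)$ rather than on all of $\cX$. The kernels in question (SE and Mat\'ern) are stationary, so the MIG over a set depends only on the set up to translation. Consequently the MIG over $\cB(\rho;\*x^*)\cap\cX$ is at most the MIG over the translated ball $\cB(\rho;0)$ (understood as a ball in $\RR^d$, which contains the translate). This lets us replace the random center $\*x^*$ by the deterministic center $0$, so $\tilde{\gamma}_{\cdot}(\cB(\rho;0))$ is a deterministic concave function of its index. Concavity is supplied by the explicit bounds of Corollary~8 of \citep{iwazaki2025improved} (e.g.\ $\eta^{\frac{2\nu d}{2\nu+d}} t^{\frac{d}{2\nu+d}}\log^{\frac{4\nu+d}{2\nu+d}} t+\log^2 t$, concave for $t$ beyond a constant; otherwise take its least concave majorant).

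\textbf{The $\*x_t$ bounds.} For every $t\in\cT$ we have $\*x_t\in\cB(\rho;\*x^*)$. Applying Lemma~\ref{lem:anytime_stdsum_bound} with $\cX_\cT=\{\*x_t\}_{t\in\cT}\subset\cB(\rho;\*x^*)$ gives
\[
\sum_{t\in\cT}\sigma_{t-1}^2(\*x_t)\le C_1\gamma_{|\cT|}(\cX_\cT)\le C_1\tilde\gamma_{|\cT|}(\cB(\rho;0)).
\]
Here we use that the posterior variances computed with the full history only shrink relative to those computed from the subsequence, which is the content of that lemma. Taking expectations and applying Jensen's inequality to the concave $\tilde\gamma$ yields the first inequality. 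The square-root version follows exactly as before: Cauchy--Schwarz pathwise, then $\EE[XY]\le\sqrt{\EE X^2\EE Y^2}$, then Jensen.

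\textbf{The $\*x^*$ bounds.} Here I use the TS symmetry. Define
\[
\cT_g=\{t\mid g_t(\*x_t)-g_t(\*x^*)\ge\Delta\ \land\ \*x^*\in\cB(\rho;\*x_t)\}.
\]
Given $\cD_{t-1}$, the pair $(f,\*x^*)$ has the same law as $(g_t,\*x_t)$, and $\*x_t$ (resp.\ $\*x^*$) is an independent copy. Hence the pair $(f,\*x^*,\*x_t)$ is exchangeable with $(g_t,\*x_t,\*x^*)$, conditionally on $\cD_{t-1}$. Under this swap, $\sigma_{t-1}^2(\*x^*)\mathbbm{1}\{f(\*x^*)-f(\*x_t)\ge\Delta,\ \|\*x_t-\*x^*\|_2\le\rho\}$ maps to $\sigma_{t-1}^2(\*x_t)\mathbbm{1}_{t\in\cT_g}$. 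Note that the ball condition is symmetric in the two points, which is what makes the swap work. This gives $\EE[|\cT|]=\EE[|\cT_g|]$ and $\EE[\sum_{t\in\cT}\sigma_{t-1}^2(\*x^*)]=\EE[\sum_{t\in\cT_g}\sigma_{t-1}^2(\*x_t)]$.

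\textbf{The main obstacle.} On $\cT_g$ the queried points $\*x_t$ lie in $\cB(\rho;\*x^*)$, which is centered at the fixed (though random) point $\*x^*$. This is exactly what is needed: $\{\*x_t\}_{t\in\cT_g}$ is contained in a single ball of radius $\rho$, so Lemma~\ref{lem:anytime_stdsum_bound} plus translation invariance gives $\sum_{t\in\cT_g}\sigma_{t-1}^2(\*x_t)\le C_1\tilde\gamma_{|\cT_g|}(\cB(\rho;0))$. Taking expectations, applying Jensen, and using $\EE|\cT_g|=\EE|\cT|$ finishes the third inequality. The fourth follows the same way via Cauchy--Schwarz. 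The main care is in justifying that both translation invariance and the subsequence-MIG argument apply with a random center. I would handle this pathwise, since the bound holds for every realization of the center.
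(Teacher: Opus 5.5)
Your proposal is correct and follows the paper's proof essentially step by step. For the $\*x_t$ terms, both use Lemma~\ref{lem:anytime_stdsum_bound} pathwise, then shift invariance, then Cauchy--Schwarz and Jensen. For the $\*x^*$ terms, both use the Thompson-sampling swap to $\cT_g$, relying on the symmetric condition $\|\*x_t - \*x^*\|_2 \le \rho$, which places all the relevant queries in a single ball centered at $\*x^*$. Two of your steps are slightly more careful than the paper's: you treat translation as containment in the full ball $\cB(\rho;0)\subset\RR^d$ rather than as the paper's stated equality, and you justify the concavity of the ball-restricted bound explicitly.
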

\begin{proof}
    First, we can see that
    \begin{align}
        \EE\left[ \sum_{t \in \cT} \sigma_{t-1}^2(\*x_t) \right]
        &\overset{(a)}{\leq} C_1 \EE[\gamma_{|\cT|} \left(\cB\left(\rho; \*x^*\right)\right)] \\
        &\overset{(b)}{=} C_1 \EE[\tilde{\gamma}_{|\cT|} \left(\cB\left(\rho; 0\right)\right)] \\
        &\overset{(c)}{\leq} C_1 \tilde{\gamma}_{\EE[|\cT|]} \left(\cB\left(\rho; 0\right)\right).
    \end{align}
    The inequalities hold because of 
    (a) $\sum_{t \in \cT} \sigma_{t-1}^2(\*x_t) \leq C_1 \gamma_{|\cT|} \left(\cB\left(\rho; \*x^*\right)\right)$ from Lemma~\ref{lem:anytime_stdsum_bound} and $\*x_t \in \cB\left(\rho; \*x^*\right)$, 
    (b) $\gamma_T  \left(\cB\left(\rho; \*x^*\right) \right) = \tilde{\gamma}_T \left(\cB\left(\rho; 0\right)\right)$ for any $T \in \NN$ due to the definition of $\tilde{\gamma}$ and the shift-invariant property of the SE and Mat\'ern kernels, and 
    (c) Jensen's inequality and the concavity of $\tilde{\gamma}_T$.

    Second, we can obtain
    \begin{align}
        \EE\left[ \sum_{t \in \cT} \sigma_{t-1}(\*x_t) \right]
        &\overset{(a)}{\leq} \EE\left[ \sqrt{|\cT| \sum_{t \in \cT} \sigma_{t-1}^2(\*x_t)} \right] \\
        &\overset{(b)}{\leq} \EE\left[ \sqrt{C_1 |\cT| \gamma_{|\cT|} \left(\cB\left(\rho; \*x^*\right)\right) } \right] \\
        &\overset{(c)}{\leq} \EE\left[ \sqrt{C_1 |\cT| \tilde{\gamma}_{|\cT|} \left(\cB\left(\rho; 0\right)\right) } \right] \\
        &\overset{(d)}{\leq} \sqrt{C_1 \EE[|\cT|] \EE\left[ \tilde{\gamma}_{|\cT|} \left(\cB\left(\rho; 0\right)\right) \right]} \\
        &\overset{(e)}{\leq} \sqrt{C_1 \EE[|\cT|] \tilde{\gamma}_{\EE[|\cT|]} \left(\cB\left(\rho; 0\right)\right) }.
    \end{align}
    The inequalities hold because of 
    (a) Cauchy--Schwarz inequality 
    (b) $\sum_{t \in \cT} \sigma_{t-1}^2(\*x_t) \leq C_1 \gamma_{|\cT|} \left(\cB\left(\rho; \*x^*\right)\right)$ from Lemma~\ref{lem:anytime_stdsum_bound} and $\*x_t \in \cB\left(\rho; \*x^*\right)$, 
    (c) $\gamma_T  \left(\cB\left(\rho; \*x^*\right) \right) = \tilde{\gamma}_T \left(\cB\left(\rho; 0\right)\right)$ for any $T \in \NN$ due to the definition of $\tilde{\gamma}$ and the shift-invariant property of the SE and Mat\'ern kernels,
    (d) Cauchy--Schwarz inequality ($|\EE[XY]| \leq \sqrt{\EE[X^2] \EE[Y^2]}$ for any random variables $X$ and $Y$), 
    and (e) Jensen's inequality and the concavity of $\tilde{\gamma}_t$.

    For the proof with respect to $\sigma_{t-1}^2(\*x^*)$, we use the property of 
    $\cT_g = \left\{ t \in [T] \bigm| g_t(\*x_t) - g_t(\*x^*) \geq \Delta \land \|\*x_t - \*x^*\|_2 \leq \rho \right\}$.
    Let $\mathbbm{1}_E$ be the indicator function that is 1 if $E$ is true, and 0 otherwise.
    We can see that 
    \begin{align*}
        \EE\left[\left| \cT \right|\right] 
        &= \EE\left[ \sum_{t = 1}^T \EE\left[ \mathbbm{1}_{ f(\*x^*) - f(\*x_t) \geq \Delta \land \|\*x_t - \*x^*\|_2 \leq \rho } \mid \cD_{t-1} \right] \right] \\
        &= \EE\left[ \sum_{t = 1}^T \EE\left[  \mathbbm{1}_{ g_t(\*x_t) - g_t(\*x^*) \geq \Delta \land \|\*x_t - \*x^*\|_2 \leq \rho } \mid \cD_{t-1} \right] \right] \\
        &= \EE\left[\left| \cT_g \right|\right],
    \end{align*}
    where the second equality holds since $\bigl( \mathbbm{1}_{ f(\*x^*) - f(\*x_t) \geq \Delta \land \|\*x_t - \*x^*\|_2 \leq \rho } \mid \cD_{t-1} \bigr)$ and $\bigl(\mathbbm{1}_{ g_t(\*x_t) - g_t(\*x^*) \geq \Delta \land \|\*x_t - \*x^*\|_2 \leq \rho } \mid \cD_{t-1}\bigr)$ are identically distributed.

    Then, we have
    \begin{align}
        \EE\left[ \sum_{t \in \cT} \sigma_{t-1}^2(\*x^*) \right]
        &= \EE\left[ \sum_{t \in [T]} \EE\left[ \sigma_{t-1}^2(\*x^*) \mathbbm{1}_{ f(\*x^*) - f(\*x_t) \geq \Delta \land \*x_t \in \cB\left(\rho; \*x^*\right) } \biggm| \cD_{t-1} \right]\right] \\
        &\overset{(a)}{=} \EE\left[ \sum_{t \in [T]} \EE \left[ \sigma_{t-1}^2(\*x_t) \mathbbm{1}_{ g_t(\*x_t) - g_t(\*x^*) \geq \Delta \land \|\*x_t - \*x^*\|_2 \leq \rho } \biggm| \cD_{t-1} \right] \right]\\
        &= \EE\left[ \sum_{t \in \cT_g} \sigma_{t-1}^2(\*x_t) \right] \\
        &\overset{(b)}{\leq} C_1\EE\left[ \gamma_{|\cT_g|} \left(\cB\left(\rho; \*x^*\right)\right) \right] \\
        &\overset{(c)}{\leq} C_1\EE\left[ \tilde{\gamma}_{|\cT_g|} \left(\cB\left(\rho; 0\right)\right) \right] \\
        &\overset{(d)}{\leq} C_1 \tilde{\gamma}_{\EE\left[|\cT_g|\right]} \left(\cB\left(\rho; 0\right)\right) \\
        &= C_1 \tilde{\gamma}_{\EE\left[|\cT|\right]} \left(\cB\left(\rho; 0\right)\right)
    \end{align}
    where the equality and inequality hold because 
    (a) $\bigl( \sigma_{t-1}^2(\*x^*) \mathbbm{1}_{ f(\*x^*) - f(\*x_t) \geq \Delta \land \*x_t \in \cB\left(\rho; \*x^*\right) } \mid \cD_{t-1} \bigr)$ and $\bigl( \sigma_{t-1}^2(\*x_t) \mathbbm{1}_{ g_t(\*x_t) - g_t(\*x^*) \geq \Delta \land \|\*x_t - \*x^*\|_2 \leq \rho } \mid \cD_{t-1} \bigr)$ are identically distributed, 
    (b) $\sum_{t \in \cT} \sigma_{t-1}^2(\*x_t) \leq C_1 \gamma_{|\cT|} \left(\cB\left(\rho; \*x^*\right)\right)$ from Lemma~\ref{lem:anytime_stdsum_bound} and $\*x_t \in \cB\left(\rho; \*x^*\right)$, 
    (c) $\gamma_T  \left(\cB\left(\rho; \*x^*\right) \right) = \tilde{\gamma}_T \left(\cB\left(\rho; 0\right)\right)$ for any $T \in \NN$ due to the definition of $\tilde{\gamma}$ and the shift-invariant property of the SE and Mat\'ern kernels, 
    and (d) Jensen's inequality and the concavity of $\tilde{\gamma}_T$.
    By almost the same arguments shown above and Lemma~\ref{lem:ExpectedEllipticalPotentialLemma}, we further obtain
    \begin{align*}
        \EE\left[ \sum_{t \in \cT} \sigma_{t-1}(\*x^*) \right] &\leq \sqrt{C_1 \EE[|\cT|] \tilde{\gamma}_{\EE[|\cT|]} \left(\cB\left(\rho; 0\right)\right)},
    \end{align*}
    which concludes the proof.
\end{proof}

\end{document}